\documentclass[twoside,11pt]{article}

\usepackage[preprint]{jmlr2e}
\usepackage[T1]{fontenc}
\usepackage{comment}
\usepackage{booktabs}
\usepackage{xcolor}

\usepackage[dvipsnames]{xcolor}
\usepackage{enumitem}

\usepackage{thmtools}
\usepackage{thm-restate}
\usepackage{tabularx}
\usepackage{makecell}

\usepackage{amsmath,amsfonts,bm}

\def\eqref#1{equation~\ref{#1}}

\def\1{\bm{1}}

\def\vx{{\bm{x}}}

\DeclareMathAlphabet{\mathsfit}{\encodingdefault}{\sfdefault}{m}{sl}
\SetMathAlphabet{\mathsfit}{bold}{\encodingdefault}{\sfdefault}{bx}{n}

\def\gA{{\mathcal{A}}}

\def\gH{{\mathcal{H}}}

\def\gN{{\mathcal{N}}}

\def\gP{{\mathcal{P}}}

\def\gY{{\mathcal{Y}}}

\newcommand{\E}{\mathbb{E}}

\newcommand{\mc}[1]{\textcolor{olive}{{\bf MC}: #1}}

\renewcommand{\mc}[1]{}

\usepackage{xcolor}

\definecolor{BrightOlive}{RGB}{0,150,20}

\renewcommand{\E}{\mathbb{E}}

\newcommand{\mystrut}[1][0.9em]{%
  \vrule width 0pt height 0pt depth #1\relax
}

\usepackage{lastpage}
\jmlrheading{xx}{2026}{1-\pageref{LastPage}}{6/26; Revised 9/26}{12/26}{21-0000}{Alamri, Caprio, and Brown}

\ShortHeadings{Subjective Risk Decomposition:~ \\A New View for Uncertainty Quantification}{Alamri, Caprio, and Brown}
\firstpageno{1}

\begin{document}

\title{
Subjective Risk Decomposition: \\A New View for Uncertainty Quantification}

\author{\name Raghad Alamri \email raghad.alamri@postgrad.manchester.ac.uk \\
      \addr Department of Computer Science, The University of Manchester
      \AND
      \name Michele Caprio \email Michele.Caprio@warwick.ac.uk  \\
      \addr Department of Computer Science, University of Warwick
      \AND
      \name Gavin Brown \email gavin.brown@manchester.ac.uk \\
      \addr Department of Computer Science, The University of Manchester
      }

\editor{Our Editor}

\maketitle



\begin{abstract}

We present a novel viewpoint for uncertainty quantification. Uncertainty measures are not {\em primitives}, in need of axioms and argumentation, but instead  {\em consequences}, of higher-level modelling decisions.
We show how epistemic and aleatoric uncertainty measures can be {\em derived} 
via   decomposition of a {\em subjective risk}, based on a strictly proper loss.
Reverse cross-entropy provides a prominent
example, where decomposition recovers the classic information-theoretic uncertainty terms.
The same approach recovers numerous measures previously proposed across the UQ literature, providing them a common theoretical foundation.
This suggests a new approach to UQ: given a modelling scenario and strictly proper loss, the
corresponding epistemic and aleatoric terms are induced by the subjective-risk decomposition.
We then extend our view to learning theory: we introduce and analyse subjective risk analogues of {\em excess risk}, {\em approximation error} and {\em estimation error}, and identify the connections to UQ. 
We consider this a first step towards a full learning-theoretic framework for uncertainty quantification.
\end{abstract}

\begin{keywords}
  \em epistemic, aleatoric, subjective risk, bias--variance,
  mutual information.
\end{keywords}

\section{Introduction}
A popular framework for uncertainty quantification is the aleatoric/epistemic decomposition introduced by \citet{gal2016uncertainty}.
This adopts a Bayesian viewpoint, where $\bar{q}:=q(y\mid \vx,D)$ is the predictive distribution obtained by marginalising parameters $\theta$ under a posterior $q(\theta\mid D)$, given observed data $D$. The entropy of this predictive distribution admits a decomposition,

\[
H(\hat{Y}\mid \vx, D)
=
I(\hat{Y};\Theta\mid \vx, D)
+
\mathbb{E}_{\theta\sim q(\theta\mid D)}
\!\left[
H(\hat{Y}\mid \theta,\vx,D)
\right],
\]

whose terms are widely interpreted as epistemic (reducible) and aleatoric (irreducible) uncertainty, respectively. The decomposition itself follows directly from the chain rule of entropy, but it rapidly became one of the most influential frameworks in the field.

The success of this formulation inspired a large body of follow-up work. Researchers proposed alternative measures of epistemic and aleatoric uncertainty, motivated by different modelling assumptions and application domains.
As a result, the literature now contains many competing definitions of total, epistemic, and aleatoric uncertainty, often motivated from quite different perspectives.

One response to this proliferation has been 
to define and justify {\em axioms} for the UQ terms \citep{pmlr-v216-wimmer23a, bulte2025axiomatic}, sparking fascinating community discussion \citep{kirsch2024twitter}.
Others have questioned whether the A/E dichotomy {\em itself} is meaningful, arguing for alternative taxonomies of uncertainty \citep{valdenegro2022deeper,kirchhof2025reexamining},
balanced by other voices \citep{sale2026meaningful} arguing that the dichotomy is alive and well,
with issues attributed to limitations of current mathematical frameworks.
%

Recently, \citet{smith2024rethinking} proposed a decision-theoretic perspective on uncertainty based on {\em subjective risk} \citep{savage1971elicitation}. They equate uncertainty with the {\em minimum subjective risk} achieved by a Bayes-optimal action, and study its reduction through the acquisition of more data and Bayesian updating.
We broaden this perspective by considering subjective risk {\em away} from the minimum, and as a random variable influenced by uncertainties such as training data, procedural randomness, and hyperparameters.

We show that for the expected subjective risk, a reverse {\em bias--variance} decomposition naturally induces measures of uncertainty through the decomposition terms. 
The decomposition identifies a loss-dependent discrepancy between the model and the true distribution, quantifying the degree of model hallucination. While this external grounding term is generally an oracle quantity, the remaining epistemic/aleatoric decomposition terms are computable from the model itself, independent of labels.
To illustrate this proposal, we show how Gal's classic information theoretic A/E terms arise naturally by decomposing the {\em reverse cross-entropy}. Various previously published uncertainty measures can be derived with the same approach, understood as instantiations of the same underlying principle, rather than as competing alternatives.
We then extend our viewpoint to learning theory. We analyse subjective risk analogues of {\em excess risk}, {\em approximation error} and {\em estimation error}, showing how epistemic and aleatoric uncertainty terms relate 
to standard statistical learning-theoretic notions \citep{bach2024learning}. 
Thus, we do not suggest (another) definition of aleatoric and epistemic uncertainty; rather, we {\em explain} how the current measures can co-exist in a common view, and can be related to standard learning theoretic notions. 
{\em Uncertainty measures 
are not primitives} in need of axioms and argumentation---they are {\em consequences} of the modelling decisions, and can be
{\em derived} once these choices are fixed.
Our headline contributions are as follows.

\begin{itemize}[itemsep=0.8pt]
    \item  We show how measures of aleatoric and epistemic uncertainty emerge naturally as components of an expected subjective risk based on a  strictly proper loss;

    \item We recover numerous existing measures as special cases. In particular, the information-theoretic framework \citep{gal2016uncertainty} arises from the expected subjective log-loss. Under this formulation,
    epistemic uncertainty is the loss-induced Bregman variance, 
    and aleatoric uncertainty is the expected generalized entropy;
    
    \item We explain why different uncertainty measures behave differently. Disagreements between entropy-based, variance-based, and divergence-based measures are not contradictions, but instead consequences of loss geometries on the same model; 

    \item We define and analyse subjective risk analogues of common learning-theoretic notions: {excess} {\em subjective} risk, {\em subjective estimation error}, and the subjective {\em approximation gap}.  This highlights how epistemic uncertainty and subjective estimation error are intertwined, laying foundations for bridging UQ with classical learning theory.
    
\end{itemize}

\newpage


\setcounter{section}{1}
\section{Background: Proper Losses and Bregman Divergences}
\label{sec:background}

Define a convex class of probability distributions \(\mathcal{P}\) over an outcome space \(\mathcal{Y}\), all dominated by a common reference measure \(\nu\).
Let $\gA$ be a convex action space, and $\Gamma: \gP \rightarrow \gA$ be an elicitable target functional, mapping a distribution to the quantity being modelled (e.g., full distribution, mean). 
We define a {\em loss} as a function $\ell: \gA\times \gY\rightarrow \mathbb{R}\cup\{+\infty\}$,
where the first argument is %
a reported action $a\in\gA$,
and the second a realised outcome at which we evaluate the loss.
The (pointwise) {\em risk} of a reported model $q\in\gP$ with action $a_q = \Gamma(q)$ is its loss in expectation over the true $p\in\gP$, i.e.,
\newcommand{\objectiverisk}[1]{R_p(#1)}
\begin{equation}
\objectiverisk{a_q} = \E_{Y\sim p}\Big[ \ell(a_q,Y)\Big] := \int_{\gY} \ell(a_q,y)\,p(y)\, d\nu(y).
\label{eq:properrisk}
\end{equation}
If $\gY$ is at most countable, the expectation is taken as a weighted sum of the loss with respect to the probability mass function. In the remainder of the paper, for ease of notation, we do not distinguish between probability measures, and their density or mass functions.
For example, if $p,q$ are categorical, $\nu$ is the counting measure, and $\ell(a_q,Y)=-\ln q(Y)$, this is the cross-entropy, $\objectiverisk{a_q} = -\sum_{y\in\gY}p(y)\ln q(y)$.
\noindent The loss $\ell$ is {\em proper} for $\Gamma$ when for all $ p,q\in\gP$,
\begin{equation}
    \objectiverisk{a_p} \leq \objectiverisk{a_q} \quad 
\end{equation}
i.e. the expected loss is minimised by reporting the action elicited by the true $p$.
The loss is {\em strictly} proper if $a_p$ is the unique minimizer over $\gA$.

The minimum achievable expected loss is called the Bayes risk (or generalized entropy), defined by $H_\ell(p) := \inf_{a\in\gA} \objectiverisk{a} = \int_\gY \ell(a_p,y)\,p(y)d\nu(y)$.
Following \citet{savage1971elicitation}, strictly proper losses admit a representation in terms of a strictly convex generator function $\phi$ that is differentiable on $ri(\gA)$. This generator equals the negative generalized entropy up to a term affine in $p$. Using this {\em Savage} representation, the {\em regret} of a proper loss can be written as a Bregman divergence \citep{bregman1967relaxation} on the action space:
\begin{eqnarray}\label{eq:properregretisbregman}
        \objectiverisk{a_q} - \objectiverisk{a_p} &=& \phi(a_p)-\phi(a_q)- \langle \nabla\phi(a_q), \, a_p-a_q \rangle ~=~ B_\phi(a_p,a_q),
\end{eqnarray}

A simple rearrangement of \eqref{eq:properregretisbregman} yields,
\begin{equation}\label{eq:divergenceplusentropy}
  \objectiverisk{a_q} = B_\phi(a_p,a_q) + H_\ell(p).
\end{equation}
That is, a proper risk is a divergence plus a generalized entropy term. The latter is often identified as `aleatoric' (irreducible) risk, and the former as `epistemic' (reducible) risk, an understanding adopted by many authors, e.g. \citet{hofman2024uncertainty,proper_regression}.
When $\ell$ depends on $q$ only through distributional parameters, the
Bregman geometry is induced
on the lower-dimensional parameter space rather than the full support of the densities/distributions. For example, $\ell(a_q,Y)=(\mu_q-Y)^2$ yields $\phi(\mu)=\mu^2$ and $B_{\phi}(\mu_p,\mu_q)$, in which case the proper loss elicits the mean, but not the full distribution.

Bregman divergences have two important properties that we rely on.
Firstly,
$B_\phi(a_p,a_q)$ is, in general, {\em asymmetric}---the squared/Mahalanobis loss is the {\em only} symmetric Bregman divergence \citep{Nielsen2009}. 
Secondly, they admit a {\em bias--variance} decomposition \citep{pfau2025}.
In fact, subject to mild conditions, they are the {\em only} class of losses to have this property \citep{heskes2026}.  In the next section we build upon these properties to introduce our proposal for uncertainty quantification via subjective risk decomposition.%

\newpage

\section{Our proposal}
\label{sec:proposal}

Our proposition is that  aleatoric/epistemic uncertainty measures are not {\em primitives} that need to be {defined} through axioms and argumentation.  Instead, they are {\em consequences}. They are decomposition terms, that can be {\em derived} from an expected risk.
Our proposal, based on the notion of {\em subjective} risk, is a methodology that
explains the classic information-theoretic terms of \citet{gal2016uncertainty}, as well as numerous other well-known uncertainty measures. 

\subsection{Subjective vs Objective Risks}

Our starting point is the decision-theoretic notion of {\em subjective risk}, introduced by \citet{ramsey1926truth} and formalised by \citet{savage1971elicitation}. Rather than evaluating predictions by the true data-generating distribution, subjective risk evaluates them under the agent's own beliefs.
Let $\gP$ be a convex class of distributions, and $\gH\subseteq \gP$ be the agent's {\em hypothesis class}, not necessarily convex.
Given a model $q\in\gH$,
the {\em subjective risk}
is defined as follows.

\begin{definition}[Subjective Risk]\label{def:subjectiverisk}
Assume a proper loss $\ell:\gA\times\gY\rightarrow \mathbb{R}$ relative to a functional $\Gamma$.
Given a model  $q\in\gH\subseteq \gP$, and another arbitrary
distribution $z\in\gP$ with elicited target $a_z = \Gamma(z)$, the {\em subjective} risk for $a_z$ is defined,
\begin{equation} \label{eq:subjectiverisk}
    R_q(a_z)
    =
    \mathbb{E}_{\hat{Y} \sim q}
    \left[
        \ell(a_z,\hat{Y})
    \right] := \int_\gY \ell(a_z,y)\,q(y)\, d\nu(y).
\end{equation} 
If $\ell$ is strictly proper, the unique minimizer in $\gA$ is $a_z=a_q$, this gives the generalized entropy $H_\ell(q) := \int_\gY \ell(a_q,y)q(y)\,d\nu(y)$, i.e. the agent is incentivised to truthfully report the action induced by its belief $q$.
\end{definition}
In the original literature of \citet{savage1971elicitation}, this concept is known as {\em subjective utility}, but we adopt a loss minimization viewpoint for alignment with the modern ML literature.
The {\em subjective} risk can be contrasted with the conventional {\em objective} risk, $R_p(a_z):=\E_{Y \sim p}\left[ \ell(a_z,Y) \right]$.
We use the terminology `objective risk' to clearly disambiguate, but it is the conventional `risk' used to evaluate ML models.
%
%
Analogously to \autoref{eq:divergenceplusentropy}, the subjective (proper) risk can be written as a Bregman divergence plus a generalized entropy term,
\begin{equation}
    R_q(a_z) = B_\phi(a_q,a_z) + H_\ell(q).
    \label{eq:subjective_Bregman}
\end{equation}
For the remainder of the paper we assume $\ell$ is a strictly proper loss, hence this property holds.
For log loss on categorical distributions, the subjective risk for the true $p$ is,
\begin{eqnarray}
    R_q(a_p) &=& -\sum_{y\in\gY}q(y)\ln p(y) ~~ = ~~ K(a_q\mid\mid a_p) + H(q).
\end{eqnarray}

In this case, the subjective risk $R_q(a_p)$ is the `reverse' cross-entropy, and the right hand side yields the `reverse' KL divergence, plus the Shannon entropy $H(q)$. The forward and reverse KL have notably different behaviours---the forward KL is said to exhibit a {\em mean-seeking} behaviour, where the model covers the true distribution well on average, whereas reverse KL is {\em mode-seeking}, incentivizing the model to put all probability mass on the mode set of $p$. It is exactly this difference that we will exploit in the coming sections.

\subsection{Why Subjective Risk is a natural foundation for UQ}
\label{subsec:whysubjectiverisk}

An important question for UQ is: {``when can I trust this model’s prediction?''}.  \citet{smith2024rethinking} argue:

\begin{quote}
{\em \small ``Uncertainty alone is not a reliable indicator of whether we can trust a model. Some kind of external grounding is crucial for well-informed practical deployment.''}\\  \small\hfill\citep[p.7]{smith2024rethinking}
\end{quote}
We argue that the {subjective risk}, augmented with external grounding, is well-aligned with answering this, and provides a natural foundation to understand uncertainty.\\

Subjective risk is not merely an interesting evaluation functional, but has an axiomatic foundation in statistical decision theory.
Savage showed that to minimize subjective risk is to behave according to a set of 7 postulates --- these encode desirable behaviours, such as avoiding contradictions in the rank-ordering for a set of acts \citep{savagebook}. %
As such, the subjective risk quantifies whether a model's internal beliefs can be trusted as a basis for rational action, where ``rational'' is made precise by Savage’s postulates. %
However rational our models may be, as \citet{smith2024rethinking} state, it is crucial to be grounded in reality if we want trustworthy models.  This is where we shift from Savage's classical framework. %
We do not use subjective risk as a {\em decision-making} framework. 
Instead, we use it as an {\em evaluation functional} for the model against reality, $\E_{\hat{Y}\sim q}[\ell(a_p,\hat{Y})]$. This fills the gap between internal rationality and external grounding.\\

We can also make a more pragmatic argument, from the viewpoint of an ML practitioner. 
The motivation is clearest in the case of reverse cross-entropy and reverse KL-divergence, but the idea is more general.
$K(q\mid\mid p)$ strongly penalizes mass assigned by $q$ where $p$ is small, becoming infinite when $q$ assigns positive mass where $p$ assigns zero mass. 
Thus, it penalises confident belief in outcomes that are poorly supported by reality---which can be interpreted as
measuring a form of predictive {\em hallucination}.  
In the LLM literature, there is growing empirical evidence that hallucinations and reverse KL are strongly linked \citep{agarwal2024onpolicy,gu2024minillm,cao2025on}.
Furthermore, reverse KL regularization is the dominant technique in RLHF \citep{ziegler2020finetuninglanguagemodelshuman, xiong2023iterative}, forcing the learned policy to stay close to a reference policy. This again can be interpreted as avoiding low-probability tokens that may lead to hallucinated actions.
\citet{malinin-rkl-2019} show that 
using reverse KL as a training criterion for their {\em Prior Networks}, results in better estimates of uncertainty and detection of adversarial attacks---that is, inputs that are implausible under reality.
%
We will show how these {\em hallucination-sensitive} behaviours extend beyond log-loss, and are interesting for UQ.\\

The objective risk $\E_{{Y}\sim  p}[\ell(a_q,{Y})]$ answers the question {\bf ``how correct is this model?''}.
The subjective risk $\E_{\hat{Y}\sim q}[\ell(a_p,\hat{Y})]$ answers the question {\bf ``how realistic is this model?''}, assigning high\footnote{Depending on loss geometry, as we will see later.} penalities when 
the model predicts outcomes that \(p\) regards as implausible.

\newpage
\subsection{Subjective Risk Decompositions as a framework for UQ}

A primary goal of UQ is to quantify aspects of predictive uncertainty that are relevant to whether a model can be trusted in a given decision problem. We argue that decompositions of the subjective risk provide a principled way to derive such quantities.
To support our claim we present the following result. 
\begin{theorem}[Bias-Variance Decomposition of the Subjective Risk]\label{thm:subjectiveBV}
Let $\gP$ be a convex class of distributions over $\gY$, where $p\in\gP$ is the true distribution and $q_\theta\in\gH\subseteq\gP$ is a model defined by parameters~$\theta$.
Consider a strictly proper loss $\ell$ defined on a convex prediction space $\gA$ for the target mapping $\Gamma : \gP \rightarrow \gA$, and let $a_p = \Gamma(p)$ and $a_\theta = \Gamma(q_\theta)$. Assume a strictly convex $\phi: \gA \rightarrow \mathbb{R}$, differentiable on the interior of $\gA$. Define the generalized entropy as $H_\ell(q_\theta):=\E_{\hat Y\sim q_\theta}[\ell(a_\theta,\hat Y)].$
Then, the expected subjective risk decomposes as,
\begin{eqnarray}
        \underbrace{\mystrut[1em] \E_\theta\Big[ \E_{\hat{Y}\sim q_\theta}\left[ \ell(a_p,\hat{Y}) \right] \Big]}_{\textnormal{expected subjective risk}}
        &=&
        \underbrace{\mystrut[1em] B_{\phi}(\bar a\,,\,a_p)}_{\textnormal{bias}}
        ~+~
        \underbrace{\mystrut[1em] \textcolor{BrightOlive}{\E_\theta\Big[ B_{\phi}(a_\theta\,,\,\bar a) \Big]}}_{\textnormal{variance}}
        ~+
        \underbrace{\mystrut[1em] \textcolor{red}{\E_\theta\Big[ H_\ell(q_\theta) \Big]} }_{\textnormal{generalized entropy}},
\end{eqnarray}
where $\bar a := \arg\min_{c\in\gA} \E_\theta [B_\phi (a_\theta,c)]$ is the right Bregman centroid.
\end{theorem}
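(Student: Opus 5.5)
We start from the pointwise Savage representation of the subjective risk in \autoref{eq:subjective_Bregman}. For each fixed $\theta$ it gives
\[
R_{q_\theta}(a_p)=\E_{\hat Y\sim q_\theta}[\ell(a_p,\hat Y)] = B_\phi(a_\theta,a_p)+H_\ell(q_\theta).
\]
Taking $\E_\theta$ of both sides, with Fubini/linearity justified by assuming the relevant integrals are finite, the generalized-entropy term is already the red term in the statement. It therefore remains to show the Bregman bias--variance identity
\[
\E_\theta[B_\phi(a_\theta,a_p)] = B_\phi(\bar a,a_p)+\E_\theta[B_\phi(a_\theta,\bar a)].
\]
This is the decomposition of \citet{pfau2025}, which we could simply invoke. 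For completeness we would give the short direct argument below.

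First we identify the right Bregman centroid. Expanding the definition,
\[
\E_\theta[B_\phi(a_\theta,c)] = \E_\theta[\phi(a_\theta)] - \phi(c) - \langle\nabla\phi(c),\E_\theta[a_\theta]-c\rangle .
\]
By the three-point computation, this equals
\[
\E_\theta[B_\phi(a_\theta,m)] + B_\phi(m,c), \qquad m:=\E_\theta[a_\theta].
\]
Here $m\in\gA$ because $\gA$ is convex. Since $\phi$ is strictly convex, $B_\phi(m,c)\ge 0$ with equality iff $c=m$. Hence the $\arg\min$ exists, is unique, and $\bar a=\E_\theta[a_\theta]$. This is the standard fact that the right-sided Bregman centroid is the arithmetic mean.

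Second, we apply the same expansion with $c=a_p$:
\[
\E_\theta[B_\phi(a_\theta,a_p)] - \E_\theta[B_\phi(a_\theta,\bar a)] = \phi(\bar a)-\phi(a_p)-\langle\nabla\phi(a_p),\bar a-a_p\rangle + \langle \nabla\phi(\bar a),\E_\theta[a_\theta]-\bar a\rangle .
\]
The last inner product vanishes because $\E_\theta[a_\theta]=\bar a$. What remains is exactly $B_\phi(\bar a,a_p)$. In the first expansion the terms $\E_\theta[\phi(a_\theta)]$ cancel. Everything else is linear in $a_\theta$, so expectations pass inside the inner products.

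The main obstacle is regularity rather than algebra. We need $\E_\theta[a_\theta]$ to exist as a Bochner/Pettis expectation in $\gA$, and $\E_\theta[\phi(a_\theta)]$ and $\E_\theta[H_\ell(q_\theta)]$ to be finite so that no $\infty-\infty$ arises. We also need $\nabla\phi$ to exist at $\bar a$ and $a_p$, which requires them to lie in the interior of $\gA$ where $\phi$ is differentiable.

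We would handle this by adding the standing assumption that these quantities are finite and that $a_p$ and $\bar a$ lie in $ri(\gA)$. Since the relative interior is convex, $\bar a\in ri(\gA)$ follows if each $a_\theta\in ri(\gA)$. For log loss this means strictly positive categorical distributions. Without these conditions the identity should be read in the extended reals, valid whenever the left-hand side is finite.
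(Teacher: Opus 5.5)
Your proposal is correct and follows essentially the same route as the paper's proof. Both start from the pointwise Bregman representation $R_{q_\theta}(a_p)=B_\phi(a_\theta,a_p)+H_\ell(q_\theta)$, take $\E_\theta$, and split $\E_\theta[B_\phi(a_\theta,a_p)]$ with the three-point identity, whose cross term vanishes because $\bar a=\E_\theta[a_\theta]$. Your direct expansion also proves that the right centroid is the arithmetic mean, which the paper only asserts, and it states the interiority and finiteness conditions that the paper leaves implicit.
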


The significance of this result will become clear in the coming pages. In summary, we will show that for various modelling assumptions, the \textcolor{BrightOlive}{variance} and \textcolor{red}{generalized entropy} terms are {\em exactly} the  \textcolor{BrightOlive}{epistemic} and \textcolor{red}{aleatoric} uncertainty terms defined in much previous literature. The bias term captures the degree of external grounding, as discussed in \autoref{subsec:whysubjectiverisk}. Before proceeding, we briefly review a few technicalities of the decomposition itself.

\autoref{thm:subjectiveBV} itself is a reverse-argument counterpart to the standard bias--variance decomposition \citep{pfau2013}.
For $\ell$ as a squared loss, this makes no difference due to symmetry, but for any other Bregman divergence, the forward/reverse decompositions are around different geometric {\em centroids} \citep{Nielsen2009}. 
The `forward' case is around the \emph{left} Bregman centroid,
$\mathring{a} := \arg\min_{c\in\gA} \E_\theta[B_\phi(c, a_\theta)] = [\nabla\phi]^{-1}( \E_\theta \nabla\phi(a_\theta) )$.
In the reverse-argument case as above, it is the \emph{right} centroid,  $\bar a := \arg\min_{c\in\gA} \E_\theta [B_\phi (a_\theta,c)]$.
\citet{Nielsen2009} showed that both centroids are unique, and have closed-form solutions as quasi-arithmetic means.
The forward/reverse decompositions also differ in their entropy terms.  The forward case has $H_\ell(p)$, the `noise' in the true distribution, whereas
the reverse case has $\E_\theta[ H_\ell(q_\theta) ]$, the average (generalized) entropy of the models. 
For a \emph{functional} Bregman divergence \citep{frigyik2008functional} the right centroid is a mixture density, i.e. $\bar a =\bar{q}(y) := \int q_\theta(y) q(\theta| D)\,d\theta$.
For a finite-dimensional Bregman divergence, it is the expectation in the relevant (convex) domain.
For notational convenience, we replace the generic prediction $a$ in the remainder of the paper to reflect the chosen mapping $\Gamma$. If the loss targets the full distribution (i.e., $\Gamma(q) = q$), we write $p$, $q_{\theta}$, and $\bar{q}$. For parameter estimation, such as the mean ($\Gamma(q) = \mathbb{E}_{\hat Y \sim q}[\hat Y]$), we adopt the finite-dimensional expectation parameters $\mu_p$, $\mu_{\theta}$, and $\bar{\mu} = \E_\theta[\mu_\theta]$.\\

To illustrate \autoref{thm:subjectiveBV}, we will now show examples of the bias/variance/entropy terms, for common distributional assumptions and losses. 
We emphasize that we can recover
effectively {\em all}
previous proposals as special cases, from this single viewpoint.
The methodology in all {\bf Examples} is {\em identical}:  we define $\gH$, choose a strictly proper loss $\ell$, and apply \autoref{thm:subjectiveBV}.
Full derivations are presented in \autoref{app:sec3proofs}.

\newpage

\begin{example}\label{example:gal}
\!\!\!\!\!
{\bf . Information-theoretic measures \citep{gal2016uncertainty}.}
Define $\gP=\gH$ as the convex class of categorical distributions over $\mathcal{Y}=\{1,\ldots,K\}$.
Let $p\in\gP$ be the true distribution, and $q_\theta$ a model with unknown parameters $\theta \sim q(\theta \mid D)$.
We obtain label predictions $\hat{Y}$ by sampling, as $\hat{Y}\mid \theta\sim q_\theta$.
We define $\ell(p, \hat{Y})=-\ln p(\hat{Y})$,
which induces $-H_\ell(p) = \phi(p)=\sum_{k=1}^K p(y_k)\ln p(y_k)$.
For this $\phi$ the Bregman is a KL-divergence, and \autoref{thm:subjectiveBV} becomes,
\begin{eqnarray}
    \underbrace{-\E_\theta\Bigg[ \sum_{y\in\gY} q_\theta(y) \ln p(y) \Bigg]}_{\textnormal{expected subjective risk}}
    &=&
    \underbrace{\mystrut[1em]
    K(\bar{q}\mid\mid p)}_{\textnormal{bias}}
    ~~+~~
    \underbrace{\mystrut[1em]
    \textcolor{BrightOlive}{I(\hat Y;\Theta)}}_{\textnormal{variance}}
    ~~+~
    \underbrace{\mystrut[1em]
    \textcolor{red}{H(\hat{Y}\mid\Theta)}
    }_{\textnormal{generalized entropy}}
\end{eqnarray}
where $\bar{q}$ is the categorical mixture. The variance is exactly the mutual information $I(\hat Y;\Theta)$, and the generalized entropy term is the Shannon conditional entropy $H(\hat{Y}\mid \Theta)$. 
\end{example}

Decomposing the expected reverse cross-entropy yields {\em exactly} Gal's 
epistemic/aleatoric
terms, plus a systematic bias.
The epistemic uncertainty is the variance of the expected subjective risk, and the aleatoric uncertainty is the corresponding generalized entropy. 
In other words: {\em \bf expected subjective risk $=$ bias + \textcolor{BrightOlive}{epistemic} + \textcolor{red}{aleatoric}}.

This single example makes concrete our `new view' of uncertainty quantification. The uncertainty terms here are not primitive objects that need to be defined and debated. Instead, they are components of the expected subjective risk, when appropriately decomposed. 
The information-theoretic approach generalizes beyond just categorical distributions. Next, we show the case for Gaussians with unknown mean/variance.

\begin{example}\label{example:gaussians}
\!\!\!\!\!
{\bf . Gaussians with unknown $\mu,\sigma^2$ \citep{depeweg}.}
Let $\gH\subset\gP$ be the non-convex class of Gaussians with \emph{unknown} mean and variance, and denote the true $p=\gN(\mu_p,\sigma_p^2)$, and $q_\theta=\gN(\mu_\theta,\sigma_\theta^2)$.
Define $\ell(p,\hat Y)=-\ln p(\hat Y)$.
This induces $\phi(q)=\int_{\gY} q(y)\ln q(y)\,dy$, and
a reverse KL-divergence between densities. Then, \autoref{thm:subjectiveBV} becomes,
\begin{eqnarray}
     \underbrace{\mystrut[1.2em] \E_\theta \Bigg[- \int_{\gY} q_\theta(y) \ln p(y)\, dy \Bigg]}_{\textnormal{expected subjective risk}}
    &=&
    \underbrace{\mystrut[1em] K(\bar q \mid\mid p)}_{\textnormal{bias}}
    ~~+~~
    \underbrace{\mystrut[1em] \textcolor{BrightOlive}{ I(\hat Y;\Theta)}}_{\textnormal{variance}}
    ~~+~~
    \underbrace{\mystrut[1em] \textcolor{red} {\E_\theta\Big[ \tfrac12\ln\left(2\pi e\,\sigma_\theta^2\right) \Big]}}_{\textnormal{generalized entropy }} 
\end{eqnarray}
This recovers the Gaussian information-theoretic terms, which extend to regression through the differential entropy~\citep{depeweg,malinin_2019}.
Here, the right centroid $\bar{q}(y)=\int_\theta q_\theta(y) q(\theta|D)\,d\theta$ is a Gaussian mixture,
which is in $\gP$ but not $\gH$.
The variance (epistemic) term is the mutual information, $\int_\theta\int_\gY q_\theta(y)\ln q_\theta(y)/\bar{q}(y)\, dyd\theta$;
and the aleatoric uncertainty is the expected differential entropy of the Gaussian.  

\end{example}

Recall from Section~\ref{sec:background} that the loss $\ell(a_q,y)=-\ln q(y)$ and the
generalized entropy $H_\ell$ are defined relative to the reference measure $\nu$. A change of reference measure shifts the log-loss and differential entropy by the same additive term, so the KL/Bregman terms stay invariant. In Example~\ref{example:gal} the categorical case, all terms are invariant, because $\nu$ is the counting measure and is fixed. However, in the continuous case of Example \ref{example:gaussians}, we have a differential entropy, whose value depends on $\nu$, and can be negative.

It is reasonable to say the information-theoretic approach dominates the literature---it is therefore important that our methodology recovers this in the first instance.   
Below we present a non-information theoretic example.

%


\begin{example}\label{example:labelwise}
\!\!\!\!\!
{\bf . Label-wise Uncertainty \citep{salelabel}.}
Let $\gP=\gH$ be the convex class of Bernoullis, and denote  $p=\mathrm{Ber}(\mu_p)$, $q_\theta = \mathrm{Ber}(\mu_\theta)$, with uncertainty over $\theta$.
Instead of using the log-loss, here we {\em impose} a different geometry by choosing a different proper loss.
Define $\ell(\mu_p,\hat{Y}) = (\mu_p-\hat{Y})^2$, inducing $\phi(\mu)=\mu^2$,
and \autoref{thm:subjectiveBV} becomes,
\begin{equation}
    \underbrace{\mystrut[1em]
    \E_\theta \Big[ \E_{\hat Y \sim q_\theta}\big[ (\mu_p-\hat Y)^2\big]\Big]}_{\textnormal{expected subjective risk}}
    =
    \underbrace{\mystrut[1em] \Big( \E_\theta[\mu_{\theta} ] - \mu_p \Big)^2}_{\textnormal{bias}}
    ~+~
    \underbrace{\mystrut[1em] \textcolor{BrightOlive}{ \E_\theta\Big[ ( \mu_{\theta} - \E_\theta[\mu_{\theta} \big] )^2\Big]}}_{\textnormal{variance}}
    ~+
    \underbrace{\mystrut[1em]
    \textcolor{red}{
    \E_\theta  \Big[\mu_\theta \cdot (1-\mu_\theta)\Big].}
    }_{\textnormal{generalized entropy}}
\end{equation}
These are exactly the ``label-wise'' uncertainty terms proposed in \citet[Sec 3.2]{salelabel}.
\end{example}

Many more examples can be found in \autoref{app:further_examples}, including measures that are `variance-based' versus `entropy-based' \citep{sale2023uncertainty}, 
and cases with multiple uncertainties. These illustrate the generality of our approach.
 
%



\subsection{The Aleatoric-Epistemic Divide is Model-Relative: The Case of Ensembles}

Ensembles have become one of the dominant paradigms for UQ in modern machine learning \citep{Balaji2017}.
Central to the discussion is the notion of `diversity' in member predictions, 
often equated with uncertainty.  %
We explore and clarify this with one final modelling example.
Our aim is to show concretely how the term `epistemic' is {\em relative} to your modelling assumptions---a sentiment recently highlighted by \citet{sale2026meaningful}, from where we borrow the title of this subsection.
\newcommand{\qensDexpand}{\frac{1}{m}\sum_{i=1}^m q_{\theta(i)}}
\newcommand{\qensD}{q^{\scriptscriptstyle D}_{ens}}
Note that throughout, we work in the full-distribution setting, $\Gamma(q)=q$, so that the
right Bregman centroid of the members is their arithmetic mixture.
We consider two scenarios.
\begin{enumerate}
    \item {\bf Uncertain model deployment:} You intend to deploy a {\em single}
    model, and form an ensemble to quantify uncertainty across a set of candidates $\{q_{\theta(i)}\}_{i=1}^m$, each of which is considered plausible. In this case the appropriate evaluation functional is the {\em expected} subjective risk, i.e. that associated with drawing a member at random,
    \begin{equation}
    \mathbb{E}_{\theta}\big[R_{q_{\theta}}(p)\big] = \mathbb{E}_{\theta}\left[ \E_{\hat{Y}\sim q_\theta} \left[\ell(p,\hat{Y})\right]\right].
    \end{equation}
    \item {\bf Ensemble deployment:} You intend to use the models as a {\em combined} ensemble,  $q_{ens}^{\scriptscriptstyle D}=\E_\theta[q_\theta]=\frac{1}{m}\sum_{i=1}^m q_{\theta(i)}$. The appropriate evaluation functional is the subjective risk of the ensemble $\qensD$ as treated a single predictive unit,
    \begin{equation}
    R_{\qensD}(p) = \E_{\hat{Y}\sim \qensD} \left[\ell(p,\hat{Y})\right].
    \end{equation}
\end{enumerate} 
The astute reader will have noticed these are numerically/algebraically identical, $\mathbb{E}_{\theta}\left[R_{q_{\theta}}(p)\right]
=R_{\qensD}(p)$.   However, each scenario tells a different story about uncertainty, which becomes apparent when the subjective risk is decomposed. \\

Scenario 1 preserves the indexed family of predictive distributions, and  captures uncertainty over {\em which} one we will deploy. We assume a finite ensemble, so $\E_\theta[q_\theta] = \frac{1}{m}\sum_{i=1}^m q_{\theta(i)}$.  Applying \autoref{thm:subjectiveBV}, we get the following decomposition:
\begin{equation}
    \underbrace{\mystrut[1.8em]
    \frac{1}{m}\sum_{i=1}^m \Bigg[  \E_{\hat Y \sim q_{\theta(i)}}\left[ \ell(p,\hat{Y}) \right]\Bigg] }_{\textnormal{expected subjective risk}}
    =
    \underbrace{\mystrut[1.8em] B_\phi(\qensD, p) }_{\textnormal{bias}}
    +
    \underbrace{\mystrut[1.8em]
    \textcolor{BrightOlive}{\mystrut[1em]
    \frac{1}{m}\sum_{i=1}^m
    B_\phi( q_{\theta(i)}, \qensD )
    }}_{\textnormal{variance}}
    +
    \underbrace{\mystrut[1.8em]
    \textcolor{red}{
    \frac{1}{m}\sum_{i=1}^m H_\ell(q_{\theta(i)}) }}_{\substack{\textnormal{generalized}\\ \textnormal{entropy}}} 
\end{equation}
Our uncertainty over model choice manifests as the \textcolor{BrightOlive}{variance}, a model disagreement term.  For log-loss this is exactly the mutual information $I(\hat{Y};\Theta)$, conventionally interpreted as epistemic. In this case, {\em disagreement between models is a bad thing}, and contributes to {\em increase} our expected risk, since we will ultimately be picking a single model from the set.

Scenario 2, instead treats the ensemble $\qensD=\qensDexpand$ as the operational predictive model---we commit to using all sub-models.
We then have just the subjective risk of $\qensD$, with no other random variables, and the following divergence/entropy decomposition:
\begin{equation}\label{eq:ensemble_subj_decomp}
    \underbrace{\mystrut[1em]
    \E_{\hat Y \sim \qensD}\left[ \ell(p,\hat{Y}) \right] }_{\textnormal{subjective risk}}
    ~=~
    \underbrace{\mystrut[1em] B_\phi(\qensD, p) }_{\textnormal{divergence}}
    ~+
    \underbrace{\mystrut[1em]
    \textcolor{red}{H_\ell(\qensD) }}_{\textnormal{generalized entropy}}
\end{equation}
The generalized entropy quantifies our loss from label sampling $\hat{Y}\sim \qensD$, and thus is conventionally regarded as aleatoric.
The model disagreement term from Scenario 1 seems to have vanished, as we have no uncertainty over model choice.
One might ask where this term has gone, since as mentioned before, the left hand sides are numerically identical.  The answer comes from the classical decomposition of total uncertainty,
\begin{equation}
    H_\ell(\qensD) =  \frac{1}{m}\sum_{i=1}^m B_\phi( q_{\theta(i)}, {\qensD} )
    + \frac{1}{m}\sum_{i=1}^m H_\ell(q_{\theta(i)}).
\end{equation}
Thus, when the ensemble is treated as a monolithic predictive model, the  model disagreement term gets absorbed into its generalized entropy.

\begin{figure}[h]
    \centering
\vspace{10pt} 
\includegraphics[width=\linewidth]{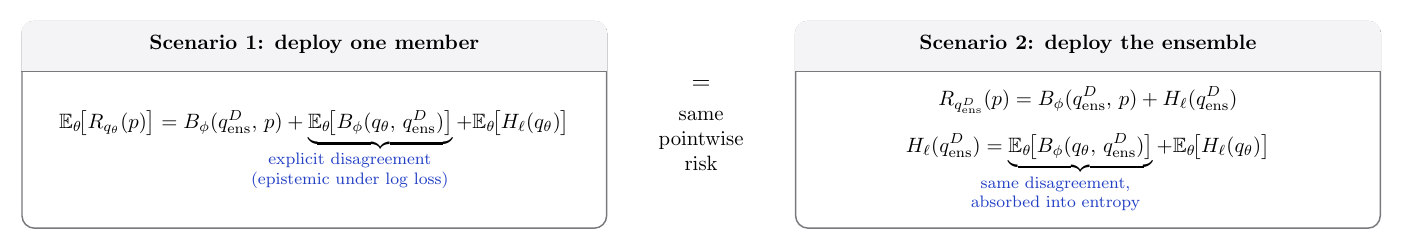}
\end{figure}

However, this is not the full story. To understand this further, we look `behind the scenes' of the ensemble combination and consider the  member `diversity'.\\

\newpage
\subsection*{The Link to Ensemble Diversity}

The ensemble $\qensD$ as whole depends on its training data, a realisation of a random variable~$D$.
We do not specify how the individual models use this data, but the results below are compatible with typical ensemble strategies such as Bagging.
Given the random variable $D$, the expected subjective risk can be decomposed with \autoref{thm:subjectiveBV},
\begin{eqnarray}
    \underbrace{\mystrut[1em]
    \E_D \left[ \E_{\hat{Y}\sim \qensD}\left[  \ell\left(p\,,\,\hat{Y} \right) \right] \right]}_{\textnormal{expected subjective risk of ensemble}}
    &=& \underbrace{\mystrut[1em]
    B_\phi(\bar{q}\,,\,p)}_{\textnormal{bias}}
    +
    \underbrace{\mystrut[1em]
    \textcolor{BrightOlive}{\E_D\Big[ B_\phi\left(\qensD,\bar{q} \right) \Big]}
    }_{\textnormal{variance}}
    +
    \underbrace{\mystrut[1em]
    \textcolor{red}{\E_D\left[ H_\ell(\qensD) \right]}}_{\textnormal{generalized entropy}}
\end{eqnarray}
    
\noindent where $\bar{q} := \E_D[\qensD]$.
This again treats the ensemble as a {\em single monolithic unit}.
Thus, we recover the variance of $\qensD$ with respect to $D$, also written $\textcolor{BrightOlive}{I(\hat{Y};D)}$.
%
%
%
We would clearly like to reduce this epistemic uncertainty, intuitively making the (ensemble) predictor less sensitive to its training data.
However, we can also look `behind the scenes' of the ensemble combination and consider the diversity of the individual members.
\citet{wood2023unified} decomposed the expected {\em objective} risk of an ensemble, showing a 3-way bias--variance--{\em diversity} decomposition. The following is the corresponding result for {\em subjective} risk---we use log-loss to make the connection to mutual information explicit.

\begin{theorem}[Bias-Variance-Uncertainty for Ensembles]\label{thm:Bias-Variance-Uncertainty for Ensembles}
Define $\qensD = \qensDexpand$, where each $q_\theta$ outputs a categorical distribution. Define $\ell(p,\hat Y)=-\ln p(\hat Y)$. Given a random variable $D$ over training sets, the expected subjective risk of the ensemble is,
\begin{eqnarray}
    \underbrace{\mystrut[1.5em]
    \E_{\scriptscriptstyle D}\!\left[\E_{\hat{Y}\sim \qensD}\left[  \ell\left(p,\hat{Y} \right) \right]\right]}_{\substack{\textnormal{expected subjective risk}\\\textnormal{of ensemble}}} 
    =
    \underbrace{\mystrut[1.5em]
    \frac{1}{m}\sum_{\theta=1}^m \Big[ B_\phi\left( \E_{\scriptscriptstyle D}[q_\theta], p \right) \Big]}_{
    \textnormal{average bias}
    }
    +
    \underbrace{\mystrut[1.5em]
    \textcolor{BrightOlive}{I(\hat{Y};D\mid  \Theta)}
    }_{\textnormal{average variance}}
    -
    \underbrace{\mystrut[1.5em]
    \textcolor{BrightOlive}{I(\hat{Y};\Theta\mid D)}}_{
    \textnormal{diversity}
    }
    +
    \underbrace{\mystrut[1.5em]
    \textcolor{red}{H( \hat{Y} | D )}}_{
    \substack{\textnormal{conditional}\\\textnormal{entropy}}
    } \notag
\end{eqnarray}
where,
\begin{equation}
            \textcolor{BrightOlive}{I(\hat{Y};D\mid \Theta)}
            =
            \frac{1}{m}\sum_{\theta=1}^m \E_D \big[ K(q_{\theta} \mid\mid \E_D [q^{\scriptscriptstyle D}_\theta]) \big],
            \quad
            \textcolor{BrightOlive}{I(\hat{Y};\Theta\mid D)}
            =
            \E_D \left[ \frac{1}{m}\sum_{\theta=1}^m K(q_{\theta} \mid\mid \qensD) \right].\notag
\end{equation}
\end{theorem}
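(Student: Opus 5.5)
The plan is to prove the identity by applying \autoref{thm:subjectiveBV} twice: once per member (over $D$) and once across members (over $\Theta$ at fixed $D$). We then recombine the pieces using the log-loss identities already seen in Example~\ref{example:gal}. Throughout, $\phi$ is negative Shannon entropy, so $B_\phi = K$, and the right centroid of any family of categorical distributions is its arithmetic mixture. Write $q^{\scriptscriptstyle D}_\theta$ for member $\theta$ trained on $D$, let $\bar q_\theta := \E_D[q^{\scriptscriptstyle D}_\theta]$, and note $\E_D[\qensD] = \frac1m\sum_\theta \bar q_\theta$. Take $\Theta$ uniform on $\{1,\dots,m\}$ and independent of $D$, with $\hat Y\mid(\Theta,D)\sim q^{\scriptscriptstyle D}_\Theta$.

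\textbf{Step 1.} The subjective risk is linear in the sampling distribution, since $\E_{\hat Y\sim \qensD}[\ell(p,\hat Y)] = \frac1m\sum_\theta \E_{\hat Y\sim q^{\scriptscriptstyle D}_\theta}[\ell(p,\hat Y)]$. Hence the left-hand side equals $\frac1m\sum_\theta \E_D\big[R_{q^{\scriptscriptstyle D}_\theta}(p)\big]$.

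\textbf{Step 2.} For each fixed $\theta$, apply \autoref{thm:subjectiveBV} with randomness over $D$. This gives
\[
\E_D\big[R_{q^{\scriptscriptstyle D}_\theta}(p)\big] = K(\bar q_\theta\,\|\,p) + \E_D\big[K(q^{\scriptscriptstyle D}_\theta\,\|\,\bar q_\theta)\big] + \E_D\big[H(q^{\scriptscriptstyle D}_\theta)\big].
\]
Averaging over $\theta$ produces the average bias and the term $\frac1m\sum_\theta \E_D[K(q_\theta\|\E_D q_\theta)]$. We identify the latter with $I(\hat Y;D\mid\Theta)$ via the standard identity $I(X;Z\mid W)=\E_{W,Z}[K(P_{X\mid Z,W}\|P_{X\mid W})]$, using that $P_{\hat Y\mid\Theta=\theta}=\bar q_\theta$ because $\Theta\perp D$. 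What remains is $\frac1m\sum_\theta\E_D[H(q^{\scriptscriptstyle D}_\theta)] = H(\hat Y\mid \Theta, D)$.

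\textbf{Step 3.} For each fixed $D$, apply the total-uncertainty decomposition displayed above, which is the case $p$-free of Example~\ref{example:gal} and also \autoref{thm:subjectiveBV} specialised to the entropy part:
\[
H(\qensD) = \frac1m\sum_\theta K(q^{\scriptscriptstyle D}_\theta\,\|\,\qensD) + \frac1m\sum_\theta H(q^{\scriptscriptstyle D}_\theta).
\]
Taking $\E_D$ gives $H(\hat Y\mid D) = I(\hat Y;\Theta\mid D) + H(\hat Y\mid\Theta,D)$. This uses $P_{\hat Y\mid D}=\qensD$, so the middle term is exactly the stated diversity. Substituting $H(\hat Y\mid\Theta,D) = H(\hat Y\mid D) - I(\hat Y;\Theta\mid D)$ into Step 2 yields the claimed four-term formula.

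\textbf{Main obstacle.} The algebra is routine. The delicate step is bookkeeping: the right centroids must be the correct mixtures ($\bar q_\theta$ per member, not $\E_D[\qensD]$), and the conditional mutual informations must be justified as the stated expected KLs. That justification hinges on the independence of $\Theta$ and $D$ and on uniform $\Theta$. I would also check that every quantity is finite, for instance by assuming $p$ has full support, so that subtracting $I(\hat Y;\Theta\mid D)$ is legitimate.
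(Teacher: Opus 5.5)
Your proposal is correct and is essentially the paper's argument. The paper splits $R_{q^{D}_{ens}}(p)=K(q^{D}_{ens}\|p)+H(q^{D}_{ens})$ and applies the ambiguity identity $K(q^{D}_{ens}\|p)=\frac{1}{m}\sum_\theta K(q_\theta\|p)-\frac{1}{m}\sum_\theta K(q_\theta\|q^{D}_{ens})$. That identity is exactly your Step 1 (linearity) combined with your Step 3 (entropy Jensen gap) at fixed $D$, and both proofs then apply the per-member bias--variance decomposition over $D$.

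Your explicit choice of $\Theta$ uniform and independent of $D$ is what justifies the two conditional mutual-information identifications, which the paper leaves implicit. Also, on a finite alphabet both mutual-information terms are bounded by $\ln|\mathcal{Y}|$, so the subtraction needs no full-support assumption on $p$.
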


Now, we see {\em two} mutual information
terms, both of which could be regarded as epistemic/reducible, since we have modelling control of the individual ensemble members. Note that $\textcolor{BrightOlive}{I(\hat{Y};\Theta\mid D)}$ is the model disagreement term from earlier, averaged over dataset realisations.
The difference is clear however, in that these terms have opposite signs. We would like to {\em increase $\textcolor{BrightOlive}{I(\hat{Y};\Theta\mid D)}$ and decrease $\textcolor{BrightOlive}{I(\hat{Y};D\mid \Theta)}$}. 
More precisely, we need to balance these terms against each other. Just as the traditional bias-variance decomposition has a 2-way trade-off, here we see a trade-off of 3 epistemic and aleatoric terms.

Our ensemble example therefore illustrates why a mathematical
quantity is not intrinsically epistemic, aleatoric, or a diversity
measure. The between-model disagreement term, $\E_\theta[B_\phi(q_\theta,\bar{q})]$, appears as (1) epistemic variance
when the member index represents unresolved model choice, or
(2) as part of the generalized
entropy when the member index is marginalized out,
or (3) as beneficial
diversity when the members are combined and we consider their interactions.
A term being regarded as `epistemic' is
determined by what we consider as the `predictive unit', and the sources of randomness retained in the modelling scenario---not by its algebraic form as a mutual information term.



\newpage

\subsection{What makes a good uncertainty measure?}
\label{sec:goodUncertaintyMeasure}
What makes Gal's mutual information so 
successful as an uncertainty measure?
We showed how it can be derived using the subjective risk perspective,
which for log-loss is
the {\em reverse} cross-entropy.  
However, its success is not just due to the ``reverse'' property, but also {\em to the geometry of the proper loss itself.}
The reverse cross-entropy assigns large loss when the model places mass in areas where the true distribution $p$ has low probability.  
The same qualitative phenomenon applies for other losses, so long as they
exhibit large curvature near the boundary of the probability simplex, as $p\rightarrow 0$.

For a proper loss, the local geometry of the associated Bregman divergence is governed by the Hessian of the generator \(\phi\), via \(B_\phi(a_q,a_p)\approx \frac{1}{2}(a_q-a_p)^\top\nabla^2\phi(a_p)(a_q-a_p)\).
Note that this is not, in general, the same as the second derivative of the pointwise loss \(\ell(a_p,y)\). For log loss, for example, \(\nabla^2\phi(a_p)\) scales as \(1/a_{p_i}\), whereas \(\partial^2[-\log a_{p_i}]/\partial a_{p_i}^2\) scales as \(1/a_{p_i}^2\).
Below we plot the log, Brier, and pseudo-spherical losses, alongside second derivatives $\nabla^2\phi(a_p)$.

\vspace{-0.2cm}

\begin{figure}[h]
    \centering
    \includegraphics[width=0.46\linewidth]{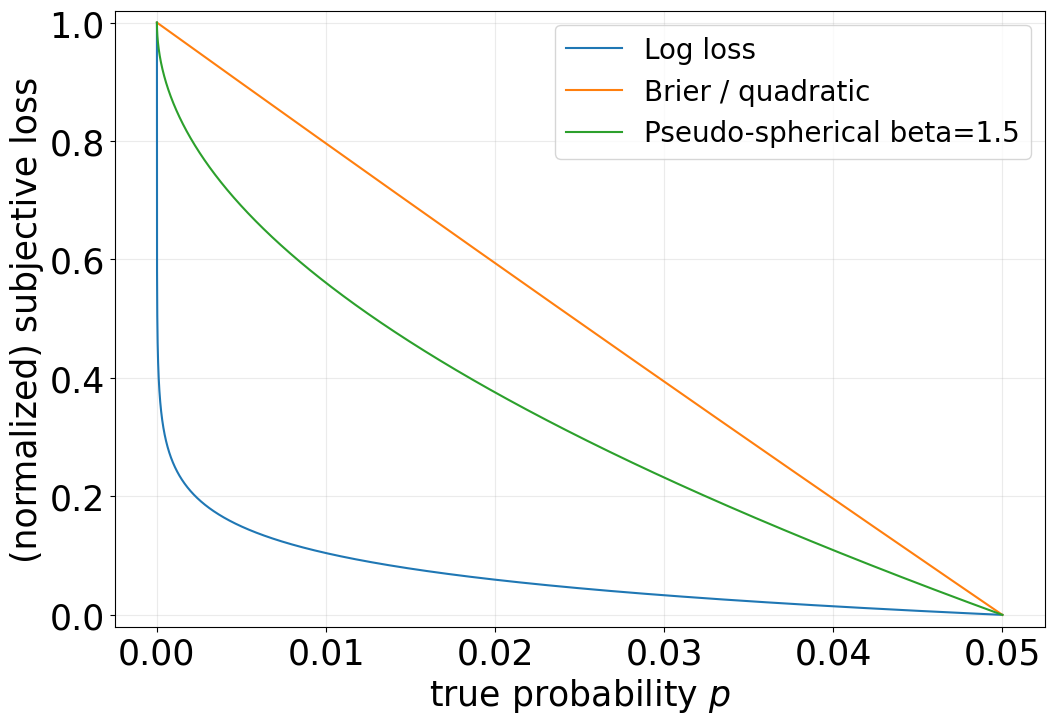}
%
    \includegraphics[width=0.46\linewidth]{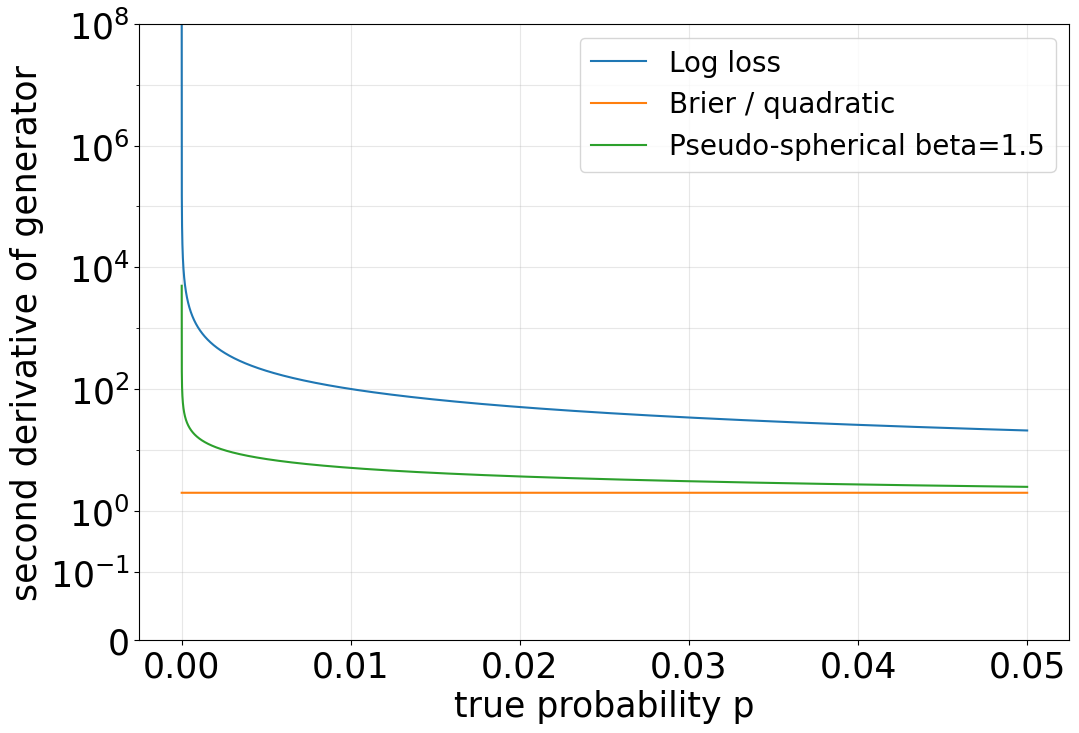}
    \caption{LEFT: Log, Brier, and Pseudo-spherical loss, normalized to a common maximum.  RIGHT: Second derivatives of the associated generator function, i.e. $\nabla^2\phi(p)$.}
    \label{fig:placeholder}
\end{figure}
\vspace{-0.5cm}

The case of log and pseudo-spherical (with $\beta=1.5$) rise rapidly as $p$ approaches zero, affording them sensitivity to probability mass being placed on implausible events, the far left side of the x-axis.
The Brier (squared loss) has a slower growth (constant second derivative) and
%
thus less sensitive to mass placed on events that \(p\) regards as implausible.

Objective risk answers the question {\bf ``how correct is this model?''}, whilst
subjective risk answers {\bf ``how realistic is this model?''}. Large second derivatives near the boundary are what make a subjective risk sensitive to deviations from reality.

\subsection{Summary}

We have presented a novel viewpoint for uncertainty quantification, based on decomposing the subjective risk.
%
%
This led naturally to many known measures
of epistemic and aleatoric uncertainty 
emerging as decomposition terms, explaining  much of the existing literature from a single viewpoint.
This provides an interesting point on the additive TU$=$AU$+$EU view, debated in the literature.
From our work, we see AU$+$EU is the expected subjective risk with the bias term removed.
Those who adopt the additive view are therefore implicitly equating total uncertainty with the `de-biased' expected subjective risk.

\newpage
\section{A Bridge to Statistical Learning Theory}
Classical statistical learning theory (SLT) has provided a rigorous theoretical foundation for supervised learning, enabling the analysis of generalization, sample complexity, and the trade-offs between model capacity and estimation from limited data. It is desirable for uncertainty quantification to have a similar theoretical foundation.

SLT relies on notions such as {\em approximation error}, {\em estimation error}, and the {\em best-in-class} model \citep{bach2024learning}.
These notions are defined with respect to the {\em objective risk}, $R_p(a_z):=\E_{Y\sim p}[\ell(a_z,Y)]$.
In this section, we introduce corresponding notions induced by {\em subjective risk}, $R_q(a_z):=\E_{\hat Y\sim q}[\ell(a_z,\hat Y)]$ and establish basic properties.
We consider this as a first step toward a full learning-theoretic framework for uncertainty quantification.

\subsection{Defining a Learning Theory nomenclature for Subjective Risk}

Define a hypothesis class $\gH\subseteq \gP$ of models (distributions), where $\gP$ is convex, but $\gH$ is not necessarily convex. As before, we assume a strictly proper loss with associated regularity/differentiability assumptions. In this section, we assume the full-distribution setting $\Gamma(p) = p$. 
%
In classical SLT, the {expected} excess risk of $q_\theta\in \gH$ is defined, via the {\em objective} risk as used throughout the paper,
\begin{equation}
    \Delta_{obj} := \E_\theta\Big[ R_p(q_\theta) - R_p(p) \Big] = \E_\theta\Big[ B_\phi(p,q_\theta) \Big].
\end{equation}
This quantifies the additional risk (on average w/r uncertainty in $\theta$) that we will have from using a $q_\theta$, as opposed to the true $p$.
A corresponding notion can be defined for the {\em subjective} risk---the expected excess {\em subjective} risk is,
\begin{equation}
   \Delta_{subj} := \E_\theta\Big[ R_{q_\theta}(p) - R_{q_\theta}(q_\theta) \Big] = \E_\theta \Big[ B_\phi(q_\theta,p)\Big].
\end{equation}
The roles of $p,q$ have been switched, including the ordering of terms in the expression, since by definition, given our proper loss assumption, $R_q(q)\leq R_q(p)$, for all $p,q$. Note however that the expression is still a (non-negative) Bregman divergence.\\

\noindent SLT defines the {\em best-in-class} model as the model in $\gH$ with minimum risk,
\begin{equation}
    q_{obj}^* := \arg\min_{z\in \gH} \Big\{ R_p(z) \Big\}.
\end{equation}
Again, we can define a corresponding {\em subjective} version of this as,
\begin{equation}
    q_{subj}^* := \arg\min_{z\in\gH} \Big\{ \E_\theta\left[ R_{q_\theta}(z) \right] \Big\}.
\end{equation}

We note that the expected subjective risk is\footnote{{Trivially,} $\E_\theta \left[ R_{q_\theta}(z) \right]  = \E_\theta \left[\int_{\gY} \ell(z,y) q_\theta(y) \,d\nu(y) \right]
   = \int_{\gY} \ell(z,y) \bar q(y) \,d\nu(y)
   := R_{\bar q}(z)$.} the subjective risk of the Bayesian Model Average, i.e.,
$\E_\theta \Big [R_{q_\theta}(z) \Big]  = R_{\bar{q}}(z
)$, which implies $\bar{q}=\arg\min_{z\in\gP} \{~ \E_\theta [R_{q_\theta}(z)] ~\}$,
but this may not be attainable with non-convex $\gH$.
Therefore, if the predictive mixture $\bar q \in\gH$, we have $q^*_{subj}=\bar{q}$, otherwise $q^*_{subj}$ the optimal constrained projection to $\gH$, but potentially sub-optimal relative to optimization over the full $\gP$.


\newpage
\noindent Given these definitions,
it is interesting to consider when $q_{\mathrm{obj}}^\ast$ and $q_{\mathrm{subj}}^\ast$ might coincide---that is, when the argmin of the objective and subjective risks is in fact the {\em same} model.

\begin{proposition}[Agreement of Objective and Subjective Best-In-Class Models]
~\\
Assume the strictly proper loss admits the Bregman representation in \eqref{eq:subjective_Bregman}, and the full-distribution setting $\Gamma(p) = p$. Define the right Bregman projection
$\Pi_{\mathcal H}^{R}(r)
:=
\operatorname*{arg\,min}_{z\in\mathcal H} B_\phi(r,z)$
exists and is unique for every distribution $r$ considered below. Then
$$
q_{\mathrm{obj}}^\ast
=
\Pi_{\mathcal H}^{R}(p),
\qquad
q_{\mathrm{subj}}^\ast
=
\Pi_{\mathcal H}^{R}(\bar q).
$$

\noindent Consequently,
$
q_{\mathrm{obj}}^\ast=q_{\mathrm{subj}}^\ast
$
if and only if
$
\Pi_{\mathcal H}^{R}(p)
=
\Pi_{\mathcal H}^{R}(\bar q).
$
In particular, if $\mathcal H=\mathcal P$, then
$
q_{\mathrm{obj}}^\ast=p
$
and
$
q_{\mathrm{subj}}^\ast=\bar q,
$
and therefore
$
q_{\mathrm{obj}}^\ast=q_{\mathrm{subj}}^\ast
$
if and only if
$
p=\bar q.
$
~\\

\noindent Now let $\bar q_n$ denote the predictive mixture obtained from $n$ observations. If $\bar q_n\to p$ and the projection map $r\mapsto\Pi_{\mathcal H}^{R}(r)$ is continuous at $p$, then
$$
q_{\mathrm{subj},n}^\ast
=
\Pi_{\mathcal H}^{R}(\bar q_n)
\longrightarrow
\Pi_{\mathcal H}^{R}(p)
=
q_{\mathrm{obj}}^\ast.
$$
\end{proposition}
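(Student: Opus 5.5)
The plan is to rewrite each optimisation problem as a right Bregman projection using the divergence-plus-entropy identities already established, \eqref{eq:divergenceplusentropy} and \eqref{eq:subjective_Bregman}. The remaining claims then follow from uniqueness of the projection and continuity.

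\textbf{Objective best-in-class.} By \eqref{eq:divergenceplusentropy} with $\Gamma(p)=p$, for every $z\in\gH$ we have $R_p(z)=B_\phi(p,z)+H_\ell(p)$. The term $H_\ell(p)$ does not depend on $z$, so $\arg\min_{z\in\gH}R_p(z)=\arg\min_{z\in\gH}B_\phi(p,z)$. By the assumed existence and uniqueness this set is the single point $\Pi^R_{\gH}(p)$.

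\textbf{Subjective best-in-class.} First use the footnoted identity $\E_\theta[R_{q_\theta}(z)]=R_{\bar q}(z)$. This is linearity of the integral in the weighting distribution, plus an exchange of $\E_\theta$ and $\int_\gY$; I would justify the exchange by Tonelli/Fubini, assuming $\ell$ is bounded below or integrable. Because $\gP$ is convex, $\bar q\in\gP$, so \eqref{eq:subjective_Bregman} applies with $q=\bar q$ and gives $R_{\bar q}(z)=B_\phi(\bar q,z)+H_\ell(\bar q)$. Dropping the $z$-independent entropy yields $q^*_{subj}=\Pi^R_{\gH}(\bar q)$.

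\textbf{Equivalence and the case $\gH=\gP$.} The ``consequently'' statement is immediate once the two minimisers are singletons. If $\gH=\gP$, then $B_\phi(r,z)\ge 0$ with equality iff $z=r$, by strict convexity of $\phi$ (equivalently, strict propriety). Hence $\Pi^R_{\gP}(r)=r$ for any $r\in\gP$, so $q^*_{obj}=p$, $q^*_{subj}=\bar q$, and the two coincide iff $p=\bar q$.

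\textbf{Convergence.} The second derivation applies verbatim with $\bar q_n$ in place of $\bar q$, giving $q^*_{subj,n}=\Pi^R_{\gH}(\bar q_n)$. Continuity of $\Pi^R_{\gH}$ at $p$, together with $\bar q_n\to p$, then gives $\Pi^R_{\gH}(\bar q_n)\to\Pi^R_{\gH}(p)=q^*_{obj}$. Both convergences are taken in whatever topology on $\gP$ the continuity hypothesis refers to.

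\textbf{Main obstacle.} The only step that is not bookkeeping is the Fubini exchange behind $\E_\theta[R_{q_\theta}(z)]=R_{\bar q}(z)$. For the log loss $\ell$ can take the value $+\infty$, so both sides must be read in the extended reals. I would handle this by requiring $\ell(z,\cdot)$ to be bounded below, or by noting that both sides are then $+\infty$ simultaneously. In that case minimisers over $\gH$ are only affected through the finite-risk set, and the projection assumption already presumes that set is nonempty. Everything else reduces to the earlier identities plus the stated existence, uniqueness and continuity hypotheses.
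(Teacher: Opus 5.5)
Your proposal is correct and follows the paper's proof essentially step for step. You drop the $z$-independent entropy from $R_p(z)=B_\phi(p,z)+H_\ell(p)$ and from $\E_\theta[R_{q_\theta}(z)]=R_{\bar q}(z)=B_\phi(\bar q,z)+H_\ell(\bar q)$ to identify both best-in-class models as right projections, then invoke strict propriety for the case $\gH=\gP$ and continuity of $\Pi^R_{\gH}$ for the limit, exactly as the paper does (your extra care over the Fubini exchange and extended-real values refines what the paper leaves to a footnote).
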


\begin{proof}
By the proper risk decomposition,
$
R_p(z)=B_\phi(p,z)+H_\ell(p),
$
where the entropy term does not depend on $z$. Hence
$
q_{\mathrm{obj}}^\ast
=
\operatorname*{arg\,min}_{z\in\mathcal H}B_\phi(p,z)
=
\Pi_{\mathcal H}^{R}(p).
$
Similarly, 
we have
$
\mathbb{E}_\theta[R_{q_\theta}(z)]
=
R_{\bar q}(z)
=
B_\phi(\bar q,z)+H_\ell(\bar q),
$
and therefore
$
q_{\mathrm{subj}}^\ast
=
\operatorname*{arg\,min}_{z\in\mathcal H}B_\phi(\bar q,z)
=
\Pi_{\mathcal H}^{R}(\bar q).
$
The exact characterization follows immediately. If $\mathcal H=\mathcal P$, strict propriety implies that $B_\phi(r,z)$ is uniquely minimized at $z=r$. The asymptotic statement follows from the continuity of $\Pi_{\mathcal H}^{R}$.
\end{proof}

The proposition distinguishes two notions of optimality. The objective best-in-class model is the element of $\mathcal H$ that best approximates reality $p$, whereas the subjective best-in-class model best approximates the aggregate predictive belief $\bar q$. Their coincidence means that internal model beliefs and external reality select the same optimal model within the available hypothesis class. This is desirable when subjective risk is intended to support reliable prediction or action under the true data-generating distribution: minimizing subjective risk then targets the same model as minimizing objective risk.

When $\mathcal H=\mathcal P$, coincidence requires the strongest possible form of agreement, namely $\bar q=p$. For a restricted hypothesis class, however, $p$ and $\bar q$ may differ while still having the same projection onto $\mathcal H$. Thus, the result separates disagreement between the distributions themselves from disagreement that actually changes the selected model. The asymptotic statement is particularly useful: if the predictive mixture is consistent and the projection is stable, then the subjective learning target converges to the objective target. In that case, subjective and objective learning are asymptotically aligned.

Conversely, if the two projections differ, the subjective criterion targets a different model from the one preferred under reality. This mismatch is driven by the discrepancy between $\bar q$ and $p$, which is measured globally by the bias term $B_\phi(\bar q,p)$. However, a non-zero bias does not necessarily imply different best-in-class models when $\mathcal H$ is restricted, since two distinct distributions may still have the same projection onto $\mathcal H$.


\newpage

\subsection{Approximation and Estimation from the Subjective Risk}

Classical learning theory decomposes the expected excess (objective) risk as,

\begin{equation}
   \Delta_{obj} =
   \underbrace{\mystrut[1em]
   \E_\theta\Big[ R_p(q_\theta) - R_p(q_{obj}^*) \Big]}_{\textnormal{estimation error}}
   +
   \underbrace{\mystrut[1em]
   R_p(q_{obj}^*) - R_p(p).}_{\textnormal{approximation error}}
\end{equation}
Here, both terms\footnote{We follow the convention of \citet{bottou2007tradeoffs} in that the first term in the decomposition is called `estimation error' as opposed to `expected estimation error'.} are non-negative.
The {\em estimation} error is the part of the excess risk which can be attributed to insufficient data. We compare models with different parameters $\theta$, but cannot distinguish their performance on a small dataset---in effect, an instance of the multiple hypothesis testing problem. The estimation error is there because we could not identify the best-in-class model $q^*_{obj}\in\gH$.
The approximation error is, by contrast, independent of data. It quantifies the notion of model capacity for us---and is non-zero when $\gH$ is not sufficiently expressive to capture the true $p$. 
A well-appreciated tenet of learning theory is the trade off between these terms.  As we increase the size of $\gH$, approximation error decreases monotonically, but estimation error may grow, as it becomes harder to find the best-in-class model with limited data.

We can define a corresponding {\em subjective risk} version of this decomposition as,
\begin{equation}
    \Delta_{subj} 
    = 
    \underbrace{\mystrut[1em]
    \E_\theta\Big[ R_{q_\theta}(p) - R_{q_\theta}({q_{subj}^*}) \Big] }_{\textnormal{{\bf subjective} approximation gap}}
    +
    \underbrace{\mystrut[1em]
    \E_\theta\Big[R_{q_\theta}({q_{subj}^*}) - R_{q_\theta}({{q_\theta}}) \Big]}_{\textnormal{{\bf subjective} estimation error}}
\end{equation}
We can observe two important differences from the objective risk case.
Firstly, both terms depend on the model parameters $\theta$---this is because $q_\theta$ is key to the evaluation protocol in the subjective case.
Secondly, the approximation component can be negative---as such we name it a `gap' rather than an `error'.
This gap can be re-written as a difference,
    \begin{equation}
     \E_\theta\Big[ R_{q_\theta}(p) - R_{q_\theta}(q_{subj}^*) \Big]
     = B_\phi(\bar q , p) - B_\phi(\bar q,q_{subj}^*).
    \end{equation}

If
$B_\phi(\bar{q},p) < B_\phi(\bar{q},q^*_{subj})$,
the gap will be negative.
On the other hand, if $\bar q=q^*_{subj}$, as in the case of a convex hypothesis class, the subjective approximation gap is non-negative, and equal to the bias $B_\phi(\bar{q},p)$.
Thus, unlike classical approximation error, the subjective approximation gap: can be non-negative even when $p\in\gH$; can be {\em negative} when $p\notin \gH$; and does not obviously increase as the hypothesis class expands. 
These are clearly properties worthy of future study.

In contrast to the subjective approximation gap, the subjective estimation error is non-negative, as shown below.

\begin{theorem}
The subjective estimation error, $\E_\theta\Big[R_{q_\theta}({q_{subj}^*}) - R_{q_\theta}({q_\theta}) \Big]$, is non-negative.
\end{theorem}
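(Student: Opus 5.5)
The plan is to reduce the claim to pointwise non-negativity of a Bregman divergence, using the subjective Bregman representation in \autoref{eq:subjective_Bregman}. We are in the full-distribution setting $\Gamma(q)=q$ with a strictly proper loss. So for every fixed $\theta$ and every $z\in\gP$,
\begin{equation*}
R_{q_\theta}(z) = B_\phi(q_\theta, z) + H_\ell(q_\theta), \qquad R_{q_\theta}(q_\theta) = H_\ell(q_\theta),
\end{equation*}
where the second identity is the definition of the generalized entropy as the minimal subjective risk.

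First, fix $\theta$ and take $z = q^*_{subj}\in\gH\subseteq\gP$. Subtracting the two identities, the entropy term $H_\ell(q_\theta)$ cancels, and the integrand of the subjective estimation error becomes
\begin{equation*}
R_{q_\theta}(q^*_{subj}) - R_{q_\theta}(q_\theta) = B_\phi(q_\theta, q^*_{subj}).
\end{equation*}
Second, this quantity is non-negative for every $\theta$. This follows from convexity of $\phi$: $\phi$ lies above its tangent plane at $q^*_{subj}$. Equivalently, it is just propriety, $R_{q_\theta}(q_\theta)\le R_{q_\theta}(z)$ for all $z$, as stated in \autoref{def:subjectiverisk}, so one can skip the Bregman form entirely. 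Third, taking the expectation over $\theta$ of a non-negative integrand gives a non-negative result. Hence
\begin{equation*}
\E_\theta\big[R_{q_\theta}(q^*_{subj}) - R_{q_\theta}(q_\theta)\big] = \E_\theta\big[B_\phi(q_\theta,q^*_{subj})\big] \ge 0,
\end{equation*}
with equality iff $q_\theta = q^*_{subj}$ almost surely, by strict convexity.

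The only real obstacle is technical. We need $q^*_{subj}$ to exist; the proposition's assumption that the projection exists covers this. We also need the difference to be well defined rather than $\infty-\infty$. This is handled by assuming $H_\ell(q_\theta)$ is finite, so the difference is a possibly $+\infty$ but non-negative quantity and the expectation is still $\ge 0$.

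It is worth noting the contrast with the subjective approximation gap. There, the comparison point $p$ and the comparator $q^*_{subj}$ are both fixed and neither is the evaluating distribution, so no pointwise sign is available. Here, the comparator $q_\theta$ is exactly the evaluating distribution, which is why propriety gives the sign pointwise.
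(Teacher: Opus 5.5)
Your proof is correct and matches the paper's: both use $R_{q_\theta}(z)=B_\phi(q_\theta,z)+H_\ell(q_\theta)$ to cancel the entropy term, leaving $\E_\theta[B_\phi(q_\theta,q^*_{subj})]\ge 0$. Your extra remarks (arguing directly from propriety, and the finiteness caveat) are sound refinements that go slightly beyond what the paper states.
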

\begin{proof}
Using the proper risk decomposition $R_{q_\theta}(z)=B_\phi({q_\theta},z)+H_\ell(q_\theta)$,
it can be written
\begin{align}
\mathbb{E}_\theta\!\left[
R_{q_\theta}({q_{\mathrm{subj}}^*})
-
R_{q_\theta}({q_\theta})
\right]
&=
\mathbb{E}_\theta\!\left[
B_\phi({q_\theta},{q_{\mathrm{subj}}^*})
\right]
\ge 0.
\end{align}
\end{proof}
Before we look in depth at this term, we relate it to the bias/variance terms defined earlier.

\begin{figure}[ht]
    \centering
    \includegraphics[width=\linewidth]{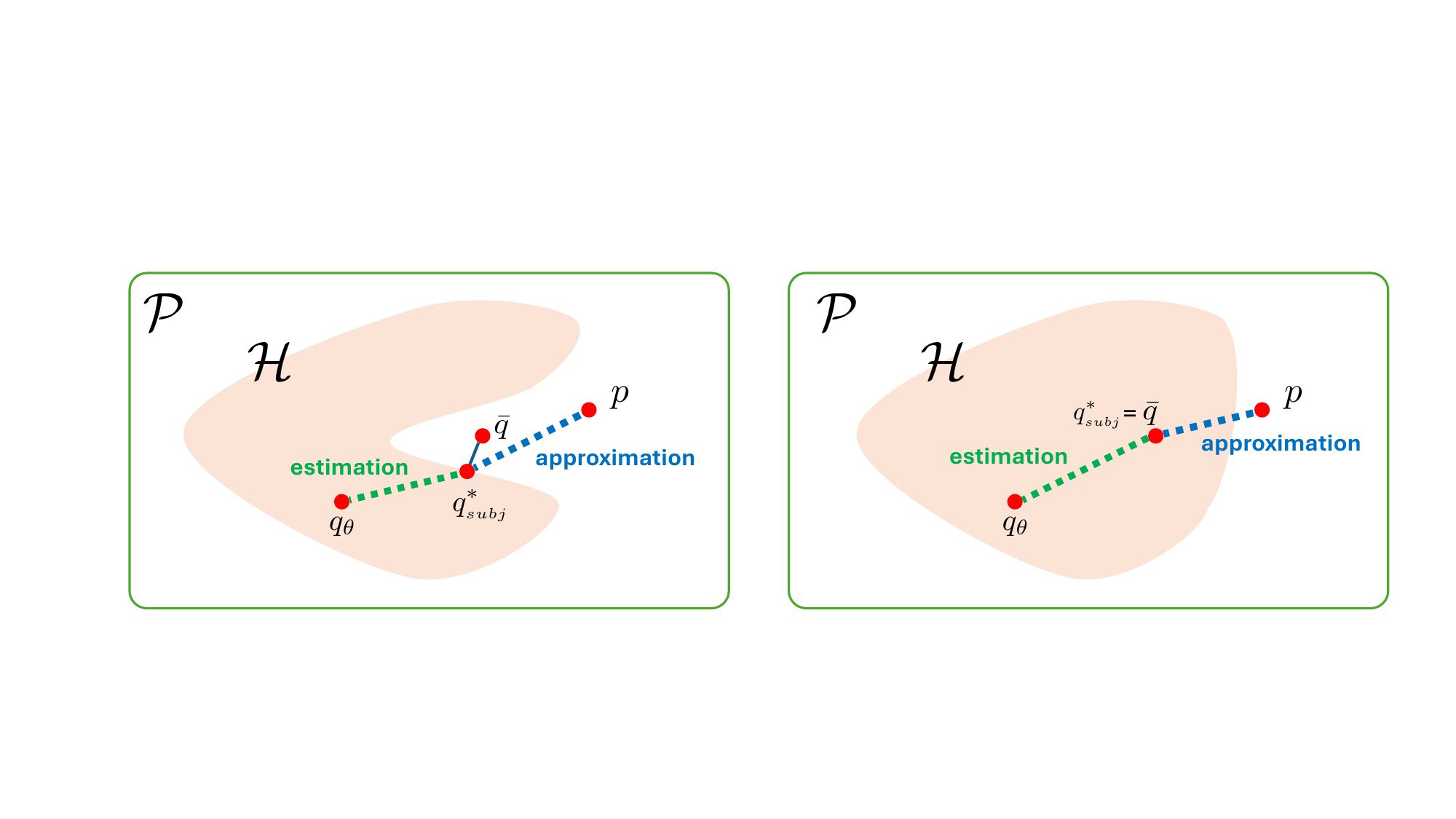}
    \caption{Approximation/estimation components of {\em subjective} risk, for non-convex (LEFT) and convex (RIGHT) hypothesis class $\gH$.  The subjective estimation error is the excess subjective risk of $q_\theta$ versus the best-in-class $q^*_{subj}$.  The subjective approximation gap is the difference for $q^*_{subj}$ versus the true $p$.  When $\gH$ is convex, the approximation gap equals the bias, and the subjective estimation error is the variance. For convex $\gH$, subjective estimation error is the mutual information $I(\hat{Y};\Theta)$, the {\em epistemic uncertainty}, under log loss.}
\end{figure}

\subsection{Relation to the Bias--Variance Decomposition}
\citet{brown2024notthesame} identified the relationships between bias--variance decompositions and approximation/estimation error, for the case of excess {\em objective} risk.
We can do the same here, for excess {\em subjective} risk.
In the objective risk, \citet{brown2024notthesame} showed the relationships as,

\begin{eqnarray}
   \Delta_{obj} &=&
   \underbrace{\mystrut[1.5em]
   \E_\theta\Big[ R_p(q_\theta) 
    - R_p(\mathring{q}) \Big] + R_p(\mathring{q})
    -R_p(q_{obj}^*)}_{\textnormal{estimation error}}
   +
   \underbrace{\mystrut[1.5em]
   R_p(q_{obj}^*) -  R_p(p)}_{\textnormal{approximation error}} \\
   &&\underbrace{\phantom{\hspace{3.45cm}}}_{\textnormal{variance}} ~~~ \underbrace{\phantom{\hspace{6.3cm}}}_{\textnormal{bias}} \notag
\end{eqnarray}
The corresponding {\em subjective} version of this has a pleasing symmetry:
\begin{eqnarray}
   \hspace{-0.5cm}\Delta_{subj}
    &=& 
    \underbrace{\mystrut[1.5em]
    \E_\theta\Big[ R_{q_\theta}(p) - R_{q_\theta}(q_{subj}^*) \Big] }_{\textnormal{{\bf subjective} approximation gap}}
    +
    \underbrace{\mystrut[1.5em]
    \E_\theta\Big[R_{q_\theta}(q_{subj}^*) - R_{q_\theta}(\bar{q})\Big]
    +
    \E_\theta\Big[ R_{q_\theta}(\bar{q}) - R_{q_\theta}(q_\theta) \Big]
    }_{\textnormal{{\bf subjective} estimation error}} \notag  \\
    &&\underbrace{\phantom{\hspace{8.55cm}}}_{\textnormal{bias}} ~~ \underbrace{\phantom{\hspace{3.55cm}}}_{\textnormal{variance}} \notag
\end{eqnarray}

\noindent From here, we can highlight an important conclusion, relating these terms to the epistemic uncertainty terms derived earlier in the paper.

\begin{theorem}
    For the log proper loss, $\ell(z,y)=-\ln z(y)$, the variance of the subjective risk  is the mutual information, that is, Gal's epistemic uncertainty $I(\hat{Y};\Theta)$.
\end{theorem}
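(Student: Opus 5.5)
The plan is to compute the variance term of the subjective decomposition directly under log loss. That term is $\E_\theta[R_{q_\theta}(\bar q)-R_{q_\theta}(q_\theta)]$, which the preceding display identifies as $\E_\theta[B_\phi(q_\theta,\bar q)]$. Here $\bar q=\E_\theta[q_\theta]$ is the right Bregman centroid, i.e. the predictive mixture, as discussed after \autoref{thm:subjectiveBV}. We then show that this quantity equals $I(\hat Y;\Theta)$, where the joint law is $\theta\sim q(\theta\mid D)$ and $\hat Y\mid\theta\sim q_\theta$.

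First, substitute the log loss into the subjective risks. This gives $R_{q_\theta}(\bar q)=-\sum_y q_\theta(y)\ln\bar q(y)$, and in the continuous case the integral against $\nu$. It also gives $R_{q_\theta}(q_\theta)=-\sum_y q_\theta(y)\ln q_\theta(y)=H_\ell(q_\theta)$. Their difference is $\sum_y q_\theta(y)\ln\big(q_\theta(y)/\bar q(y)\big)=K(q_\theta\mid\mid\bar q)$. This is consistent with the fact, stated in Example~\ref{example:gal}, that the Bregman divergence of $\phi(q)=\sum_y q(y)\ln q(y)$ is the KL divergence. Any shift of the reference measure cancels in this difference, so the argument covers both the categorical and density cases.

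Second, take the expectation over $\theta$ and identify the result as mutual information. The joint density of $(\Theta,\hat Y)$ is $q(\theta\mid D)\,q_\theta(y)$. The marginal of $\hat Y$ is $\int q_\theta(y)\,q(\theta\mid D)\,d\theta=\bar q(y)$, and the conditional is $q_\theta$. Therefore
\[
\E_\theta\big[K(q_\theta\mid\mid\bar q)\big]=\int\!\!\int q(\theta\mid D)\,q_\theta(y)\ln\frac{q(\theta\mid D)\,q_\theta(y)}{q(\theta\mid D)\,\bar q(y)}\,d\nu(y)\,d\theta=I(\hat Y;\Theta),
\]
which is the standard identity $I(\hat Y;\Theta)=\E_\Theta[K(p_{\hat Y\mid\Theta}\mid\mid p_{\hat Y})]$. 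One may also add that this equals $H(\hat Y)-H(\hat Y\mid\Theta)$. That recovers Gal's entropy form, with $H(\hat Y)=H_\ell(\bar q)$ and $H(\hat Y\mid\Theta)=\E_\theta[H_\ell(q_\theta)]$.

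The main obstacle is the justification of the centroid step, not the algebra. We must confirm that $\bar q$ appearing in the variance is indeed the mixture, i.e. that it minimizes $c\mapsto\E_\theta[K(q_\theta\mid\mid c)]$ over $\gP$. To do this, write $\E_\theta[K(q_\theta\mid\mid c)]=\E_\theta[K(q_\theta\mid\mid\bar q)]+K(\bar q\mid\mid c)$, which follows by adding and subtracting $\ln\bar q$. The second term is nonnegative and vanishes only at $c=\bar q$. We also need integrability so that Fubini applies, e.g. finiteness of $I(\hat Y;\Theta)$ and of the entropies. We will state this as a standing assumption.
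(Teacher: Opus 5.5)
Your proposal is correct and follows essentially the same route as the paper. Like the paper, you write the variance term as $\E_\theta[R_{q_\theta}(\bar q)-R_{q_\theta}(q_\theta)]=\E_\theta[B_\phi(q_\theta,\bar q)]$, reduce it under log loss to $\E_\theta[K(q_\theta\mid\mid\bar q)]$, and recognize $I(\hat Y;\Theta)$; the only differences are that you get the KL by subtracting the two log-loss risks directly, and you explicitly verify the right-centroid property and the joint-density identity, which the paper takes for granted.
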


\newpage
\begin{proof}
Using the proper risk decomposition $R_{q_\theta}(z)=B_\phi(q_\theta,z)+H_\ell(q_\theta)$,
it can be written
%
$\mathbb{E}_\theta\!\left[
R_{q_\theta}(\bar q)
-
R_{q_\theta}(q_\theta)
\right]
=
\mathbb{E}_\theta\!\left[
B_\phi(q_\theta\,,\,\bar q)
\right]
\ge 0.
$
Choosing the log loss induces the generalized entropy $H_\ell(q_\theta)=-\int_\gY q_\theta(y)\ln q_\theta(y)\,dy$, whose corresponding Bregman divergence is the (reverse) KL divergence,
\begin{equation}
  \E_\theta\!\left[
B_\phi(q_\theta\,,\,\bar q)
\right]  = \E_\theta \left[KL (q_\theta\mid\mid \bar q)\right] = I(\hat Y;\Theta).
\end{equation}
\end{proof}

\vspace{-1cm}

\begin{corollary}
Subjective estimation error is non-negative and can be written as a sum of two non-negative terms, where the first is the penalty we pay for having a non-convex hypothesis class, and the second is the Bregman information.
    \begin{equation}
    \label{eq:estim_twoterms}
        \underbrace{\mystrut[1em]
        \E_\theta\Big[ B_\phi(q_\theta\,,\,q^*_{subj}) \Big]}_{
        \textnormal{subjective~estimation~error}
        }
        =
        \underbrace{\mystrut[1em]
        B(\bar{q}\,,\,q^*_{subj})}_{\textnormal{subjective estimation bias}}
        +
        \underbrace{\mystrut[1em]
        \E_\theta\Big[ B(q_\theta\,,\,\bar{q})\Big]}_{\textnormal{Bregman information}}
    \end{equation}
\end{corollary}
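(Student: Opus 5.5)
The plan is to prove the identity \eqref{eq:estim_twoterms} directly from the definition of the Bregman divergence. The argument is the reverse-argument ``bias--variance'' identity that already underlies \autoref{thm:subjectiveBV}, now taken around the fixed point $c=q^*_{subj}$ instead of $c=p$. Non-negativity of both terms then follows because each is a Bregman divergence, or an expectation of one.

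\textbf{Step 1: expand the three-point difference.} For any fixed $c$ in the domain, I expand
\[
B_\phi(q_\theta,c)-B_\phi(q_\theta,\bar q)-B_\phi(\bar q,c).
\]
Using $B_\phi(x,y)=\phi(x)-\phi(y)-\langle\nabla\phi(y),x-y\rangle$, the $\phi(q_\theta)$ terms cancel and the $\phi(\bar q)$ terms cancel. This leaves
\[
\langle \nabla\phi(\bar q)-\nabla\phi(c),\,q_\theta-\bar q\rangle .
\]
This is the three-point (generalized Pythagorean) identity for Bregman divergences.

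\textbf{Step 2: take the expectation.} I take $\E_\theta$ and use that the right centroid is the arithmetic mean, $\bar q=\E_\theta[q_\theta]$, as stated after \autoref{thm:subjectiveBV} and in the footnote showing $\E_\theta[R_{q_\theta}(z)]=R_{\bar q}(z)$. The inner product is linear in $q_\theta$, so its expectation vanishes. Setting $c=q^*_{subj}$ gives
\[
\E_\theta[B_\phi(q_\theta,q^*_{subj})]=B_\phi(\bar q,q^*_{subj})+\E_\theta[B_\phi(q_\theta,\bar q)].
\]
The left side equals the subjective estimation error by the proper risk decomposition used in the preceding theorem. Equivalently, the identity is just the sum of the ``bias'' and ``variance'' brackets in the symmetric display above, each rewritten via $R_{q_\theta}(z)=B_\phi(q_\theta,z)+H_\ell(q_\theta)$ and $\E_\theta[R_{q_\theta}(z)]=R_{\bar q}(z)$.

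\textbf{Step 3: non-negativity and interpretation.} Both terms are non-negative because Bregman divergences of a strictly convex $\phi$ are non-negative. The first term vanishes exactly when $\bar q=q^*_{subj}$, which holds whenever $\bar q\in\gH$, for instance when $\gH$ is convex. It is therefore the penalty for a non-convex hypothesis class. The second term is the Bregman information; under log loss it is $I(\hat Y;\Theta)$ by the preceding theorem.

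\textbf{Main obstacle.} The only delicate point is exchanging $\E_\theta$ with the inner product, which may be infinite-dimensional in the functional setting. This needs $\nabla\phi(\bar q)$ and $\nabla\phi(q^*_{subj})$ to be well defined, meaning $\bar q$ and $q^*_{subj}$ lie in the relative interior. It also needs the relevant terms to be integrable, so that the expectations are finite and linearity applies. I would state these as the standing regularity assumptions already used for \autoref{thm:subjectiveBV}. If they fail, the identity still holds in the extended sense whenever both sides are finite.
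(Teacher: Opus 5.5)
Your proof is correct and is essentially the paper's argument. The paper gets the identity by inserting $R_{q_\theta}(\bar q)$ into the subjective estimation error and using $\E_\theta[R_{q_\theta}(z)]=R_{\bar q}(z)$ with the proper risk decomposition, which is the equivalent route you mention at the end of Step~2; your primary three-point computation is the appendix proof of Theorem~\ref{thm:subjectiveBV} with $a_p$ replaced by $q^*_{subj}$, and the risk-telescoping form needs only finite risks, which largely removes the regularity obstacle you flag.
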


\noindent For the log loss, the Bregman information is the mutual information.
Therefore, the mutual information is only one part of the subjective estimation error---but remaining part is also non-negative, i.e.,
$\E_\theta\big[  R_{q_\theta}(q^*_{subj}) - R_{q_\theta}(\bar{q}) \big] =  B_\phi(\bar{q}\,,\,q^*_{subj})  \geq 0$.
The proof of this claim is similar to other results in this section, though we can also see non-negativity via
    $\E_\theta\big[ R_{q_\theta}(\bar{q}) \big] = R_{\bar{q}}(\bar{q}) := \inf_{z\in\gP} R_{\bar{q}}(z)$.

%

\begin{corollary}
Mutual information as an epistemic uncertainty measure is a lower bound on  subjective estimation error, i.e. $I(\widehat Y;\Theta)
\leq
\mathbb{E}_\theta[ B_\phi(q_\theta,q_{\mathrm{subj}}^\ast) ]$.
\end{corollary}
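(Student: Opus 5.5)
The plan is to derive the corollary directly from the preceding corollary (\autoref{eq:estim_twoterms}) combined with the theorem identifying the variance term with the mutual information under log loss. Concretely, I would first establish the three-point identity behind \autoref{eq:estim_twoterms}: for any fixed $z\in\gP$,
\begin{equation*}
\E_\theta\big[B_\phi(q_\theta,z)\big] = B_\phi(\bar q, z) + \E_\theta\big[B_\phi(q_\theta,\bar q)\big].
\end{equation*}
This follows by expanding each Bregman divergence via its definition $B_\phi(x,y)=\phi(x)-\phi(y)-\langle\nabla\phi(y),x-y\rangle$. The $\phi(q_\theta)$ terms cancel. The linear term $\E_\theta\langle\nabla\phi(\bar q), q_\theta-\bar q\rangle$ vanishes because $\bar q=\E_\theta[q_\theta]$ is the right centroid, which is the arithmetic mixture in the full-distribution setting. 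This is the same computation that underlies the bias term in \autoref{thm:subjectiveBV}. Alternatively, I could argue through risks: $\E_\theta[R_{q_\theta}(z)]=R_{\bar q}(z)=B_\phi(\bar q,z)+H_\ell(\bar q)$, and subtracting $\E_\theta[H_\ell(q_\theta)]$ together with the total-uncertainty identity $H_\ell(\bar q)=\E_\theta[B_\phi(q_\theta,\bar q)]+\E_\theta[H_\ell(q_\theta)]$ gives the same result.

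Next I would set $z=q^*_{subj}\in\gH\subseteq\gP$. The first term $B_\phi(\bar q,q^*_{subj})$ is a Bregman divergence of a strictly convex $\phi$, so it is non-negative. Dropping it yields
\begin{equation*}
\E_\theta\big[B_\phi(q_\theta,\bar q)\big]\le \E_\theta\big[B_\phi(q_\theta,q^*_{subj})\big].
\end{equation*}
Finally, I would specialise to log loss. By the previous theorem, $\E_\theta[B_\phi(q_\theta,\bar q)]=\E_\theta[K(q_\theta\mid\mid\bar q)]=I(\hat Y;\Theta)$, which gives the claimed bound. Equality holds exactly when $q^*_{subj}=\bar q$, for instance when $\gH$ is convex.

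The main obstacle is the technical justification that the cross term vanishes. This requires the interchange $\E_\theta\langle\nabla\phi(\bar q),q_\theta\rangle=\langle\nabla\phi(\bar q),\bar q\rangle$, i.e.\ Fubini for $\int\!\int q_\theta(y)\ln\bar q(y)\,d\nu\,dq(\theta)$, together with finiteness of the quantities involved. Finiteness also requires $q_\theta\ll q^*_{subj}$ so that the right-hand side is finite; if it is $+\infty$, the inequality holds trivially. I would handle this by assuming the expected subjective risk $\E_\theta[R_{q_\theta}(q^*_{subj})]$ is finite, which is implicit in $q^*_{subj}$ being a minimiser, and by noting that $\bar q$ lies in the relative interior wherever any $q_\theta$ has mass, so that $\nabla\phi(\bar q)$ is well defined on the relevant support.
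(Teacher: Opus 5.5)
Your proposal is correct and follows essentially the same route as the paper. The paper obtains the bound from the preceding decomposition $\E_\theta[B_\phi(q_\theta,q^*_{subj})]=B_\phi(\bar q,q^*_{subj})+\E_\theta[B_\phi(q_\theta,\bar q)]$ (the three-point identity, whose cross term vanishes because $\bar q=\E_\theta[q_\theta]$). It then drops the non-negative first term and identifies the second with $I(\hat Y;\Theta)$ under log loss via the preceding theorem.
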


\noindent Consequently, mutual information captures only one component of subjective estimation error, that represented by posterior disagreement. It ignores the additional error induced by restricting predictions to the hypothesis class.

\subsection{Summary}

In this section, we took a first step towards placing UQ within a statistical learning-theoretic framework.  By replacing {\em objective} risk with {\em subjective} risk, we obtained analogous learning-theoretic notions such as the best-in-class model, and the approximation/estimation decomposition. This retains the familiar structure of classical learning theory, but now describes the properties of {\em subjective risk}.
Under log loss, the subjective estimation error contains the mutual information $I(\hat{Y};\Theta)$, the epistemic uncertainty.
The approximation component has some familiar characteristics, but also some new ones---such as potentially being negative.  

Overall, these connections suggest that subjective risk may provide an organising principle for the theoretical study of uncertainty quantification. Just as objective risk supports analyses of consistency, generalization, sample complexity, and approximation--estimation trade-offs, the corresponding subjective quantities invite analogous questions about how predictive uncertainty changes with data, model capacity, and the learning algorithm.  Developing such results is beyond the scope of the present work, but the definitions established here provides a starting point for a fuller learning theory of uncertainty quantification.

\clearpage
\newpage

\section{Discussion}
\label{sec:discussion}

We present a brief discussion of relations to previous work.

\subsection{Convexity and the Representability of Aggregate Beliefs}

The role of convexity in our framework admits a useful interpretation in light of
\citet{walley1991statistical}'s treatment of uncertainty through sets of probability
distributions. In imprecise probability, closed convex sets of distributions are commonly
used to represent coherent lower previsions. Convexification concerns the representation
of a set of beliefs, rather than the coherence of each individual probability distribution.
The connection to our work should be understood as interpretive: $\gP$ is an ambient class of admissible distributions.

We assume that $\gP$ is convex, whereas the operational hypothesis class
$\gH \subseteq \gP$ may be non-convex. Given a random predictive distribution
$q_\theta \in \gH$, its aggregate predictive belief is
    $\bar q := \mathbb{E}_\theta[q_\theta]$.
Under the assumed mixture closure of $\gP$, we have $\bar q \in \gP$. If $\gH$ is
also convex and closed under the relevant mixture, then $\bar q \in \gH$. By contrast,
when $\gH$ is non-convex, the aggregate $\bar q$ may lie outside $\gH$, even though
each $q_\theta$ and $\bar q$ is itself a valid probability distribution. Thus, closure
under mixtures is a statement about the representability of aggregate beliefs; it does
not require every mixture to be regarded as a substantively plausible data generating
distribution.

A non-convex hypothesis class should therefore not be interpreted as an incoherent
belief system. Rather, it may be unable to represent, as a single model, the aggregate
predictive belief induced by uncertainty over its members. This distinction manifests
concretely in \autoref{eq:estim_twoterms}, where the subjective estimation error is
\[
    \mathbb{E}_\theta
    \left[
        B_\phi \left(q_\theta,q^*_{\mathrm{subj}}\right)
    \right]
    =
    B_\phi \left(\bar q,q^*_{\mathrm{subj}}\right)
    +
    \mathbb{E}_\theta
    \left[
        B_\phi \left(q_\theta,\bar q\right)
    \right].
\]
The second term on the right is the loss-induced Bregman information, is regarded as an
epistemic uncertainty in our framework, and coincides with mutual information under
log loss. The first term is a directional, loss-dependent representational gap between
the aggregate belief $\bar q$ and its best-in-class representation
$q^*_{\mathrm{subj}}$. It vanishes whenever $\bar q \in \gH$---in particular, when
$\gH$ is convex and closed under the relevant mixture---but it may also vanish for a
particular aggregate even when $\gH$ is non-convex. Non-convexity therefore makes a
representational gap possible, but not inevitable.
The decomposition consequently separates two distinct effects: disagreement among
the predictive distributions around their aggregate, and restrictions of the hypothesis
class that prevent this aggregate from being represented by a single admissible model.

\subsection{The Aleatoric/Epistemic View Is Overloaded}
The closest work to our own is \citet{smith2024rethinking}, who
identify incoherence in existing discussions 
across the UQ literature. They state that the community is looking for

\begin{quote}
{\em \small ``Researchers are looking for
concrete
notions of a model’s predictive uncertainty and how that uncertainty
might or might not change with more data} [...] {\em\small but also related notions of predictive performance and data
dispersion. The aleatoric-epistemic view cannot satisfy all
these needs: many concepts stand to be defined, while the
view fundamentally only has capacity for two concepts.''} \small \citep[pg. 1]{smith2024rethinking}
\end{quote}

To remedy this, they suggest a decision-theoretic perspective.
Their starting point is a
decision of interest with an associated loss $\ell$.
In particular, they define predictive uncertainty as the subjective expected loss of acting Bayes-optimally under a model's beliefs.
They proceed to define epistemic uncertainty as how much this is expected to reduce once future data is observed under a data-acquisition policy and an updating scheme.
To account for predictive performance and data dispersion, they separately examine the (objective) risk and entropy of a reference distribution $p_{eval}$, which they stress need not be the true data-generating distribution and may itself be imperfect.
In summary, they treat uncertainty, external evaluation, and data dispersion as {\em separate} objects.

We share their starting point, but we instead evaluate the subjective risk at the true distribution, $R_q(p) = \E_{\hat{Y}\sim q}[\ell(p, \hat{Y})]$. We consider it as a quantity that varies with the modelling process, and decompose its expectation. Hence with our approach, the external grounding is built into the risk itself: the bias term accounts for external grounding, while the variance and generalized entropy terms recover the uncertainty quantities.


\subsection{Risk-Based Views of Uncertainty}
Several proposals define uncertainty terms by decomposing a risk ~\citep{Kotelevskii2022,lahlou2023}. A risk functional has the form $\E_{\hat{Y} \sim z}[\ell(a, \hat{Y})]$, where the distribution $z$ supplies the outcomes and the reported action $a$ is scored against them. In our subjective risk, $z$ is the model $q_\theta$ and $a$ is the action $a_p$ induced by reality. The objective risk takes reality as the outcome distribution and the model as the reported action. Existing proposals differ in what occupies each slot, and in which of the two is treated as random. Each choice leads to a different decomposition. 

\paragraph{Incomplete measures.} \citet{jimenez_2026position} present a position arguing that epistemic uncertainty estimation methods are fundamentally incomplete.
Following \citet{shaker2020aleatoric}, they discuss uncertainty in the context of statistical learning theory. Critically, they only consider properties of the {\em objective} risk---they utilise results from \citet{brown2024notthesame} to argue that certain aspects are missing from the community's understanding of uncertainty.
In particular, they consider model bias as a reducible part of epistemic uncertainty that current estimators fail to capture, and conclude on this basis that epistemic estimates are incomplete.
We instead view the bias an Oracle external grounding rather than an aspect of uncertainty.


\paragraph{Evaluation against oracle uncertainty.} The UQ community is relying on proxy tasks to evaluate uncertainty quantification methods, as ground truth is not available. \citet{wizgall2026unified} address this by constructing a semi-synthetic regression benchmark. Their construction protocol retains real covariates from standard tabular datasets, such as Abalone \citep{nash1994abalone}, Airfoil Self-Noise \citep{brooks1989airfoil}, and kin8nm \citep{ghahramani1996kin}. It then creates synthetic targets by sampling a Gaussian process prior calibrated on the original task, so that oracle uncertainty targets are computable by construction. The result is a controlled testing environment in which the oracle epistemic and aleatoric uncertainties are mathematically known, and which is then used to evaluate existing uncertainty methods, such as MC Dropout, Deep Ensembles, evidential networks, by comparing their estimated uncertainty scores against the oracle targets.

To define those targets, they introduce the sample-conditional posterior risk: the expected loss of a fixed predictor when the ground truth is drawn from a posterior over latent functions given the observed data.  %

Decomposing this risk yields the expected Bayes risk under each ground truth, a variance of the Bayes acts these ground truths induce, and a bias of the deployed predictor relative to the centroid of those Bayes acts. The deployed model enters only through the bias, and the remaining terms are independent of the predictor. Our subjective view instead averages over models and evaluates against a fixed reality, whereas they average over ground truths and hold the predictor fixed. 

They present their unified view as \emph{``a compromise between two incomplete endpoints''}. If the deployed predictor is the posterior mean, the bias vanishes and epistemic uncertainty is the posterior spread, which recovers the Bayesian measures. If instead the distribution over plausible ground truths degenerates to a single one, the spread vanishes and what is left is the pointwise excess risk.

Interestingly, although \citet{wizgall2026unified} view the resampling view as one that \emph{``addresses a different inferential question''}, they share the same position on the bias as \citet{jimenez_2026position}. Both treat it as a component of epistemic uncertainty, both find empirically that reported uncertainty estimates track the variance while sensitivity to the bias term is weak, and both consider this a deficiency: methods such as MC Dropout and Deep Ensembles are evaluated as pre-existing approximations to a true oracle uncertainty that the framework reveals to be systematically incomplete. Under our framework the finding is expected rather than a failure.

Furthermore, both \citet{jimenez_2026position} and \citet{wizgall2026unified} conduct their experiments under MSE. Because the squared loss is the only symmetric Bregman divergence \citep{Nielsen2009}, relying on this loss blurs the fundamental distinction between objective and subjective risk: under MSE, the two differ only in which quantity is randomised. The distinction we draw is only clear for asymmetric losses (Section~\ref{sec:goodUncertaintyMeasure}).

\paragraph{A family of measures from proper scoring rules.} Just as in our work, \citet{proper_regression}, building on~\citet{kotelevskii2025from} also derive uncertainty measures from proper scoring rules and risk decomposition.
Their starting point however is
what we would refer to as `objective' risk,
evaluating expected loss against the true distribution.
They consider Bayesian approximations to the risk expressions, substituting various estimates for the unavailable true distribution.
Since the approximations can be inserted into either argument of the induced divergence, this generates several candidate measures. By enumerating all combinations of argument orderings, they recover many previously proposed epistemic and aleatoric terms as special cases.

The ``all combinations'' strategy produces the measures, but does not say which is appropriate for a given modelling scenario, or explain why they behave differently. We instead fix the modelling and evaluation choices first and obtain a single bias--variance--entropy decomposition in which each term has a clear role. The particular measure that appears is then a {\em consequence} of the loss geometry rather than a free choice, and disagreements between measures are explained directly by differences in loss curvature (See \autoref{sec:goodUncertaintyMeasure}).

In the following section, we examine the different epistemic uncertainty measures that arise from varying the arguments and their order.

\subsection{The Various Proposals for Epistemic Uncertainty}\label{sec:relations} 

The seminal work of \citet{gal2016uncertainty} introduced the world to {mutual information} as a measure of epistemic uncertainty. In particular, written in the form $I(\hat{Y};\Theta) = \E_\theta\left[K(q_\theta\mid\mid\bar{q})\right]$, it has an appealing 
structure that has led to multiple very similar proposals, from various authors. Here we briefly review these, showing relations to our work.

For simplicity, we restrict discussion to the case of KL divergences on categorical distributions, but the conclusions hold more generally for arbitrary Bregman divergences.
As a preliminary step, we remind the reader of the definitions of {\em right} and {\em left} centroids, defined briefly earlier.
The right centroid is,
    $\bar q := \arg\min_{c\in\gP}  \E_\theta \Big[ K(q_\theta \mid\mid c) \Big] = \E_\theta[q_\theta]$,
and the left centroid is,
 $   \mathring{q} := \arg\min_{c\in\gP}  \E_\theta \big[K(c \mid\mid q_\theta)\big] \propto \exp\big( E_\theta \big[ \ln q_\theta \big]\big)$.
The left centroid here is a normalized geometric mean.  Note that this is the left centroid only for KL, and for general Bregman divergences $\mathring a$ is a {\em quasi-arithmetic} mean \citep{Nielsen2009}. 
\noindent Given this, we can state the  various epistemic uncertainty definitions---see \autoref{tab:manyforms}.


\begin{table}[h]
\centering\renewcommand{\arraystretch}{2}
\begin{tabular}{lll}
\toprule
\bf Name & \bf KL form & \bf First suggested by \\
\midrule
Mutual Information & $\E_{\theta} \big[ K (q_\theta \mid\mid \bar{q}) \big]$ & \small \citet{gal2016uncertainty} \\
Expected Pairwise KL (EPKL) & $\E_{\theta}\big[ \E_{\theta'} \left[ K(q_\theta \mid\mid q_{\theta'}) \right]\big]$ & \small\citet{malinin_2019}\\
Reverse Mutual Information (RMI) & $\E_\theta \big[ K(\bar{q}\mid\mid q_\theta) \big]$ & \small \citet{malinin2021uncertainty} \\
Modified Bregman information & $\E_\theta \big[ K(q_\theta \mid\mid \mathring{q}) \big]$ & \small\citet{kotelevskii2025from}\\
Reverse modified Bregman information & $\E_\theta\big[ K(\mathring{q}\mid\mid q_\theta) \big]$ & \small\citet{kotelevskii2025from}\\

\bottomrule
\end{tabular}
\caption{KL-based proposals for {\em epistemic} uncertainty.} 
\label{tab:manyforms}
\end{table}

These measures follow a common template, with each a variant of the original \emph{mutual information} form.
The template form is $\E_\theta\big[K(a,b)\big]$, and explores all possible substitutions of arguments $a,b$ ---e.g. with $a=\mathring{q}$, and $b=q_\theta$, we get the `reverse modified Bregman information'. As mentioned, exactly this, the strategy of `all possible combinations' is adopted by \citet{proper_regression}.
We note that only {\em two} of the terms are
`standard' variances in the sense that they are a Jensen gap.
These are the first and last rows above: $\E_\theta[K(q_\theta \mid\mid \bar{q})]$, and $\E_\theta[K(\mathring{q}\mid\mid q_\theta)]$.  As we showed earlier, the mutual information comes from a bias--variance decomposition of the subjective risk. The term $\E_\theta[K(\mathring{q}\mid\mid q_\theta)]$ is the corresponding variance from the {\em objective} risk decomposition.
These `standard' Bregman variances are also special in that they are guaranteed to be finite, given bounded $\phi$. 
Reversing the order of arguments, as in RMI and Modified Bregman information, yields non-standard quantities that can be infinite in some cases.

\newpage

\begin{proposition}[Centroid decomposition of the non-standard measures]
\label{prop:centroid-decomp}
Let $q_\theta$, $\theta\sim q(\cdot\mid D)$, be categorical distributions over $\mathcal Y$,
with arithmetic mean $\bar q=\E_\theta[q_\theta]$ and normalised geometric mean
$\mathring q\propto\exp\E_\theta[\ln q_\theta]$. 
Assume there exists at least one class $y \in \gY$ where $q_\theta(y) > 0$ almost surely, ensuring the normalized geometric mean $\mathring{q}$ is well-defined.
Then
\begin{align}
\E_\theta \big[K(q_\theta\mid\mid \mathring q)\big]
   &= \underbrace{\E_\theta\big[K(q_\theta\mid\mid \bar q)\big]}_{\text{mutual information}}
   + \underbrace{K(\bar q\mid\mid \mathring q)}_{\text{systematic gap}}, \label{eq:mod-decomp}\\
   \E_\theta \big[K(\bar q\mid\mid q_\theta)\big]
   &= \underbrace{\E_\theta\big[K(\mathring q\mid\mid q_\theta)\big]}_{\text{variance}}
   + \underbrace{K(\bar q\mid\mid \mathring q)}_{\text{systematic gap}}. \label{eq:rmi-decomp}
\end{align}
Both share the systematic gap $K(\bar q\mid\mid \mathring q)\ge 0$, which vanishes iff $\bar q=\mathring q$.
\end{proposition}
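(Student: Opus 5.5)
Both identities are instances of the three-point (Pythagorean-type) behaviour of KL at its two centroids, and I will verify them by direct expansion of logarithms. Write $Z:=\sum_{y}\exp\E_\theta[\ln q_\theta(y)]$, so that $\ln\mathring q(y)=\E_\theta[\ln q_\theta(y)]-\ln Z$. The assumption that some class has $q_\theta(y)>0$ almost surely gives $Z>0$, so $\mathring q$ is a well-defined categorical distribution. Note also that $Z\le 1$ by Jensen (the geometric mean is at most the arithmetic mean, summed over $y$), though this is not needed for the identities.

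For \eqref{eq:mod-decomp}, I expand
\[
K(q_\theta\mid\mid\mathring q)=K(q_\theta\mid\mid\bar q)+\sum_y q_\theta(y)\ln\frac{\bar q(y)}{\mathring q(y)}.
\]
Taking $\E_\theta$, the second sum is linear in $q_\theta$, so its expectation is $\sum_y\bar q(y)\ln(\bar q(y)/\mathring q(y))=K(\bar q\mid\mid\mathring q)$. This is the usual right-centroid identity: the right centroid is the arithmetic mean, and any linear-in-$q_\theta$ remainder collapses onto $\bar q$.

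For \eqref{eq:rmi-decomp}, I expand
\[
K(\bar q\mid\mid q_\theta)=K(\bar q\mid\mid\mathring q)+\sum_y\bar q(y)\ln\frac{\mathring q(y)}{q_\theta(y)}.
\]
Taking $\E_\theta$ and using $\E_\theta[\ln q_\theta(y)]=\ln\mathring q(y)+\ln Z$, the remainder becomes $-\ln Z\sum_y\bar q(y)=-\ln Z$. Separately,
\[
\E_\theta[K(\mathring q\mid\mid q_\theta)]=\sum_y\mathring q(y)\bigl(\ln\mathring q(y)-\E_\theta\ln q_\theta(y)\bigr)=-\ln Z.
\]
The two agree, which gives \eqref{eq:rmi-decomp}. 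The key point is that the left-centroid variance equals $-\ln Z$ independently of which distribution is paired in the first slot, so it is unchanged when $\mathring q$ is replaced by $\bar q$.

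Non-negativity of $K(\bar q\mid\mid\mathring q)$, with equality iff $\bar q=\mathring q$, is Gibbs' inequality for two probability vectors.

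The main obstacle is rigor with infinities and zeros. Where some $q_\theta(y)=0$ on a set of positive measure, $\mathring q(y)=0$ while possibly $\bar q(y)>0$, and then several terms become $+\infty$. I will therefore state the identities in $[0,\infty]$ and check that the manipulations are valid. When all quantities are finite, linearity of expectation applies directly. Otherwise I will show that both sides are infinite simultaneously: for instance, if $\bar q(y)>0=\mathring q(y)$, then $K(\bar q\mid\mid\mathring q)=\infty$ and the left-hand side is also infinite. Exchanging $\E_\theta$ with the finite sum over $y$ is unproblematic because $\mathcal Y$ is finite; only integrability of $\ln q_\theta(y)$ needs a remark, using the convention that $\E_\theta\ln q_\theta(y)=-\infty$ corresponds to $\mathring q(y)=0$.
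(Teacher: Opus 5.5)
Your proposal is correct and follows essentially the paper's route: the paper applies the Bregman three-point identity with midpoint $\bar q$ and then $\mathring q$ and lets the cross term vanish in expectation, which for $\phi(q)=\sum_y q(y)\ln q(y)$ is exactly your logarithmic expansion. Your explicit tracking of $Z$ is more precise than the paper's justification $\E_\theta[\nabla\phi(q_\theta)]=\nabla\phi(\mathring q)$, which holds only up to the constant vector $(\ln Z)\mathbf{1}$ (harmless because $\bar q-\mathring q$ sums to zero), and your handling of the infinite cases fills a gap the paper leaves open; the one caveat, inherited from the statement itself, is that almost-sure positivity of some $q_\theta(y)$ yields $Z>0$ only when additionally $\E_\theta[\ln q_\theta(y)]>-\infty$, which is automatic for finitely supported posteriors such as ensembles but not in general.
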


\mc{Proposition 9 and the following finiteness claim need a common support or interiority assumption. For example, with \(q_1=(1,0)\) and \(q_2=(0,1)\), the normalized geometric mean is undefined and the left centroid variance has no finite representative. Boundedness of \(-H\) does not by itself guarantee finiteness. We should assume \(q_\theta(y)>0\) almost surely for every class, together with the necessary log integrability, or remove the finiteness claim. Also, the absolute continuity direction on the next page is reversed: \(K(a\Vert b)<\infty\) requires \(a\ll b\), meaning that \(b(y)>0\) wherever \(a(y)>0\).}

These non-standard quantities can therefore report `more uncertainty' not because the models are more spread out, but because $\bar q$ and $\mathring q$ diverge structurally --- a systematic effect, rather than genuine disagreement among the models.

Consider a categorical model over $K$ classes under the log loss, so the generator
$\phi=-H$ is bounded on the simplex. The two standard variances
$\E_\theta[K(q_\theta\mid \mid \bar q)]$ and
$\E_\theta[K(\mathring q\mid \mid q_\theta)]$ are then finite, since each is a Jensen gap
of the bounded generator $\phi$. The non-standard measures reverse the arguments, and by
Proposition~\ref{prop:centroid-decomp} each equals a standard term plus the systematic gap
$K(\bar q \mid\mid \mathring q)$. This gap, however, can be infinite. Recall that a KL divergence $K(a\mid\mid b)$ is finite only when $a$ is absolutely continuous with respect to $b$, i.e.\ when $b$ assigns positive mass to every outcome on which $a$ does. In this example, $\mathring q(y)=0$ if \emph{any} $q_\theta(y)=0$, whereas $\bar q(y)>0$ whenever \emph{any} $q_\theta(y)>0$. Therefore, any infiniteness of $\E_\theta[K(q_\theta\mid \mid \mathring q)]$ and $\E_\theta[K(\bar q \mid \mid q_\theta)]$ comes from the systematic gap $K(\bar q \mid \mid \mathring q)$.

The Expected Pairwise KL, $\E_{\theta}\big[ \E_{\theta'} \left[ K(q_\theta \mid\mid q_{\theta'}) \right]\big]$,
was proposed as an epistemic uncertainty by \citet{malinin_2019}  and later analysed in depth by \citet{schweighofer2023introducing}, who found it to have several favourable properties.
It is particularly interesting as it makes intuitive sense as a measure of uncertainty---taking all possible pairwise disagreements. However, like the non-standard measures, it can be infinite as it carries the systematic gap.
Theorem \ref{asymmetryDecompositionOfMutualInformation} shows that the EPKL is in fact {\em one term} in an expansion of the mutual information, and in many cases, the dominant term.

%
%
\begin{theorem}[Asymmetry decomposition of mutual information]\label{asymmetryDecompositionOfMutualInformation}
    Let \( q_\theta := q(\cdot \mid \theta) \) with \( \theta \sim q(\cdot \mid D) \). Then
\begin{equation}
\label{eq:MI_EPKL_RMI.}
I(\hat Y; \Theta)
=
\underbrace{\mystrut \frac{1}{2}\E_{\theta,\theta'} \Big[ K(q_\theta \mid\mid q_{\theta'})\Big]}_{\textnormal{symmetric component}}
+
\underbrace{\mystrut \frac{1}{2} \E_{\theta}
\Big[
K(q_\theta \mid\mid \bar{q})
-
K(\bar{q} \mid\mid q_\theta)
\Big],}_{\textnormal{asymmetric component}}
\end{equation}
where $\bar{q} = \E_\theta[q_\theta]$ and $\theta, \theta' \overset{\text{i.i.d.}}{\sim} q(\cdot \mid D)$
\end{theorem}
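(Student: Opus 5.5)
The plan is to reduce the identity to a statement about cross-entropies, using only the fact that $\bar q = \E_\theta[q_\theta]$ is linear in $q_\theta$. Write every KL as cross-entropy minus entropy, $K(a\mid\mid b) = -\sum_y a(y)\ln b(y) - H(a)$. We then work with the quantities $C := \E_{\theta,\theta'}\big[-\sum_y q_\theta(y)\ln q_{\theta'}(y)\big]$, $\bar H := \E_\theta[H(q_\theta)]$, $H(\bar q)$, and the mixed term $M := \E_\theta\big[-\sum_y \bar q(y)\ln q_\theta(y)\big]$. Since $\theta,\theta'$ are i.i.d., we can take the expectation over $\theta$ inside the first argument. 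This gives
\begin{equation*}
C = \E_{\theta'}\Big[-\textstyle\sum_y \bar q(y)\ln q_{\theta'}(y)\Big] = M .
\end{equation*}
This identity is the whole trick.

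Next we express the three pieces in these terms. First, $I(\hat Y;\Theta) = \E_\theta[K(q_\theta\mid\mid\bar q)] = H(\bar q) - \bar H$, which uses the same linearity, since $\E_\theta[-\sum_y q_\theta\ln\bar q] = H(\bar q)$. Second, the symmetric term is $\E_{\theta,\theta'}[K(q_\theta\mid\mid q_{\theta'})] = C - \bar H = M - \bar H$. Third, the reverse term is $\E_\theta[K(\bar q\mid\mid q_\theta)] = M - H(\bar q)$. Substituting these into the right-hand side of \eqref{eq:MI_EPKL_RMI.} gives
\begin{equation*}
\tfrac12(M-\bar H) + \tfrac12\big(H(\bar q)-\bar H\big) - \tfrac12\big(M - H(\bar q)\big) = H(\bar q) - \bar H = I(\hat Y;\Theta).
\end{equation*}
This completes the identity. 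An equivalent check is that the EPKL equals $\mathrm{MI} + \mathrm{RMI}$, which is the same identity rearranged.

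The main obstacle is not algebraic but one of well-definedness. The expectations $M$ and $C$ may be $+\infty$, since the paper itself notes that the EPKL and RMI can be infinite. In that case the subtraction $M-M$ above is illegitimate. To handle this, I would first prove the identity under the assumption that $M<\infty$, for instance when every $q_\theta$ has full support with integrable $\ln q_\theta$. Under that assumption all terms are finite and the rearrangements are justified by linearity of expectation. The interchange of $\E_\theta$ with the finite sum over $\mathcal Y$ follows by Fubini/Tonelli, because the integrands $-\ln q_{\theta'}(y)$ are nonnegative.

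The mutual information and $H(\bar q)$, $\bar H$ are always finite on a finite $\mathcal Y$. So when $M=\infty$, both the symmetric and the asymmetric component are infinite with opposite signs, and I would state the theorem as holding whenever the EPKL (equivalently the RMI) is finite.
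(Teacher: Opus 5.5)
Your proof is correct, and it is more self-contained than the paper's. The paper rearranges the claimed identity into $\mathrm{RMI}=\mathrm{EPKL}-\mathrm{MI}$ and then cites \citet[Eq.~6]{malinin2021uncertainty} for it, so the real work is outsourced. You instead prove $\mathrm{EPKL}=\mathrm{MI}+\mathrm{RMI}$ directly. Your observation $C=M$ (independence of $\theta,\theta'$ plus linearity of the cross-entropy in its first argument, justified by Tonelli since $-\ln q\ge 0$) is exactly the step the citation hides.

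Your route gains two things. First, it makes the hypotheses explicit. The paper handles $\E_{\theta,\theta'}[K(q_\theta\mid\mid q_{\theta'})]$ and $\E_\theta[K(\bar q\mid\mid q_\theta)]$ as if they were finite, yet it remarks just before the theorem that both can be infinite, in which case the right-hand side is $(+\infty)+(-\infty)$. Your restriction to finite EPKL is therefore the correct formulation. It is equivalent to finite RMI, since $\mathrm{MI}$, $H(\bar q)$ and $\E_\theta[H(q_\theta)]$ are all at most $\ln|\gY|$.

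Second, it gains generality. Your argument only uses that the cross term is affine in its first argument. Replacing $-\sum_y a(y)\ln b(y)$ by $-\phi(b)-\langle\nabla\phi(b),a-b\rangle$ gives $\E_{\theta,\theta'}[B_\phi(q_\theta,q_{\theta'})]=\E_\theta[B_\phi(q_\theta,\bar q)]+\E_\theta[B_\phi(\bar q,q_\theta)]$ for any Bregman generator. This substantiates the paper's remark that the conclusions extend beyond KL.

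One minor scope point: your Tonelli step and the finiteness of the entropies rely on a finite $\gY$. That matches the categorical restriction of that subsection. For densities you would avoid differential entropies and instead split $\ln(q_\theta/q_{\theta'})=\ln(q_\theta/\bar q)+\ln(\bar q/q_{\theta'})$, under suitable integrability conditions.
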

Thus, the mutual information contains a \emph{symmetric} component, and an \emph{asymmetric} one. The following theorem characterises when the asymmetric term is negligible. 

\begin{theorem}[EPKL is a 2nd-order approximation of Mutual Information] \label{smallDisagreementExpansion}
\label{thm:smallDisagreementExpansion}
When the distributions involved are close, i.e.\ 
$$q_\theta(y)=q_{\theta'}(y)\bigl(1+\epsilon h(y)\bigr), \qquad \int h(y)q_{\theta'}(y) d\nu(y)=0,$$
with $\epsilon$ small, the symmetric component dominates: 
\begin{equation}
I(\hat{Y};\Theta) ~=~ \frac{1}{2}\E_{\theta,\theta'}\Big[K(q_\theta\mid\mid q_{\theta'})\Big] + O(\epsilon^3). 
\end{equation}
This follows because
each $K(q_\theta, q_{\theta'})$ in the symmetric term is of order $\epsilon^2$, while each asymmetric difference $K(q_\theta, \bar q) - K(\bar q, q_\theta)$ is of order $\epsilon^3$. 
\noindent  When $\epsilon$ is small, this implies, 
\begin{equation}
    \E_{\theta,\theta'} \Big[K(q_\theta \mid\mid q_{\theta'})\Big] ~\approx~ 2I(\hat{Y};\Theta).
\end{equation}
\end{theorem}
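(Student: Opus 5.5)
The plan is to obtain the result as a corollary of Theorem~\ref{asymmetryDecompositionOfMutualInformation}. That theorem already gives
\[
I(\hat Y;\Theta)=\tfrac12\E_{\theta,\theta'}\big[K(q_\theta\mid\mid q_{\theta'})\big]+\tfrac12\E_\theta\big[K(q_\theta\mid\mid\bar q)-K(\bar q\mid\mid q_\theta)\big],
\]
so it suffices to prove two things:
\begin{itemize}
\item each pairwise $K(q_\theta\mid\mid q_{\theta'})$ is $O(\epsilon^2)$;
\item each asymmetric difference $K(q_\theta\mid\mid\bar q)-K(\bar q\mid\mid q_\theta)$ is $O(\epsilon^3)$.
\end{itemize}
Throughout I will read the hypothesis as holding for every pair $(\theta,\theta')$, with $h=h_{\theta\theta'}$ satisfying a uniform bound $\sup_{\theta,\theta',y}|h_{\theta\theta'}(y)|\le C$. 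This bound is needed so that Taylor remainders can be integrated and averaged over $\theta$ with uniform constants; I will state it explicitly as a regularity assumption.

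\textbf{Step 1: a generic expansion.} Let $b$ be a density and $a=b(1+\epsilon g)$ with $\int g\,b\,d\nu=0$ and $|g|\le C$. Expanding $\ln(1+\epsilon g)$ to third order, with the zero-mean condition killing the linear terms, gives
\[
K(a\mid\mid b)=\int b(1+\epsilon g)\ln(1+\epsilon g)=\tfrac{\epsilon^2}{2}\!\int b g^2-\tfrac{\epsilon^3}{6}\!\int b g^3+O(\epsilon^4),
\]
\[
K(b\mid\mid a)=-\int b\ln(1+\epsilon g)=\tfrac{\epsilon^2}{2}\!\int b g^2-\tfrac{\epsilon^3}{3}\!\int b g^3+O(\epsilon^4).
\]
Hence both divergences are $O(\epsilon^2)$, and their difference is $\tfrac{\epsilon^3}{6}\int b g^3+O(\epsilon^4)=O(\epsilon^3)$, because the second-order terms cancel. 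Applying this with $b=q_{\theta'}$ and $g=h$ immediately gives the symmetric claim, $K(q_\theta\mid\mid q_{\theta'})=O(\epsilon^2)$.

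\textbf{Step 2: perturbation around the mixture.} The asymmetric term compares $q_\theta$ with $\bar q$ rather than with some $q_{\theta'}$, so I must show $q_\theta$ is also an $\epsilon$-perturbation of $\bar q$.
\begin{enumerate}
\item Fix a reference $\theta'$ and average $q_\theta=q_{\theta'}(1+\epsilon h_{\theta\theta'})$ over $\theta$. This gives $\bar q=q_{\theta'}(1+\epsilon\bar h)$, where $\bar h=\E_\theta h_{\theta\theta'}$.
\item Taking the ratio, $q_\theta=\bar q(1+\epsilon g_\theta)$ with $g_\theta=(h_{\theta\theta'}-\bar h)/(1+\epsilon\bar h)$.
\item For $\epsilon<1/(2C)$ this gives $|g_\theta|\le 4C$.
\item The zero-mean condition $\int g_\theta\,\bar q=0$ holds exactly, since $q_\theta$ and $\bar q$ both integrate to one.
\end{enumerate}
Step 1 with $b=\bar q$ then gives $K(q_\theta\mid\mid\bar q)-K(\bar q\mid\mid q_\theta)=O(\epsilon^3)$ uniformly in $\theta$. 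Taking $\E_\theta$ and inserting into Theorem~\ref{asymmetryDecompositionOfMutualInformation} yields $I(\hat Y;\Theta)=\tfrac12\E_{\theta,\theta'}[K(q_\theta\mid\mid q_{\theta'})]+O(\epsilon^3)$. Since the retained term is of order $\epsilon^2$, this also gives the stated approximation $\E_{\theta,\theta'}[K]\approx 2I(\hat Y;\Theta)$.

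\textbf{Main obstacle.} The delicate part is making the remainders rigorous rather than formal. The Taylor remainder $O(\epsilon^4)$ must be integrable against $b$ and bounded uniformly in $\theta$, which is where I would use the uniform bound on $h$; dominated convergence then justifies exchanging $\E_\theta$ with the expansion. A related subtlety is the re-centring in Step 2: it requires the perturbation $g_\theta$ around $\bar q$ to be controlled by the pairwise ones, and the formula $g_\theta=(h_{\theta\theta'}-\bar h)/(1+\epsilon\bar h)$ together with $\epsilon<1/(2C)$ provides exactly that control.
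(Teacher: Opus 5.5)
Your proposal is correct and follows essentially the same route as the paper: both combine the identity $I(\hat Y;\Theta)=\tfrac12\mathrm{EPKL}+\tfrac12 A$ from Theorem~\ref{asymmetryDecompositionOfMutualInformation} with a third-order expansion of the two KL directions, in which the second-order terms cancel and leave $A=\tfrac{\epsilon^3}{6}\E_\theta\!\int \bar q\, g_\theta^3+O(\epsilon^4)$ under a uniform sup-norm bound on the perturbations. The one difference is your re-centring Step~2, which derives the expansion around $\bar q$ from the pairwise hypothesis as the theorem actually states it; the paper's appendix instead tacitly switches to a common-base model $q_\theta=q_0(1+\epsilon r_\theta)$ with $\E_\theta[r_\theta]=0$, so that $\bar q=q_0$ holds by assumption, and your version is the more faithful of the two.
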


\noindent {\bf Proof:} See \autoref{app:sec5proofs}.\\

\mc{Theorem 11 and its appendix proof are not currently well defined. The proof introduces \(q_0\) and \(r_\theta\) without first defining
\[
q_\theta^\epsilon=q_0(1+\epsilon r_\theta),
\]
whereas the theorem begins with the pairwise relation \(q_\theta=q_{\theta'}(1+\epsilon h)\). We should replace both by a single common base perturbation model, with \(\int q_0r_\theta d\lambda=0\), \(\mathbb{E}_\theta[r_\theta]=0\), and uniformly bounded \(r_\theta\). The title of Theorem 11 should also say that \(\tfrac12\mathrm{EPKL}\) approximates MI, since \(\mathrm{EPKL}\approx 2 \mathrm{MI}\). Given our current deadline (Thursday, if I understood correctly), I would cut this theorem and Figure 3 unless the proof is fully repaired.}

\noindent The claim is supported by the empirical results in  \citet[Figure~2]{schweighofer2023introducing}; we replicate selected cases in Figure~\ref{fig:relations}. %

\begin{figure}[th]
\centering
\includegraphics[width=\textwidth]{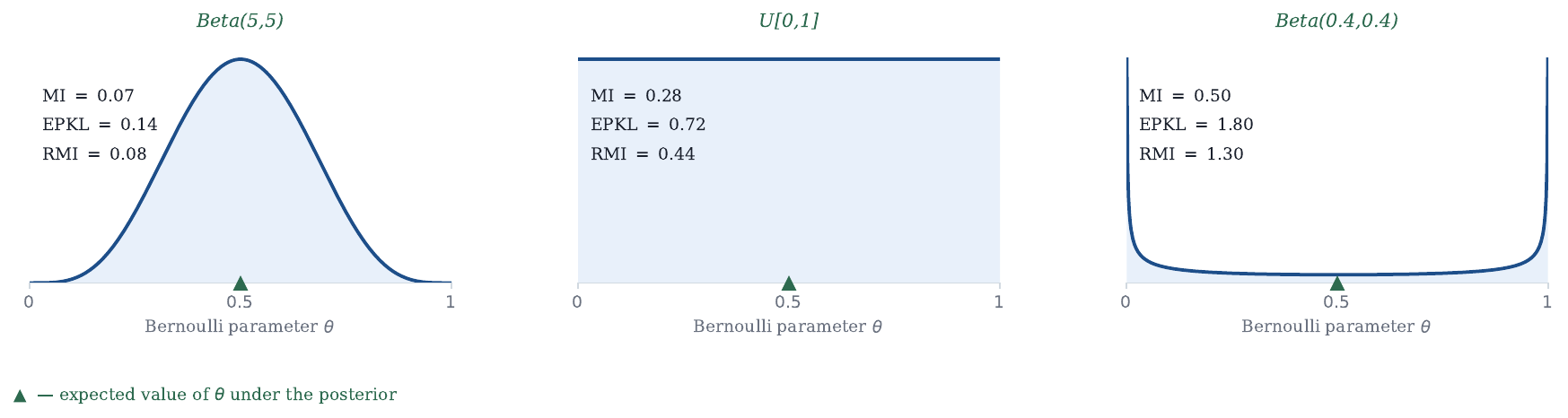}
\caption{
Different posteriors $q(\theta \mid D)$ for the parameter $\theta$ of a Bernoulli.
As the variance of $q(\theta\mid D)$ increases (i.e. $\epsilon$ grows from left to right),
the 2nd-order approximation of Theorem \ref{smallDisagreementExpansion}, EPKL $\approx 2~\times$ MI,  
becomes less accurate.} %
\label{fig:relations}
\end{figure}



\subsection{The axiomatic approach, e.g. \citet{pmlr-v216-wimmer23a}}
Following \citet{hullermeier22a}, the axiomatic approach to evaluating uncertainty measures has been widely adopted. \citet{pmlr-v216-wimmer23a} showed that the entropy-based measures violate a set of axioms. Their argument against mutual information and entropy can be summarised as: (1) more spread in the second-order distribution should increase uncertainty; (2) the uniform distribution represents maximal uncertainty, i.e. full ignorance; and (3) a location shift that preserves spread should not change uncertainty. These correspond to axioms A3, A2 and A5 respectively\footnote{Axiom numbering varies across papers; we follow
\citet{pmlr-v216-wimmer23a}.}.
However, once we view uncertainty as a derived component, these axioms can be understood as constraints on the loss geometry rather than on the measure itself.

\mc{I feel this wording may sound too prescribing. An axiom violation may still reveal that a chosen loss geometry is unsuitable for a particular task. A stronger synthesis is that axioms can be interpreted as constraints on admissible loss geometries, rather than as universal axioms imposed directly on derived UQ components.}

\paragraph{Total uncertainty depends only on the mixture.}
Consider axiom~A3. In our view, the aleatoric term is the expected generalised entropy $\E_\theta[H_\ell(q_\theta)]$, a quantity that changes with our modelling scenario. Total uncertainty is the sum of all uncertainty components. The variance in \autoref{thm:subjectiveBV} is the Jensen gap $\E_\theta[B_\phi(q_\theta,\bar q)] = H_\ell(\bar q) - \E_\theta[H_\ell(q_\theta)]$, so the terms sum to $H_\ell(\bar q)$. This explains the behaviour of TU under changes to the distribution over models. Since $H_\ell(\bar q)$ is a functional of $\bar q$ alone, any two posteriors over $\theta$ that induce the same predictive mixture $\bar q$ must induce the same TU, regardless of how differently they place mass over models. For example, take the uniform posterior over $[0,1]$ and the Dirac mixture $\tfrac{1}{2}\delta_0 + \tfrac{1}{2}\delta_1$ as second-order distributions over the Bernoulli parameter $\theta$. Both induce the same predictive mixture $\bar q = \mathrm{Ber}(\tfrac{1}{2})$, and hence the same $\mathrm{TU} = H_\ell(\bar q) = \ln 2$, even though one spreads mass across the interior of the simplex and the other places it entirely at the extremes. We can still distinguish between different scenarios that share the same $\bar q$,
since the shape of the distribution over models is reflected in the split between the variance and the expected generalised entropy.

\paragraph{Should the uniform distribution carry more epistemic uncertainty than the Dirac mixture?} Our subjective risk view, which derives EU as the variance component, shows that what makes EU large is not how widely $\theta$ is distributed, but how far the resulting models sit from $\bar{q}$. Recall that both distributions in the previous example have mean $\tfrac{1}{2}$, so $\bar{q}$ sits at the centre of the simplex in both cases. Under the uniform, mass is spread continuously across the interior of the simplex, and the average KL-distance to $\bar{q}$ is small. However, the Dirac
mixture concentrates mass at the corners of the simplex, and puts every model as far from the mean $\bar q$ as the geometry allows. 

\paragraph{Should an uncertainty measure be translation invariant?}

This is the requirement of axiom~A5: a translation of the second-order
distribution that preserves its spread should leave the epistemic uncertainty
unchanged. The variance-based measures satisfy this
\citep{salelabel}, while the entropy-based measures do not
\citep{pmlr-v216-wimmer23a}. To explain this, we consider a credal-set example.
Let $q_1,q_2$ be Bernoulli distributions, with mean parameters $\theta_1=0.45$ and $\theta_2=0.85$. These define a {\em credal set} of points, in the interval between $\theta_1$ and $\theta_2$. We consider the uniform distribution on this interval.  The {\em volume} of that set is assumed to be the length of the line segment, i.e. $|0.85-0.45|=0.4$.  Let us now {\em translate} (shift) that segment, with the uniform weighting,  to  $\theta^{'}_{1}=0.3, \theta^{'}_2=0.7$. The length of the segment is identical, but TU/AU/EU as reported by Gal's terms {\em all} change, a fact contested as unintuitive by \citet{pmlr-v216-wimmer23a}, see Figure~\ref{fig:wimmer_shift}. %
\begin{figure}[h]
    \centering
    \includegraphics[width=0.8\linewidth]{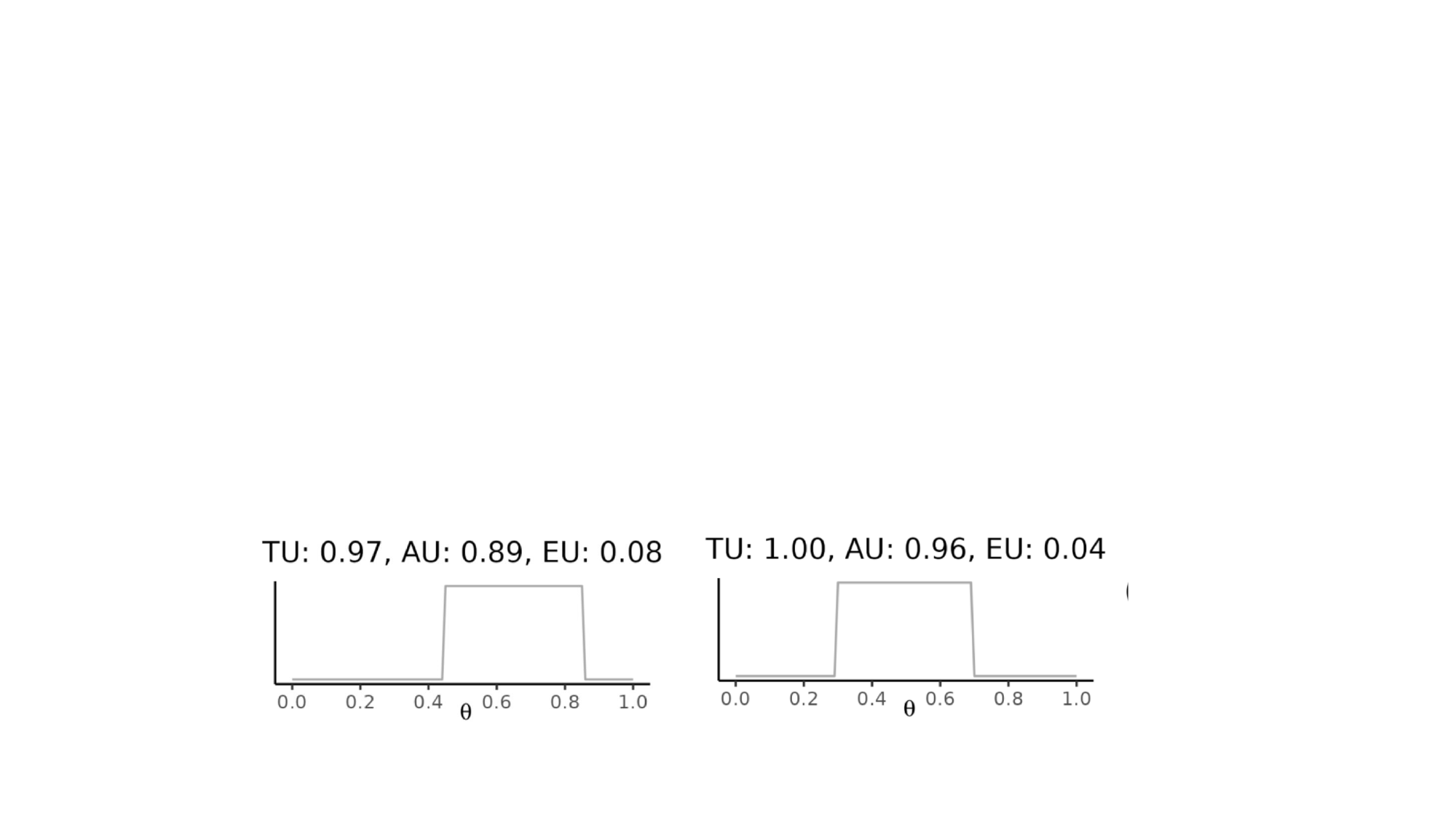}
    \caption{Gal's TU/AU/EU for two credal sets, computed under the uniform distribution over the interval. Figure credit \citet{pmlr-v216-wimmer23a}.}
    \label{fig:wimmer_shift}
\end{figure}

The viewpoint of this paper shows that the observed variation in epistemic uncertainty is a direct consequence of the choice of evaluation functional. When uncertainty is defined via the expected reverse cross-entropy, the resulting variance is the Bregman information of the
negative-entropy generator, which coincides with the mutual information. 
This quantity measures the expected reverse KL divergence between the individual predictive distributions $q_{\theta}$ and their mean.

Reverse KL does not induce a flat geometry on the probability parameter. Instead, it induces a geometry governed by the log-odds transformation $\eta = \log \left(\frac{p}{1-p}\right)$, where equal displacements in probability space do not correspond to equal displacements under this induced scale. In particular, a fixed-width interval in probability space corresponds to a larger KL dispersion when
located near \(0\) or \(1\) than when located near \(1/2\). 
The resulting change in epistemic uncertainty is therefore a property of the loss-induced geometry: reverse KL is more sensitive to deviations from the mean prediction in
regions of high or low probability; Figure~\ref{fig:translation-simplex} shows how translation acts in different coordinate systems. The squared-loss variance measure, known in the literature as variance-based, lives on the flat geometry of probability space, where translation preserves length and leaves the spread
unchanged.

\mc{The natural parameter argument is  heuristic.  There is a sexy exact local explanation, though: if \(\mu=\mathbb{E}[\theta]\) and the second order distribution is concentrated around \(\mu\), then
\[
I(\widehat Y;\Theta)
\approx
\frac{\operatorname{Var}(\theta)}
     {2\mu(1-\mu)}.
\]
Thus, equal probability space width produces larger MI near the boundary.}

\begin{figure}[th]
    \centering
    \includegraphics[width=0.69\linewidth]{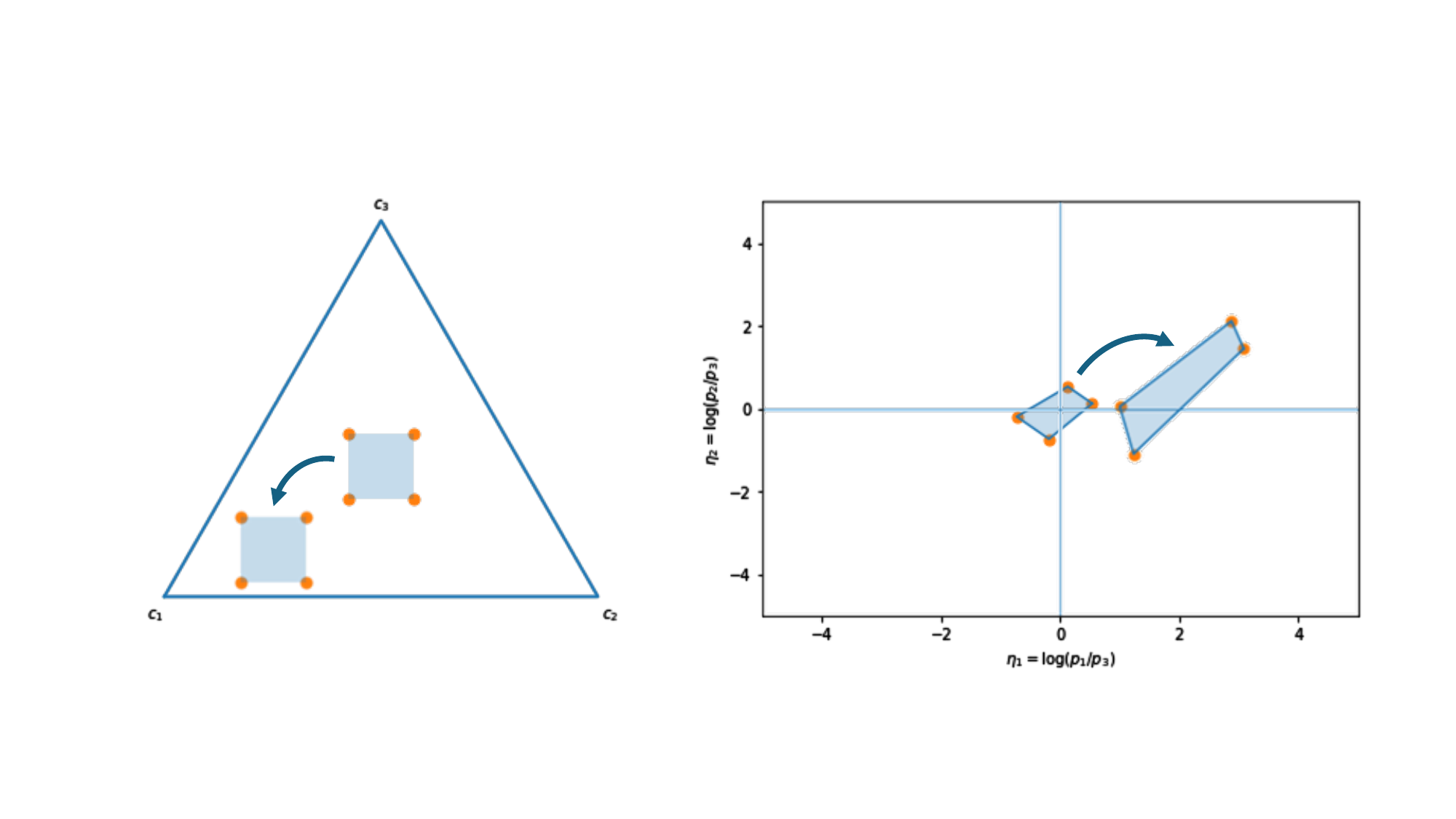}
    \caption{{\bf Translating a credal set in the simplex.}  On the left we see a credal set---we move this closer to a vertex, maintaining the same volume $0.0225$.  On the right we have the corresponding credal sets but represented in the natural parameter coordinate system---we see the volume and shape both change (from $\sim 0.76$ to $\sim 2.2$) due to the underlying geometry.  We argue that this coordinate system is more appropriate if EU is being evaluated under the KL geometry.}
    \label{fig:translation-simplex}
\end{figure}

From this view, the axioms proposed in~\citet{pmlr-v216-wimmer23a} are not so much disagreeing with the nature of information-theoretic UQ, but more fundamentally with the loss-geometry of the evaluation functional for the model.

\subsection{Summary}

We have considered recent literature with the view that 'uncertainty' can be understood as a derived quantity of the subjective risk. This view helps clarify some of the apparent issues with uncertainty measures.
Overall, these connections suggest that many disagreements in the literature reflect differences in the underlying loss geometry and modelling choices, rather than problems or contradictions.


\clearpage\newpage

\section{Conclusion}

We have presented a new view of uncertainty quantification. Rather than treating aleatoric and epistemic uncertainty as primitive quantities that must be independently defined, axiomatized, and defended, we view them as consequences of a more fundamental modelling choice: the risk under which predictive beliefs are evaluated.

Our starting point was Savage's notion of {\em subjective} risk under a strictly proper loss.  We adopt this not as a decision-making framework, but as an {\em evaluation functional} against reality.
Such a loss induces a {\em Bregman divergence} geometry, and the corresponding reverse bias--variance decomposition separates the expected subjective risk into systematic bias, variance between predictive distributions, and their generalized entropy.
%
%
These terms give intuitive meanings to epistemic and aleatoric uncertainty.
The epistemic term is the loss-induced Bregman variance of the random predictive distribution, while the aleatoric term is its expected generalized entropy.
The validity of this claim is evidenced by exactly recovering numerous previously published proposals as special cases, e.g. \citep{gal2016uncertainty, salelabel, bulte2025axiomatic,kendall2017,proper_regression,jimenez_2026position}.
%
%
%
Therefore, these measures should not be understood as competing alternatives, or mutually incompatible definitions: they arise from different losses, geometries, and assumptions.
%

%
%
%
We proceeded to explore connections with classical statistical learning theory, defining subjective risk analogues of notions such as approximation and estimation error, and their relation to bias--variance decompositions. This yielded various insights, including how existing measures of epistemic uncertainty can fit into a classical learning theoretic framework.

In conclusion, uncertainty measures need not be regarded as primitives, requiring independent justification. Instead, they emerge naturally from modelling and evaluation choices, through decompositions of the {\em subjective risk.}
This view reveals intriguing connections to statistical learning theory, via analogous definitions of approximation and estimation error for subjective risk.
Our hope is that this `new view' will help establish stronger theoretical foundations---if objective risk has been the organising principle of statistical learning theory, perhaps subjective risk can play the same role for uncertainty quantification.


\bibliography{refs}

@article{geman1992neural,
  title={Neural networks and the bias/variance dilemma},
  author={Geman, Stuart and Bienenstock, Elie and Doursat, Ren{\'e}},
  journal={Neural computation},
  volume={4},
  number={1},
  pages={1--58},
  year={1992},
  publisher={MIT Press}
}

@article{frigyik2008functional,
  title={Functional Bregman divergence and Bayesian estimation of distributions},
  author={Frigyik, B{\'e}la A and Srivastava, Santosh and Gupta, Maya R},
  journal={IEEE Transactions on Information Theory},
  volume={54},
  number={11},
  pages={5130--5139},
  year={2008},
  publisher={IEEE}
}

@inproceedings{malinin-rkl-2019,
  title={Reverse KL-Divergence Training of Prior Networks: Improved Uncertainty and Adversarial Robustness},
  author={Malinin, Andrey and Gales, Mark JF},
  booktitle={Neural Information Processing Systems},
  year={2019}
}

@inproceedings{
kirchhof2025reexamining,
title={Reexamining the Aleatoric and Epistemic Uncertainty Dichotomy},
author={Michael Kirchhof and Gjergji Kasneci and Enkelejda Kasneci},
booktitle={The Fourth Blogpost Track at ICLR 2025},
year={2025},
url={https://openreview.net/forum?id=lE7WZ2DpMq}
}

@article{savage1971elicitation,
  title={Elicitation of personal probabilities and expectations},
  author={Savage, Leonard J},
  journal={Journal of the American Statistical Association},
  volume={66},
  number={336},
  pages={783--801},
  year={1971},
  publisher={Taylor \& Francis}
}

@article{bottou2007tradeoffs,
  title={The tradeoffs of large scale learning},
  author={Bottou, L{\'e}on and Bousquet, Olivier},
  journal={Advances in neural information processing systems},
  volume={20},
  year={2007}
}

@misc{kirsch2024twitter,
  author       = {Various},
  title        = {X discussion thread on uncertainty and machine learning},
  year         = {2024},
  month        = {July},
  howpublished = {\url{https://x.com/BlackHC/status/1817556167687569605}},
  note         = {X (formerly Twitter) thread, accessed 2026-05-29}
}

@inproceedings{valdenegro2022deeper,
  title={A deeper look into aleatoric and epistemic uncertainty disentanglement},
  author={Valdenegro-Toro, Matias and Mori, Daniel Saromo},
  booktitle={2022 IEEE/CVF Conference on Computer Vision and Pattern Recognition Workshops (CVPRW)},
  pages={1508--1516},
  year={2022},
  organization={IEEE}
}

@misc{heskes2026,
      title={Bias-variance decompositions: the exclusive privilege of Bregman divergences}, 
      author={Tom Heskes},
      year={2026},
      eprint={2501.18581},
      archivePrefix={arXiv},
      primaryClass={cs.LG},
      url={https://arxiv.org/abs/2501.18581}, 
}

@inproceedings{brown2024notthesame,
  title={Bias-Variance is not the same as Approximation-Estimation},
  author={Gavin Brown and Riccardo Ali},
  booktitle={Transactions on Machine Learning Research},
  year=2024
}

@article{schweighofer2023introducing,
  title={Introducing an improved information-theoretic measure of predictive uncertainty},
  author={Schweighofer, Kajetan and Aichberger, Lukas and Ielanskyi, Mykyta and Hochreiter, Sepp},
  journal={arXiv preprint arXiv:2311.08309},
  year={2023}
}

@inproceedings{shaker2020aleatoric,
  title={Aleatoric and epistemic uncertainty with random forests},
  author={Shaker, Mohammad Hossein and H{\"u}llermeier, Eyke},
  booktitle={Advances in Intelligent Data Analysis XVIII: 18th International Symposium on Intelligent Data Analysis, IDA 2020, Konstanz, Germany, April 27--29, 2020, Proceedings 18},
  pages={444--456},
  year={2020},
  organization={Springer}
}

@book{walley1991statistical,
  title={Statistical reasoning with imprecise probabilities},
  author={Walley, Peter},
  publisher={Chapman Hall},
  year={1991}
}

@phdthesis{gal2016uncertainty,
  title={{Uncertainty in Deep Learning}},
  author={Gal, Yarin},
  year={2016},
  school={{University of Cambridge}}
}

@misc{sale2023uncertainty,
      title={Second-Order Uncertainty Quantification: Variance-Based Measures}, 
      author={Yusuf Sale and Paul Hofman and Lisa Wimmer and Eyke Hüllermeier and Thomas Nagler},
      year={2023},
      eprint={2401.00276},
      archivePrefix={arXiv},
      primaryClass={cs.LG},
      url={https://arxiv.org/abs/2401.00276}, 
}

@inproceedings{malinin2021uncertainty,
  title={Uncertainty Estimation in Autoregressive Structured Prediction},
  author={Malinin, Andrey and Gales, Mark},
  booktitle={International Conference on Learning Representations},
 year= {2021}
}

@inproceedings{depeweg,
  title={Decomposition of uncertainty in Bayesian deep learning for efficient and risk-sensitive learning},
  author={Depeweg, Stefan and Hernandez-Lobato, Jose-Miguel and Doshi-Velez, Finale and Udluft, Steffen},
  booktitle={International conference on machine learning},
  pages={1184--1193},
  year={2018},
  organization={PMLR}
}

@misc{lahlou2023,
      title={DEUP: Direct Epistemic Uncertainty Prediction}, 
      author={Salem Lahlou and Moksh Jain and Hadi Nekoei and Victor Ion Butoi and Paul Bertin and Jarrid Rector-Brooks and Maksym Korablyov and Yoshua Bengio},
      year={2023},
      eprint={2102.08501},
      archivePrefix={arXiv},
      primaryClass={cs.LG},
      url={https://arxiv.org/abs/2102.08501}, 
}

@misc{gruber2023properBiasVariance,
      title={Uncertainty Estimates of Predictions via a General Bias-Variance Decomposition}, 
      author={Sebastian G. Gruber and Florian Buettner},
      year={2023},
      eprint={2210.12256},
      archivePrefix={arXiv},
      primaryClass={cs.LG},
      url={https://arxiv.org/abs/2210.12256}, 
}

@misc{hofman2024uncertainty,
      title={Quantifying Aleatoric and Epistemic Uncertainty with Proper Scoring Rules}, 
      author={Paul Hofman and Yusuf Sale and Eyke Hüllermeier},
      year={2024},
      eprint={2404.12215},
      archivePrefix={arXiv},
      primaryClass={cs.LG},
      url={https://arxiv.org/abs/2404.12215}, 
}

@article{wood2023unified,
  title={A unified theory of diversity in ensemble learning},
  author={Wood, Danny and Mu, Tingting and Webb, Andrew M and Reeve, Henry WJ and Lujan, Mikel and Brown, Gavin},
  journal={Journal of Machine Learning Research},
  volume={24},
  number={359},
  pages={1--49},
  year={2023}
}

@phdthesis{malinin_2019, title={Uncertainty Estimation in Deep Learning with application to Spoken Language Assessment}, url={https://www.repository.cam.ac.uk/handle/1810/298857}, DOI={10.17863/CAM.45912}, school={University of Cambridge}, author={Malinin, Andrey}, year={2019} }

@article{sale2026meaningful,
  title={{The Aleatoric-Epistemic Dichotomy of Uncertainty is Meaningful and Indispensable for Machine Learning}},
  author={Sale, Yusuf and Kotelevskii, Nikita and Panov, Maxim and Hüllermeier, Eyke},
  date={May 17th 2026},
  journal={SSRN},
  url={https://ssrn.com/abstract=6783338},
  year={2026}
}

@misc{ziegler2020finetuninglanguagemodelshuman,
      title={Fine-Tuning Language Models from Human Preferences}, 
      author={Daniel M. Ziegler and Nisan Stiennon and Jeffrey Wu and Tom B. Brown and Alec Radford and Dario Amodei and Paul Christiano and Geoffrey Irving},
      year={2020},
      eprint={1909.08593},
      archivePrefix={arXiv},
      primaryClass={cs.CL},
      url={https://arxiv.org/abs/1909.08593}, 
}

@article{xiong2023iterative,
  title={Iterative preference learning from human feedback: Bridging theory and practice for rlhf under kl-constraint},
  author={Xiong, Wei and Dong, Hanze and Ye, Chenlu and Wang, Ziqi and Zhong, Han and Ji, Heng and Jiang, Nan and Zhang, Tong},
  journal={arXiv preprint arXiv:2312.11456},
  year={2023}
}

@inproceedings{cao2025on,
title={On {LLM} Knowledge Distillation - A Comparison between Forward {KL} and Reverse {KL}},
author={Yihan Cao and Yanbin Kang},
booktitle={The Fourth Blogpost Track at ICLR 2025},
year={2025},
url={https://openreview.net/forum?id=jGVCs8gomF}
}

@article{bregman1967relaxation,
  title={The relaxation method of finding the common point of convex sets and its application to the solution of problems in convex programming},
  author={Bregman, Lev M},
  journal={USSR computational mathematics and mathematical physics},
  volume={7},
  number={3},
  pages={200--217},
  year={1967},
  publisher={Elsevier}
}

@book{bach2024learning,
  title={Learning theory from first principles},
  author={Bach, Francis},
  year={2024},
  publisher={MIT press}
}

@article{huang2021quantifying,
  title={Quantifying epistemic uncertainty in deep learning},
  author={Huang, Ziyi and Lam, Henry and Zhang, Haofeng},
  journal={arXiv preprint arXiv:2110.12122},
  year={2021}
}

@InProceedings{pmlr-v216-wimmer23a,
  title = 	 {Quantifying aleatoric and epistemic uncertainty in machine learning: Are conditional entropy and mutual information appropriate measures?},
  author =       {Wimmer, Lisa and Sale, Yusuf and Hofman, Paul and Bischl, Bernd and H\"ullermeier, Eyke},
  booktitle = 	 {Proceedings of the Thirty-Ninth Conference on Uncertainty in Artificial Intelligence},
  pages = 	 {2282--2292},
  year = 	 {2023},
  volume = 	 {216},
  series = 	 {Proceedings of Machine Learning Research},
  month = 	 {31 Jul--04 Aug},
  publisher =    {PMLR},
  url = 	 {https://proceedings.mlr.press/v216/wimmer23a.html}
}

@inproceedings{Balaji2017,
 author = {Lakshminarayanan, Balaji and Pritzel, Alexander and Blundell, Charles},
 booktitle = {Advances in Neural Information Processing Systems},
 editor = {I. Guyon and U. Von Luxburg and S. Bengio and H. Wallach and R. Fergus and S. Vishwanathan and R. Garnett},
 pages = {},
 publisher = {Curran Associates, Inc.},
 title = {Simple and Scalable Predictive Uncertainty Estimation using Deep Ensembles},
 url = {https://proceedings.neurips.cc/paper_files/paper/2017/file/9ef2ed4b7fd2c810847ffa5fa85bce38-Paper.pdf},
 volume = {30},
 year = {2017}
}

@misc{huang2023,
      title={Efficient Uncertainty Quantification and Reduction for Over-Parameterized Neural Networks}, 
      author={Ziyi Huang and Henry Lam and Haofeng Zhang},
      year={2023},
      eprint={2306.05674},
      archivePrefix={arXiv},
      primaryClass={stat.ML},
      url={https://arxiv.org/abs/2306.05674}, 
}

@ARTICLE{Nielsen2009,
  author={Nielsen, Frank and Nock, Richard},
  journal={IEEE Transactions on Information Theory}, 
  title={Sided and Symmetrized Bregman Centroids}, 
  year={2009},
  volume={55},
  number={6},
  pages={2882-2904},
  doi={10.1109/TIT.2009.2018176}}

@article{
gupta2022ensembles,
title={Ensembles of Classifiers: a Bias-Variance Perspective},
author={Neha Gupta and Jamie Smith and Ben Adlam and Zelda E Mariet},
journal={Transactions on Machine Learning Research},
issn={2835-8856},
year={2022},
url={https://openreview.net/forum?id=lIOQFVncY9},
note={}
}

@inproceedings{salelabel,
  title = {{Label-wise Aleatoric and Epistemic Uncertainty Quantification}},
  author = {Sale, Yusuf and Hofman, Paul and L{\"o}hr, Timo and Wimmer, Lisa and Nagler, Thomas and H{\"u}llermeier, Eyke},
  booktitle = {The 40th Conference on Uncertainty in Artificial Intelligence},
  year = {2024},
}

@inproceedings{jimenez_2026position,
title={Position: Epistemic Uncertainty Estimation Methods are Fundamentally Incomplete},
author={Sebastian Jimenez and Mira Juergens and Willem Waegeman},
booktitle={Forty-third International Conference on Machine Learning Position Paper Track},
year={2026},
url={https://openreview.net/forum?id=g598HZM6ib}
}

@misc{pfau2025,
      title={A Generalized Bias-Variance Decomposition for Bregman Divergences}, 
      author={David Pfau},
      year={2025},
      eprint={2511.08789},
      archivePrefix={arXiv},
      primaryClass={cs.LG},
      url={https://arxiv.org/abs/2511.08789}, 
}

@techreport{pfau2013,
  title        = {A Generalized Bias-Variance Decomposition for Bregman Divergences},
  author       = {Pfau, David},
  institution  = {Columbia University},
  year         = {2013},
  month        = {June},
  type         = {Technical Report}
}

@book{savagebook,
title = {{The Foundations of Statistics}},
author = {Savage, L.J.},
doi = {https://doi.org/10.1002/nav.3800010316},
publisher = {John Wiley and Sons},
year = {1954}
}

@incollection{ramsey1926truth,
  title={Truth and probability},
  author={Ramsey, Frank P},
  booktitle={Readings in formal epistemology: Sourcebook},
  pages={21--45},
  year={1926},
  publisher={Springer}
}

@article{bulte2025axiomatic,
      title={{An Axiomatic Assessment of Entropy- and Variance-based Uncertainty Quantification in Regression}},
      author={B{\"u}lte, Christopher and Sale, Yusuf and L{\"o}hr, Timo and Hofman, Paul and Kutyniok, Gitta and H{\"u}llermeier, Eyke},
      journal={arXiv preprint arXiv:2504.18433},
      year={2025}
}

@misc{proper_regression,
      title={Uncertainty Quantification for Regression using Proper Scoring Rules}, 
      author={Alexander Fishkov and Kajetan Schweighofer and Mykyta Ielanskyi and Nikita Kotelevskii and Mohsen Guizani and Maxim Panov},
      year={2025},
      eprint={2509.26610},
      archivePrefix={arXiv},
      primaryClass={cs.LG},
      url={https://arxiv.org/abs/2509.26610}, 
}

@InProceedings{smith2024rethinking,
  title = 	 {Rethinking Aleatoric and Epistemic Uncertainty},
  author =       {Bickford-Smith, Freddie and Kossen, Jannik and Trollope, Eleanor and Van Der Wilk, Mark and Foster, Adam and Rainforth, Tom},
  booktitle = 	 {International Conference on Machine Learning},
  year = 	 {2025},
  url = 	 {https://proceedings.mlr.press/v267/bickford-smith25a.html}
}

@inproceedings{
kotelevskii2025from,
title={From Risk to Uncertainty: Generating Predictive Uncertainty Measures via Bayesian Estimation},
author={Nikita Kotelevskii and Vladimir Kondratyev and Martin Tak{\'a}{\v{c}} and Eric Moulines and Maxim Panov},
booktitle={The Thirteenth International Conference on Learning Representations},
year={2025},
url={https://openreview.net/forum?id=cWfpt2t37q}
}

@inproceedings{ agarwal2024onpolicy,
title={On-Policy Distillation of Language Models: Learning from Self-Generated Mistakes},
author={Rishabh Agarwal and Nino Vieillard and Yongchao Zhou and Piotr Stanczyk and Sabela Ramos Garea and Matthieu Geist and Olivier Bachem},
booktitle={The Twelfth International Conference on Learning Representations},
year={2024},
url={https://openreview.net/forum?id=3zKtaqxLhW}
}

@inproceedings{gu2024minillm,
title={Mini{LLM}: Knowledge Distillation of Large Language Models},
author={Yuxian Gu and Li Dong and Furu Wei and Minlie Huang},
booktitle={International Conference on Learning Representations},
year={2024},
url={https://openreview.net/forum?id=5h0qf7IBZZ}
}

@InProceedings{hullermeier22a,
  title = 	 {Quantification of Credal Uncertainty in Machine Learning: A Critical Analysis and Empirical Comparison},
  author =       {H\"ullermeier, Eyke and Destercke, S\'ebastien and Shaker, Mohammad Hossein},
  booktitle = 	 {Proceedings of the Thirty-Eighth Conference on Uncertainty in Artificial Intelligence},
  pages = 	 {548--557},
  year = 	 {2022},
  editor = 	 {Cussens, James and Zhang, Kun},
  volume = 	 {180},
  series = 	 {Proceedings of Machine Learning Research},
  month = 	 {01--05 Aug},
  publisher =    {PMLR},
  url = 	 {https://proceedings.mlr.press/v180/hullermeier22a.html}, 
}

@inproceedings{kendall2017,
author = {Kendall, Alex and Gal, Yarin},
title = {What uncertainties do we need in Bayesian deep learning for computer vision?},
year = {2017},
isbn = {9781510860964},
publisher = {Curran Associates Inc.},
address = {Red Hook, NY, USA},
booktitle = {Proceedings of the 31st International Conference on Neural Information Processing Systems},
pages = {5580–5590},
numpages = {11},
location = {Long Beach, California, USA},
series = {NIPS'17}
}

@inproceedings{ambiguity1994,
 author = {Krogh, Anders and Vedelsby, Jesper},
 booktitle = {Advances in Neural Information Processing Systems},
 editor = {G. Tesauro and D. Touretzky and T. Leen},
 pages = {},
 publisher = {MIT Press},
 title = {Neural Network Ensembles, Cross Validation, and Active Learning},
 url = {https://proceedings.neurips.cc/paper_files/paper/1994/file/b8c37e33defde51cf91e1e03e51657da-Paper.pdf},
 volume = {7},
 year = {1994}
}

@misc{wizgall2026unified,
      title={A Unified Risk View of Uncertainty: Posterior Risk for Disentanglement and Evaluation Beyond Proxies}, 
      author={Frieder Wizgall and Georg Tirpitz and Moritz Seiler and Kerstin Ritter and Bálint Mucsányi},
      year={2026},
      eprint={2608.05995},
      archivePrefix={arXiv},
      primaryClass={cs.LG},
      url={https://arxiv.org/abs/2608.05995}, 
}

@misc{nash1994abalone,
  author = {Nash, Warwick and Sellers, Tracy and Talbot, Simon
            and Cawthorn, Andrew and Ford, Wes},
  title  = {Abalone},
  year   = {1994},
  howpublished = {UCI Machine Learning Repository},
  doi    = {10.24432/C55C7W}
}

@misc{brooks1989airfoil,
  author = {Brooks, Thomas and Pope, D. and Marcolini, Michael},
  title  = {Airfoil Self-Noise},
  year   = {1989},
  howpublished = {UCI Machine Learning Repository},
  doi    = {10.24432/C5VW2C}
}

@misc{ghahramani1996kin,
  author = {Ghahramani, Zoubin},
  title  = {The {DELVE} Kin Family of Datasets},
  year   = {1996},
  howpublished = {\url{https://www.cs.toronto.edu/~delve/data/kin/desc.html}}
}

@inproceedings{Kotelevskii2022,
author = {Kotelevskii, Nikita and Artemenkov, Aleksandr and Fedyanin, Kirill and Noskov, Fedor and Fishkov, Alexander and Shelmanov, Artem and Vazhentsev, Artem and Petiushko, Aleksandr and Panov, Maxim},
title = {Nonparametric uncertainty quantification for single deterministic neural network},
year = {2022},
isbn = {9781713871088},
publisher = {Curran Associates Inc.},
address = {Red Hook, NY, USA},
booktitle = {Proceedings of the 36th International Conference on Neural Information Processing Systems},
articleno = {2631},
numpages = {16},
location = {New Orleans, LA, USA},
series = {NIPS '22}
}

\appendix

\newpage

\section{Proofs for \autoref{sec:proposal}}
\label{app:sec3proofs}

\autoref{thm:subjectiveBV}, the `reverse-argument' BV decomposition, uses
the divergence/entropy decomposition of a strictly proper loss to obtain a Bregman divergence, then applies the right-centroid decomposition from \citet{pfau2025} to obtain bias and variance components. We present the full proof here in our notation for completeness.\\

\begin{proof}{\bf of \autoref{thm:subjectiveBV}.}
Since $\ell$ is strictly proper, the generator $\phi$ is strictly convex and differentiable on the relative interior of $\gA$, which gives the Bregman representation in 
\eqref{eq:subjective_Bregman}.

Hence, 
\begin{equation}
  R_{q_\theta}(a_p)=\E_{\hat Y\sim q_\theta}[\ell(a_p,\hat Y)]
  =B_\phi(a_\theta,a_p)+H_\ell(q_\theta),
  \label{eq:thm2-pointwise}
\end{equation}
and assuming expectations are finite,
\begin{equation}
  \E_\theta\big[R_{q_\theta}(a_p)\big]
  =\E_\theta\big[B_\phi(a_\theta,a_p)\big]+\E_\theta\big[H_\ell(q_\theta)\big].
  \label{eq:thm2-expected}
\end{equation}
By definition $\bar a=\arg\min_{c\in\gA}\E_\theta[B_\phi(a_\theta,c)]$, which equals $\bar a=\E_\theta[a_\theta]$.  The Bregman
three-point identity~\citep{Nielsen2009},
\begin{equation}
  B_\phi(a_\theta,a_p)
  =B_\phi(a_\theta,\bar a)+B_\phi(\bar a,a_p)
   +\big\langle\nabla\phi(\bar a)-\nabla\phi(a_p),\,a_\theta-\bar a\big\rangle.
  \label{eq:thm2-3pt}
\end{equation}
Taking the expectation $\E_\theta$,
the inner-product term vanishes since
$\E_\theta[a_\theta-\bar a]=0$, giving
\begin{equation}
  \E_\theta\big[B_\phi(a_\theta,a_p)\big]
  =\underbrace{B_\phi(\bar a,a_p)}_{\textnormal{bias}}
  +\underbrace{\E_\theta\big[B_\phi(a_\theta,\bar a)\big]}_{\textnormal{variance}},
  \label{eq:thm2-split}
\end{equation}
which is the ``reverse-argument'' bias--variance decomposition in \citet{pfau2025}.\\
Substituting \eqref{eq:thm2-split} into \eqref{eq:thm2-expected} yields the stated
decomposition,
\begin{equation}
  \underbrace{\E_\theta\big[\E_{\hat Y\sim q_\theta}[\ell(a_p,\hat Y)]\big]}
  _{\textnormal{expected subjective risk}}
  =\underbrace{B_\phi(\bar a,a_p)}_{\textnormal{bias}}
  +\underbrace{\E_\theta\big[B_\phi(a_\theta,\bar a)\big]}_{\textnormal{variance}}
  +\underbrace{\E_\theta\big[H_\ell(q_\theta)\big]}_{\textnormal{generalized entropy}}.
\end{equation}
\end{proof}


\paragraph{Derivations to support Example \ref{example:gal}.}
\begin{proof}
Let $\ell(p,\hat Y)=-\ln p(\hat Y)$ with $\hat Y\sim q_\theta$, so the pointwise
subjective risk of reporting $p$ under belief $q_\theta$ is
\begin{equation}
  R_{q_\theta}(p)=\E_{\hat Y\sim q_\theta}\ell(p,\hat Y)
  =-\E_{\hat Y\sim q_\theta}\ln p(\hat Y)
  =-\sum_{y\in\gY} q_\theta(y)\ln p(y).
\end{equation}
Its generalized entropy is,
\begin{equation}
  H_\ell(q_\theta):=R_{q_\theta}(q_\theta)
  =-\sum_{y\in\gY} q_\theta(y)\ln q_\theta(y)
  =H(q_\theta),
\end{equation}
the Shannon entropy of the categorical $q_\theta$. Hence the generator is
$\phi(q)=-H_\ell(q)=\sum_{y\in\gY} q(y)\ln q(y)$, the negative Shannon entropy.
The risk decomposes into a generalized entropy and a Bregman divergence (see
\autoref{eq:subjective_Bregman}):
\begin{equation}
  R_{q_\theta}(p)=B_\phi(q_\theta,p)+H_\ell( q_\theta ).
  \label{eq:reverse-cat}
\end{equation}
As $H_\ell(q)$ is the negative Shannon entropy, the induced Bregman divergence is
the reverse KL, $B_\phi(q_\theta,p)=K(q_\theta\mid\mid p)$.
Take expectation $\E_\theta$ of \eqref{eq:reverse-cat} and apply the bias--variance
decomposition where $\bar q:=\E_\theta[q_\theta]$ is the categorical mixture,
\begin{eqnarray}
  \underbrace{\E_\theta\E_{\hat Y\sim q_\theta}[-\ln p(\hat Y)]}
  _{\textnormal{expected risk}}
  =&
  \underbrace{K(\bar q\mid\mid p)}_{\textnormal{bias}}
  +
  \underbrace{\E_\theta[ K(q_\theta\mid\mid\bar q)]}_{\textnormal{variance}}
  +
  \underbrace{\E_\theta\big[H(q_\theta)\big]}_{\textnormal{generalized entropy}}
\end{eqnarray}
The variance term is the mutual information.
With $\hat Y\mid\theta\sim q_\theta$ and marginal $\hat Y\sim\bar q$,
\[
\E_\theta\big[K(q_\theta\mid\mid\bar q)\big]
%
%
= H(\bar q)-\E_\theta\big[H_\ell(q_\theta)\big]
=H(\hat Y)-H(\hat Y\mid\Theta)
=I(\hat Y;\Theta),
\]
where $H(\bar q):=-\sum_{y}\bar q(y)\ln\bar q(y)$. Thus the variance is the mutual
information $I(\hat Y;\Theta)$, understood as epistemic, and the generalized entropy
$\E_\theta[H_\ell(q_\theta)]=H(\hat Y\mid\Theta)$, the Shannon conditional entropy,
as aleatoric.
\end{proof}

\paragraph{Derivations to support Example \ref{example:gaussians}.}

\begin{proof}
    Let $\ell(p,\hat Y)=-\ln p(\hat Y)$ with $\hat Y\sim q_\theta$, so the pointwise subjective risk of
reporting $p$ under belief $q_\theta$ is
\begin{equation}
  R_{q_\theta}(p)= \E_{\hat Y\sim q_\theta} \ell(p,\hat Y)
          = - \E_{\hat Y\sim q_\theta}\ln p(\hat Y)
          = - \int_{\gY} q_\theta(y) \ln p(y)\, dy.
\end{equation}
Its generalized entropy is,
\begin{equation}
   H_\ell(q_\theta):=R_{q_\theta}(q_\theta)
=-\int_\gY q_\theta(y)\ln q_\theta(y)\,dy
  =\tfrac12\ln(2\pi e\,\sigma_\theta^2),
\end{equation}
the differential entropy of a Gaussian. Thus, 
$\phi(q)=-H_\ell(q)=\int_\gY q(y)\ln q(y)\,dy$, the negative differential
entropy.
The risk decomposes into a generalized entropy and a functional Bregman divergence (see \autoref{eq:subjective_Bregman}):
\begin{equation}
   R_{q_\theta}(p) = B_{\phi}(q_\theta,p) + H_\ell(q_\theta)
   \label{eq:reverse-log}
\end{equation}
as $H_\ell(q)$ is the negative differential entropy, The induced Bregman divergence is the reverse KL.
Take expectation $\E_\theta$ of \eqref{eq:reverse-log} and apply the bias--variance decomposition where $\bar q:=\E_\theta[q_\theta]$ is a mixture of distributions,
\begin{eqnarray}
  \underbrace{\E_\theta\E_{\hat Y\sim q_\theta}[-\ln p(\hat Y)]}_{\textnormal{expected risk}}
  =&
  \underbrace{K(\bar q \mid\mid p)}_{\textnormal{bias}}
  +
  \underbrace{\E_\theta[ K(q_\theta\mid\mid \bar q)]}_{\textnormal{variance}}
  +
  \underbrace{\E_\theta\big[\tfrac12\ln(2\pi e\,\sigma_\theta^2)\big]}_{\textnormal{generalized entropy}}
\end{eqnarray}
The variance term is the mutual information.
With $\hat Y\mid\theta\sim q_\theta$ and marginal $\hat Y\sim\bar q$,
\[
\E_\theta\big[K(q_\theta\mid\mid\bar q)\big]
=\E_\theta \int q_\theta\ln q_\theta-\int\bar q\ln\bar q
=H(\bar q)-\E_\theta\big[H_\ell(q_\theta)\big]
=H(\hat Y)-H(\hat Y\mid\Theta)
=I(\hat Y;\Theta),
\]
where $H(\bar q) := -\int\bar q(y)\ln\bar q(y) dy$.
\end{proof}

\paragraph{Derivations to support Example \ref{example:labelwise}.}
Label-wise UQ~\citep{salelabel} treats each label separately as a present/absent
event. The indicator of a single label is a Bernoulli variable, so its uncertainty is fully described by $\mathrm{Ber}(\mu)$. We show the squared loss example but the idea is more general.\\

\begin{proof}
As in Example~\ref{example:geman}, the loss is $\ell(p,\hat Y)=(\mu_p-\hat Y)^2$, which induces
the Bregman generator $\phi(\mu)=\mu^2$ (up to an affine term), so the Bregman divergence between means is $B_\phi(\mu_\theta,\mu_p)$. The bias and
variance components are therefore identical to Example~\ref{example:geman}.
The only Bernoulli-specific term is the generalized entropy, which is the predictive variance,
\begin{equation}
   H_\ell(\mu_\theta) := R_{q_\theta}(q_\theta) = \E_{\hat Y\sim q_\theta}\big[(\mu_\theta-\hat Y)^2\big] = \E_{\hat Y\sim q_\theta}\big[\hat Y^2\big]
    - \big(\E_{\hat Y\sim q_\theta}[\hat Y]\big)^2
\end{equation}
Since $\hat Y\sim Ber(\mu_\theta)$ takes
values in $\{0,1\}$, we have $\hat Y^2=\hat Y$, so this gives,
\begin{equation}
     H_\ell(\mu_\theta) = \mu_\theta - \mu_\theta^2 =  \mu_\theta \cdot (1-\mu_\theta).
\end{equation}
\end{proof}


\begin{proof}{\bf of \autoref{thm:Bias-Variance-Uncertainty for Ensembles}.}
We begin by expressing the expected loss under the ensemble distribution as
\begin{align}
    \E_{\hat{Y}\sim \qensD}\left[ \ell\left(p,\hat{Y} \right) \right] 
    &= B_\phi(\qensD, p) + H_\ell(\qensD) \notag \\
    &= \frac{1}{m}\sum_{\theta=1}^m B_\phi(q_\theta, p) - \frac{1}{m}\sum_{\theta=1}^m B_\phi(q_\theta, \bar{q}) + H_\ell(\qensD), \label{eq:ambig}
\end{align}
where \autoref{eq:ambig} is the subjective-risk counterpart to the ambiguity decomposition \citep{ambiguity1994}. \\

\noindent Taking the expectation with respect to the dataset $D$, we obtain
\begin{align}
    \E_D \left[ \E_{\hat{Y}\sim \qensD}\left[ \ell\left(p,\hat{Y} \right) \right] \right] 
    &= \E_D \left[ B_\phi(\qensD, p) + H_\ell(\qensD) \right] \notag \\
    &= \frac{1}{m}\sum_{\theta=1}^m \E_D [ B_\phi(q_\theta, p)] - \E_D\left[\frac{1}{m}\sum_{\theta=1}^m B_\phi(q_\theta, \bar{q})\right] + \E_D [H_\ell(\qensD)].
\end{align}
Next, we apply the bias-variance decomposition to the first term. Assuming a log loss such that the Bregman information corresponds to the mutual information, the expression simplifies to
\begin{align*}
    \E_D \left[ \E_{\hat{Y}\sim \qensD}\left[ \ell\left(p,\hat{Y} \right) \right] \right] 
    &= \frac{1}{m}\sum_{\theta=1}^m B_\phi( \E_D[q_\theta] , p ) + \frac{1}{m}\sum_{\theta=1}^m \E_D [ B_\phi(q_\theta, \E_D[q_\theta])] \\
    &\quad - I(\hat{Y};\Theta\mid D) + H_\ell( \hat{Y} \mid D ) \\
    &= \frac{1}{m}\sum_{\theta=1}^m B_\phi( \E_D[q_\theta] , p ) + I(\hat{Y};D\mid \Theta) - I(\hat{Y};\Theta\mid D) + H_\ell( \hat{Y} \mid D ).
\end{align*}
\end{proof}

\newpage

\section{Further Examples of Bias/Variance for Subjective Risk}
\label{app:further_examples}

\begin{example}\label{example:geman}

{\bf . Variance-based measures \citep{kendall2017}.}
Let $\gH \subset \gP$ be the non-convex class of Gaussians with fixed variance $\sigma^2=\frac{1}{2}$, and denote the true $p=\gN(\mu_p,\frac{1}{2})$, and $q_\theta = \gN(\mu_\theta,\frac{1}{2})$.
Define the loss on the mean parameter $\ell(\mu_p,\hat{Y}) = (\mu_p-\hat{Y})^2$.
Ignoring affine terms, this induces $\phi(\mu)=\mu^2$,
and \autoref{thm:subjectiveBV} becomes,
\begin{equation}
    \underbrace{\mystrut[1em]
    \E_\theta \Big[ \E_{\hat Y \sim q_\theta}\big[ (\mu_p-\hat Y)^2\big]\Big]}_{\textnormal{expected subjective risk}}
    =
    \underbrace{\mystrut[1em] \Big( \E_\theta[\mu_{\theta} ] - \mu_p \Big)^2}_{\textnormal{bias}}
    ~+~
    \underbrace{\mystrut[1em] \textcolor{BrightOlive}{ \E_\theta\Big[ ( \mu_{\theta} - \E_\theta[\mu_{\theta} \big] )^2\Big]}}_{\textnormal{variance}}
    ~+
    \underbrace{\mystrut[1em]
    \textcolor{red}{
    ~\sigma_q^2~}
    }_{\textnormal{generalized entropy}}
\end{equation}
These are the ``variance-based'' terms studied by many authors for UQ in regression, e.g. \citet{kendall2017,depeweg, bulte2025axiomatic,jimenez_2026position}.
The variance of $\mu_\theta$, also written as 
$Var_\theta(\E_{\hat{Y}\sim q_\theta}[\hat{Y}])$,
is again understood as epistemic,
and the
generalized entropy $\E_\theta[Var(\hat{Y}\mid\Theta=\theta)]$
as aleatoric.
Note, this is constant w/r $\theta$, and so equal to our assumed $\sigma^2_q=\tfrac{1}{2}$.
The ``total uncertainty' (TU) is their sum, $Var(\hat Y)$.

Furthermore we note that, since we assume $\sigma_p^2=\sigma_q^2$, this decomposition is \underline{exactly} equivalent to the classic squared loss bias--variance decomposition \citep{geman1992neural}, with $\mu_\theta$ as the predictor and $Y\sim p$ as the true label.\\
\end{example}

\begin{proof}
Let $\ell(a_p,\hat Y)=(\mu_p-\hat Y)^2$ with $\hat Y\sim q_\theta$, so the pointwise
subjective risk of reporting $p$ under belief $q_\theta$ is
\begin{equation}
  R_{q_\theta}(a_p)=\E_{\hat Y\sim q_\theta}\ell(a_p,\hat Y)
  =\E_{\hat Y\sim q_\theta}\big[(\mu_p-\hat Y)^2\big].
\end{equation}
Its generalized entropy is,
\begin{equation}
  H_\ell(a_\theta):=R_{q_\theta}(a_\theta)
  =\E_{\hat Y\sim q_\theta}\big[(\mu_\theta-\hat Y)^2\big]
  =\E_{\hat Y\sim q_\theta}[\hat Y^2]-\big(\E_{\hat Y\sim q_\theta}[\hat Y]\big)^2
  =\sigma_q^2=\tfrac12,
\end{equation}
the predictive variance of $\hat Y\sim q_\theta$, which under the fixed-variance
assumption is constant in $\theta$. Since $-H_\ell(q_\theta)$ equals $\mu_\theta^2$
up to an affine term, the generator is $\phi(\mu)=\mu^2$, and the
induced Bregman divergence is the squared difference of means,
$B_\phi(\mu_\theta,\mu_p)=(\mu_\theta-\mu_p)^2$.
The risk decomposes into a generalized entropy and a Bregman divergence:
\begin{equation}
  R_{q_\theta}(a_p)=B_\phi(\mu_\theta,\mu_p)+H_\ell(q_\theta).
  \label{eq:reverse-var}
\end{equation}
Take expectation $\E_\theta$ of \eqref{eq:reverse-var} and apply the bias--variance
decomposition where $\bar\mu:=\E_\theta[\mu_\theta]$; the cross term vanishes since
$\E_\theta[\bar\mu-\mu_\theta]=0$,
\begin{eqnarray}
  \underbrace{\E_\theta\E_{\hat Y\sim q_\theta}\big[(\mu_p-\hat Y)^2\big]}
  _{\textnormal{expected risk}}
  =&
  \underbrace{(\bar\mu-\mu_p)^2}_{\textnormal{bias}}
  +
  \underbrace{\E_\theta\big[(\mu_\theta-\bar\mu)^2\big]}_{\textnormal{variance}}
  +
  \underbrace{\sigma_q^2.}_{\textnormal{generalized entropy}} 
\end{eqnarray}
\end{proof}


%

Many ML models have multiple sources of uncertainty---due to uncertainty in training data, initial parameters, procedural training issues, etc.
\citet{huang2021quantifying} showed how to decompose epistemic uncertainty into {\em procedural} and {\em data} uncertainty, separating these sources of uncertainty.
We can again explain this within our framework.
In addition to the bias--variance decomposition, we need the Bregman Law of Total Variance~\citep{gupta2022ensembles}, to obtain a finer-grained variance decomposition where appropriate.


\begin{theorem}[Bregman Law of Total Variance \citep{gupta2022ensembles} ] \label{thm:LTV_RV}
Let $B_{\phi}$ be a Bregman divergence generated by a strictly convex, differentiable function $\phi$. Assume a convex class of distributions $\gP$ over an outcome space $\gY$. Let $a_{\scriptscriptstyle Z}\in\gA$ be a model induced by a {\em joint} random variable $Z=(A,B)$. For the reverse-argument Bregman divergence, the variance $\E_{Z}\big[ B_{\phi}(a_{\scriptscriptstyle Z}, \bar a) \big]$ can be decomposed by the law of total variance as follows:
\begin{equation}
\underbrace{\mystrut[1em] \E_{Z}
\Big[ B_\phi (a_{\scriptscriptstyle Z}, \bar a)\Big]}_{\textnormal{total variance}}
=
\underbrace{\mystrut[1em] \textcolor{black}{\E_A\Big[ B_{\phi}(\bar{a}_{\scriptscriptstyle{B\mid A}}\,,\,\bar a) \Big]}}_{\textnormal{between-group variance}}
        +
        \underbrace{\mystrut[1em] \textcolor{black}{\E_A\Big[\E_{B\mid A} \big[B_{\phi}(a_{\scriptscriptstyle Z}\,,\,\bar{a}_{\scriptscriptstyle{B\mid A}}) \big]\Big]}}_{\textnormal{within-group variance}}.
\end{equation}
Here $\bar a := \arg\min_c \E_{Z} [B_\phi (a_{\scriptscriptstyle Z},c)]$ is the right centroid wrt the joint random variable $Z$.
The conditional right centroid is $\bar{a}_{\scriptscriptstyle{B\mid A}} := \arg\min_c \E_{B\mid A} [B_\phi (q_{\scriptscriptstyle Z},c)]$. This result is due to \citet{gupta2022ensembles}, with the functional form we use given by \citet{gruber2023properBiasVariance}.
\end{theorem}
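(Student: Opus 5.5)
The plan is to prove the Bregman law of total variance with the Bregman three-point identity, the same one used in the proof of \autoref{thm:subjectiveBV}, applied conditionally on $A$. First I will identify the centroids. For the reverse-argument (right) centroid, the first-order condition of $c\mapsto \E_Z[B_\phi(a_Z,c)]$ is $\nabla^2\phi(c)\,(c-\E_Z[a_Z])=0$. Strict convexity therefore gives $\bar a=\E_Z[a_Z]$, and likewise $\bar a_{B\mid A}=\E_{B\mid A}[a_Z]$, which is the expectation in the convex domain, or the mixture in the functional case. The tower property then gives $\E_A[\bar a_{B\mid A}]=\bar a$. This centroid-as-mean identification is what makes the argument work; I take it from \citet{Nielsen2009}, as the paper already does.

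Next, fix $A$ and apply the three-point identity with intermediate point $\bar a_{B\mid A}$:
\[
B_\phi(a_Z,\bar a)=B_\phi(a_Z,\bar a_{B\mid A})+B_\phi(\bar a_{B\mid A},\bar a)+\big\langle \nabla\phi(\bar a)-\nabla\phi(\bar a_{B\mid A}),\,a_Z-\bar a_{B\mid A}\big\rangle .
\]
I will check this by expanding the definitions: the $\phi$ terms cancel, and what remains is linear in $a_Z$. Then I take $\E_{B\mid A}$. The first factor of the inner product depends only on $A$, and $\E_{B\mid A}[a_Z-\bar a_{B\mid A}]=0$, so the cross term vanishes conditionally. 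This leaves
\[
\E_{B\mid A}[B_\phi(a_Z,\bar a)]=\E_{B\mid A}[B_\phi(a_Z,\bar a_{B\mid A})]+B_\phi(\bar a_{B\mid A},\bar a).
\]
Taking $\E_A$ and using the tower property $\E_Z=\E_A\E_{B\mid A}$ gives the within-group plus between-group terms.

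The main obstacle is technical rather than algebraic. It has two parts.

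\textbf{Interiority of the centroids.} The points $\bar a$ and $\bar a_{B\mid A}$ must lie in the interior where $\nabla\phi$ exists. Otherwise, as in the paper's own remark on boundary cases, terms may be infinite.

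\textbf{Integrability.} Exchanging expectation with the inner product needs integrability, and in the functional-Bregman case linearity of the Gateaux derivative.

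I would state these as standing assumptions: finite expectations, and centroids in $\mathrm{ri}(\gA)$. I would handle the functional case by noting that the identity holds pointwise in $y$ for the log generator, or by citing the functional form of \citet{gruber2023properBiasVariance}. No further results are needed beyond these.
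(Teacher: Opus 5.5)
Your proposal is correct and follows the paper's proof step for step. It uses the three-point identity with the conditional centroid $\bar a_{B\mid A}$ as intermediate point, kills the cross term under $\E_{B\mid A}$ because $\bar a_{B\mid A}=\E_{B\mid A}[a_Z]$, and finishes with the tower rule $\E_Z=\E_A\E_{B\mid A}$.

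There are two harmless slips. First, the cross term should be $\langle\nabla\phi(\bar a_{B\mid A})-\nabla\phi(\bar a),\,a_Z-\bar a_{B\mid A}\rangle$; your sign is reversed, as expanding the definitions shows, though the term vanishes either way. Second, your Hessian first-order condition does not follow from strict convexity of a merely differentiable $\phi$. The centroid-as-mean step should instead rest on the cited result, or on the identity $\E_Z[B_\phi(a_Z,c)]=\E_Z[B_\phi(a_Z,\E_Z[a_Z])]+B_\phi(\E_Z[a_Z],c)$, which is itself the three-point identity.
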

\begin{proof}
The result follows by applying the three-point identity at the conditional centroid and taking iterated expectations,
\begin{equation}
  B_\phi(a_{\scriptscriptstyle Z},\bar a)
  =B_\phi(a_{\scriptscriptstyle Z},\bar{a}_{\scriptscriptstyle {B\mid A}})+B_\phi(\bar{a}_{\scriptscriptstyle {B\mid A}},\bar a)
   +\big\langle\nabla\phi(\bar{a}_{\scriptscriptstyle {B\mid A}})-\nabla\phi(\bar a),\,a_{\scriptscriptstyle Z}-\bar{a}_{\scriptscriptstyle {B\mid A}}\big\rangle .
\end{equation}
Take $\E_{B\mid A}$ the inner-product term vanishes since
$\E_{B\mid A}[a_{\scriptscriptstyle Z}-\bar{a}_{\scriptscriptstyle {B\mid A}}]=0$,
\begin{equation}
  \E_{B\mid A}\big[B_\phi(a_{\scriptscriptstyle Z},\bar a)\big]
  =B_\phi(\bar{a}_{\scriptscriptstyle {B\mid A}},\bar a)
   +\E_{B\mid A}\big[B_\phi(a_{\scriptscriptstyle Z},\bar{a}_{\scriptscriptstyle {B\mid A}})\big].
\end{equation}
Taking $\E_A$ and using the tower rule $\E_Z=\E_A\E_{B\mid A}$ gives
\begin{equation}
  \underbrace{\E_Z\big[B_\phi(a_{\scriptscriptstyle Z},\bar a)\big]}_{\textnormal{total variance}}
  =\underbrace{\E_A\big[B_\phi(\bar{a}_{\scriptscriptstyle {B\mid A}},\bar a)\big]}
   _{\textnormal{between-group variance}}
  +\underbrace{\E_A\big[\E_{B\mid A}[B_\phi(a_{\scriptscriptstyle Z},\bar{a}_{\scriptscriptstyle {B\mid A}})]\big]}
   _{\textnormal{within-group variance}} .
\end{equation}
\end{proof}
If our subjective risk has a joint variable $Z$, the decomposed variance term from \autoref{thm:subjectiveBV} will be $\E_Z[B_\phi(a_{\scriptscriptstyle Z}\,,\,\bar a)]$. We can then apply the LTV above, to recover the procedural/data uncertainty split of \citet{huang2021quantifying,huang2023}.\\ 

%
%

\begin{example}\label{example:huang}
{\bf . Data vs Procedural Uncertainty \citep[Eq.~7]{jimenez_2026position}.}
Let $\gH \subset \gP$ be the non-convex class of Gaussians with fixed variance $\sigma^2=\frac{1}{2}$. Define the joint random variable $Z = (\theta, D)$, where $\theta$ captures
procedural randomness (e.g. weight initialization) and $D$ captures dataset variations. Denote the true $p=\gN(\mu_p,\frac{1}{2})$, and $q_{\scriptscriptstyle Z} = \gN(\mu_{\scriptscriptstyle Z},\frac{1}{2})$.
Define $\ell(\mu_p,\hat{Y}) = (\mu_p-\hat{Y})^2$.
Ignoring affine terms, this induces $\phi(\mu)=\mu^2$,
and \autoref{thm:subjectiveBV} becomes,
\begin{equation} 
    \underbrace{\mystrut[1em]
    \E_Z \Big[ \E_{\hat Y \sim q_Z}\big[ (\mu_p-\hat Y)^2\big]\Big]}_{\textnormal{expected subjective risk}}
    =
    \underbrace{\mystrut[1em] \Big( \E_{\scriptscriptstyle Z}[\mu_{\scriptscriptstyle Z} ] - \mu_p \Big)^2}_{\textnormal{bias}}
    ~+~
    \underbrace{\mystrut[1em] \textcolor{BrightOlive}{ \E_{\scriptscriptstyle Z}\Big[(\mu_{\scriptscriptstyle Z} - \E_{\scriptscriptstyle Z}[\mu_{\scriptscriptstyle Z} \big] )^2\Big]}}_{\textnormal{variance}}
    ~+
    \underbrace{\mystrut[1em]
    \textcolor{red}{
    ~\sigma_{q_z}^2~},
    }_{\textnormal{generalized entropy}}
\end{equation}
where,
\begin{equation}
     \underbrace{\mystrut[1em] \textcolor{BrightOlive}{ \E_{\scriptscriptstyle Z}\Big[(\mu_{\scriptscriptstyle Z} - \E_{\scriptscriptstyle Z}[\mu_{\scriptscriptstyle Z} \big] )^2\Big]}}_{\textnormal{variance}}
     ~=~  \underbrace{\mystrut[1em] \textcolor{BrightOlive}{ \E_D\Big[ ( \E_{\theta \mid D}[\mu_{\scriptscriptstyle Z}] - \E_{\scriptscriptstyle Z}[\mu_{\scriptscriptstyle Z} \big] )^2\Big]}}_{\textnormal{data uncertainty}}
    ~+~
     \underbrace{\mystrut[1em] \textcolor{BrightOlive}{ \E_D\Big[\E_{\theta \mid D}\Big[ ( \mu_{\scriptscriptstyle Z} - \E_{\theta \mid D}[\mu_{\scriptscriptstyle Z}]  )^2\Big].}}_{\textnormal{procedural uncertainty}}
\end{equation}
\end{example}
\begin{proof}
This is Example \ref{example:geman} with the expectation taken wrt a joint variable $Z=(\theta,D)$,
so the squared loss gives $\phi(\mu)=\mu^2$,
bias $(\E_Z[\mu_{\scriptscriptstyle Z}]-\mu_p)^2$, variance $\E_Z[(\mu_{\scriptscriptstyle Z}-\E_Z[\mu_{\scriptscriptstyle Z}])^2]$, and the
constant generalized entropy $\sigma_{q_{\scriptscriptstyle Z}}^2$. The only new step is to decompose the
variance by its source. Applying the Bregman law of total variance
(Theorem~\ref{thm:LTV_RV}) to $Z=(A,B)=(D,\theta)$ splits it into a between-group
term over datasets and a within-group term over procedural randomness,
\begin{equation}
  \E_Z\big[(\mu_{\scriptscriptstyle Z}-\E_Z[\mu_{\scriptscriptstyle Z}])^2\big]
  =\underbrace{\E_D\big[(\E_{\theta\mid D}[\mu_{\scriptscriptstyle Z}]-\E_Z[\mu_{\scriptscriptstyle Z}])^2\big]}
   _{\textnormal{data}}
  +\underbrace{\E_D\big[\E_{\theta\mid D}[(\mu_{\scriptscriptstyle Z}-\E_{\theta\mid D}[\mu_{\scriptscriptstyle Z}])^2]\big]}
   _{\textnormal{procedural}},
\end{equation}
recovering the data/procedural decomposition discussed in \citet{huang2021quantifying}, \citet{huang2023}, and \citet{jimenez_2026position}.
\end{proof}

\begin{example}\label{example:mixtures}
{\bf . Deep Ensembles}. \citet{bulte2025axiomatic} propose an axiomatic framework, with Deep Ensembles \citep{Balaji2017} as an exemplar. Assume the same setting as Example~\ref{example:geman}, but now with {\em unknown
variance} and a finite set of $m$ models: each $\theta_i$ induces
$q_{\theta_i}=\gN(\mu_i,\sigma_i^2)$, so the expectation over $\theta$ becomes a
finite average $\tfrac{1}{m}\sum_{i=1}^m$. 
\begin{equation}
    \underbrace{\mystrut[1.8em]
    \frac{1}{m}\sum_{i=1}^m \Bigg[  \E_{\hat Y \sim q_{\theta_i}}\left[ (\mu_p-\hat Y)^2\right]\Bigg] }_{\textnormal{expected subjective risk}}
    =
    \underbrace{\mystrut[1.8em] \left( \frac{1}{m}\sum_{i=1}^m \mu_i - \mu_p \right)^2}_{\textnormal{bias}}
    +
    \underbrace{\mystrut[1.8em]
    \textcolor{BrightOlive}{\mystrut[1em]
    \frac{1}{m}\sum_{i=1}^m\mu_i^2 - \left(\frac{1}{m}\sum_{i=1}^m\mu_i\right)^2
    }}_{\textnormal{variance}}
    +
    \underbrace{\mystrut[1.8em]
    \textcolor{red}{
    \frac{1}{m}\sum_{i=1}^m\sigma_i^2}}_{\substack{\textnormal{generalized}\\ \textnormal{entropy}}} \notag
\end{equation}
The epistemic/aleatoric terms are exactly those derived in \citet[Section 4]{bulte2025axiomatic}.
\end{example}
\begin{proof}
This is Example \ref{example:geman} with the posterior over $\theta$ replaced by the distribution over a finite set of $m$ members, so every expectation $\E_\theta$
becomes the average $\tfrac 1m\sum_{i=1}^m$ and each member contributes
$q_{\theta_i}=\mathcal N(\mu_i,\sigma_i^2)$. Allowing $\sigma_i^2$ to vary across members is the only difference from Example \ref{example:geman}.
\end{proof}

\newpage

\section{Proofs for \autoref{sec:discussion}}
\label{app:sec5proofs}
We prove the relation between mutual information and EPKL \citep{schweighofer2023introducing}. \\
\begin{proof}{\bf of Proposition \ref{prop:centroid-decomp}.}
The results follow from the Bregman three-point identity with a vanishing inner product.
For \eqref{eq:mod-decomp}, apply the three-point identity with $\bar q$ as the mid-point,
\begin{equation}
B_\phi(q_\theta,\mathring q)=B_\phi(q_\theta,\bar q)+B_\phi(\bar q,\mathring q)
+\big\langle\nabla\phi(\bar q)-\nabla\phi(\mathring q),\,q_\theta-\bar q\big\rangle.
\end{equation}
Taking $\E_\theta$, the inner product vanishes since $\E_\theta[q_\theta-\bar q]=0$.
For \eqref{eq:rmi-decomp}, use $\mathring q$ as the mid-point,
\begin{equation}    
B_\phi(\bar q,q_\theta)=B_\phi(\bar q,\mathring q)+B_\phi(\mathring q,q_\theta)
+\big\langle\nabla\phi(\mathring q)-\nabla\phi(q_\theta),\,\bar q-\mathring q\big\rangle.
\end{equation}
Taking $\E_\theta$, the inner product vanishes since
$\E_\theta[\nabla\phi(q_\theta)]=\nabla\phi(\mathring q)$.

Substituting the generator $\phi(q)=\sum_y q(y)\ln q(y)$, whose Bregman divergence is the KL divergence, gives \eqref{eq:mod-decomp} and \eqref{eq:rmi-decomp}. 
\end{proof}

\begin{proof}{\bf of \autoref{asymmetryDecompositionOfMutualInformation}.} Multiplying 
the expression by $2$ and rearranging gives
    \begin{align}
        2 \cdot I (\hat Y; \Theta) &=
\E_{\theta,\theta'} K(q_\theta \mid\mid q_{\theta'})
+
\E_{\theta}
\Big[
K(q_\theta \mid\mid \bar q)
-
K(\bar q \mid\mid q_\theta)
\Big]\notag\\
I (\hat Y; \Theta) &=
\E_{\theta,\theta'} K(q_\theta \mid\mid q_{\theta'})
-
\E_{\theta}
K(\bar q \mid\mid q_\theta)\notag
    \end{align}
    Rearranging this final equation yields exactly \citet[Eq. 6]{malinin2021uncertainty}, $RMI = EPKL - MI$, which completes the proof. 
\end{proof}
\begin{proof}{\bf of \autoref{thm:smallDisagreementExpansion}.}
Let $(\mathcal Y,\mathcal A,\lambda)$ be a measure space, and let $q_0$ be a
strictly positive probability density with respect to $\lambda$. For each
$\theta$, suppose
   $$q_\theta(y)=q_{\theta'}(y)\bigl(1+\epsilon h(y)\bigr),
\qquad
\int h(y)q_{\theta'}(y)\,d\nu(y)=0,$$
where the perturbations satisfy
$    \|r_\theta\|_\infty \leq M
    \quad\text{almost surely},
$
for
$|\epsilon|M<1$,
and
\[
    \int q_0(y)r_\theta(y)\,d\lambda(y)=0
    \quad\text{for every }\theta .
\]
Assume also that
    $\E_\theta[r_\theta(y)] = 0$, for $q_0\text{-almost every }y$.
Then,
    $\bar q(y):=\E_\theta[q_\theta(y)] = q_0(y)$.
Let $\theta,\theta'$ be independent draws from the posterior over parameters,
and define
    $\mathrm{EPKL}_\epsilon
    :=
    \E_{\theta,\theta'}
    K(q_\theta\mid\mid q_{\theta'})$
and
    $A_\epsilon
    :=
    \E_\theta
    \left[
        K(q_\theta\mid\mid \bar q)
        -
        K(\bar q\mid\mid q_\theta)
    \right]$.
Then, as $\epsilon\to 0$,
\begin{equation}
\label{eq:epkl_second_order}
    \mathrm{EPKL}_\epsilon
    =
    \epsilon^2
    \E_\theta
    \left[
        \int q_0(y)r_\theta(y)^2\,d\lambda(y)
    \right]
    +
    O(\epsilon^3),
\end{equation}
whereas
\begin{equation}
\label{eq:asymmetry_third_order}
    A_\epsilon
    =
    \frac{\epsilon^3}{6}
    \E_\theta
    \left[
        \int q_0(y)r_\theta(y)^3\,d\lambda(y)
    \right]
    +
    O(\epsilon^4).
\end{equation}
Consequently, if
\[
    V_2
    :=
    \E_\theta
    \left[
        \int q_0(y)r_\theta(y)^2\,d\lambda(y)
    \right]
    >0,
\]
then
\[
    \frac{A_\epsilon}{\mathrm{EPKL}_\epsilon}
    =
    O(\epsilon).
\]
Therefore the asymmetry correction is negligible relative to the expected
pairwise KL in the small-disagreement regime. In particular, since
\[
    I(\hat Y;\Theta)
    =
    \frac12 \mathrm{EPKL}_\epsilon
    +
    \frac12 A_\epsilon,
\]
we have
\begin{equation}
\label{eq:mi_epkl_approx}
    I(\hat Y;\Theta)
    =
    \frac12
    \E_{\theta,\theta'}
    K(q_\theta\mid\mid q_{\theta'})
    +
    O(\epsilon^3).
\end{equation}
Equivalently,
\[
    \E_{\theta,\theta'}
    K(q_\theta\mid\mid q_{\theta'})
    =
    2I(\hat Y;\Theta)
    +
    O(\epsilon^3).
\]

\end{proof}





\end{document}